\documentclass{article}
\usepackage[accepted]{icml2024}

\usepackage{times}
\usepackage[utf8]{inputenc} %
\usepackage[T1]{fontenc}    %
\usepackage{url}            %
\usepackage{booktabs}       %
\usepackage{nicefrac}       %
\usepackage{microtype}      %
\usepackage{color}
\usepackage{graphicx}

\usepackage{enumitem}
\usepackage{amsmath}
\usepackage{amsfonts}
\usepackage{amssymb}
\usepackage{mathtools}

\usepackage{amsthm}

\newcommand{\Figref}[1]{Figure~\ref{#1}}  %
\newcommand{\Secref}[1]{Sec.~\ref{#1}} %
\newcommand{\secref}[1]{Sec.~\ref{#1}} %

\newcommand{\appref}[1]{Appendix~\ref{#1}}

\usepackage{xspace}
\makeatletter
\DeclareRobustCommand\onedot{\futurelet\@let@token\@onedot}
\def\@onedot{\ifx\@let@token.\else.\null\fi\xspace}
\makeatother

\newcommand{\eg}{e.g\onedot}
\newcommand{\ie}{i.e\onedot}

\usepackage{xr-hyper}
\makeatletter
\newcommand*{\addFileDependency}[1]{%
  \typeout{(#1)}
  \@addtofilelist{#1}
  \IfFileExists{#1}{}{\typeout{No file #1.}}
}
\makeatother

\usepackage{xcolor}
\definecolor{ourblue}{rgb}{0.368,0.507,0.71}
\definecolor{ourorange}{rgb}{0.881,0.611,0.142}
\definecolor{ourgreen}{rgb}{0.56,0.692,0.195}
\definecolor{ourred}{rgb}{0.923,0.386,0.209}
\definecolor{ourviolet}{rgb}{0.528,0.471,0.701}
\definecolor{ourbrown}{rgb}{0.772,0.432,0.102}
\definecolor{ourlightblue}{rgb}{0.364,0.619,0.782}
\definecolor{ourdarkgreen}{rgb}{0.572,0.586,0.}

\definecolor{gd_color}{rgb}{0.368,0.507,0.71}
\definecolor{lpgd_color}{rgb}{0.56,0.692,0.195}

\definecolor{ourcyan2}{rgb}{0.125,0.722,0.804}
\definecolor{ourred2}{rgb}{0.863,0.184,0.047}
\definecolor{ouryellow2}{cmyk}{0,0.16,1.0,0.07}
\definecolor{ourviolet2}{cmyk}{0.55,0.56,0,0.47}
\definecolor{ourorange2}{cmyk}{0,0.46,0.89,0.11}

\theoremstyle{plain}
\newtheorem{theorem}{Theorem}[section]
\newtheorem{proposition}[theorem]{Proposition}
\newtheorem{lemma}[theorem]{Lemma}

\theoremstyle{definition}
\newtheorem{example}{Example}

\theoremstyle{remark}

\graphicspath{{graphics/}}

\usepackage{amsmath,amsfonts,bm}

\def\Figref#1{Figure~\ref{#1}}

\def\secref#1{section~\ref{#1}}
\def\Secref#1{Section~\ref{#1}}

\def\eqref#1{(\ref{#1})}
\def\twoeqrefs#1#2{(\ref{#1},\,\ref{#2})}
\def\threeeqrefs#1#2#3{(\ref{#1},\,\ref{#2},\,\ref{#3})}

\def\Algref#1{Algorithm~\ref{#1}}

\def\Lemmaref#1{Lemma~\ref{#1}}
\def\Thmref#1{Theorem~\ref{#1}}

\def\1{\bm{1}}

\DeclareMathAlphabet{\mathsfit}{\encodingdefault}{\sfdefault}{m}{sl}
\SetMathAlphabet{\mathsfit}{bold}{\encodingdefault}{\sfdefault}{bx}{n}

\def\sR{{\mathbb{R}}}

\newcommand{\R}{\mathbb{R}}

\DeclareMathOperator{\Solver}{SolverOracle}

\usepackage{fnpct} %
\usepackage{hyperref}
\usepackage{url}
\hypersetup{
	colorlinks=true,       %
	linkcolor=gd_color,        %
	citecolor=lpgd_color,        %
	filecolor=gd_color,     %
	urlcolor=lpgd_color
}

\usepackage{caption,subcaption}
\usepackage{algorithm}

\usepackage{amssymb}

\newcommand{\lag}{\ensuremath{\mathcal{L}}}

\newcommand{\cone}{\mathcal{K}}
\newcommand{\define}{\ensuremath{\coloneqq}}

\newcommand{\X}{\mathcal{X}}
\newcommand{\Y}{\mathcal{Y}}

\newcommand{\D}{\mathcal{D}}

\newcommand{\dfw}{\Delta}
\newcommand{\dbw}{\mathrm{d}}

\newcommand{\x}{x}

\newcommand{\s}{s}
\newcommand{\y}{y}

\newcommand{\z}{z}

\newcommand{\w}{w}
\renewcommand{\u}{u}
\renewcommand{\v}{v}
\newcommand{\f}{f}
\newcommand{\g}{g}
\newcommand{\h}{h}
\renewcommand{\c}{c}
\renewcommand{\b}{b}

\newcommand{\backbone}{W}

\newcommand{\inv}[1]{\tfrac{1}{#1}}

\newcommand{\Q}{Q}
\renewcommand{\H}{H}
\renewcommand{\L}{L}

\newcommand{\inp}{\mu}

\newcommand\mathmiddlescript[1]{\vcenter{\hbox{$\scriptstyle #1$}}}

\newcommand{\lpgd}{LPGD\xspace}
\newcommand{\lpgda}{LPGD$\mathmiddlescript\tau$\xspace}
\newcommand{\lpgdl}{LPGD$_\tau$\xspace}
\newcommand{\lpgdu}{LPGD$^\tau$\xspace}
\newcommand{\lppm}{LPPM\xspace}
\newcommand{\lppma}{LPPM$\mathmiddlescript\tau$\xspace}
\newcommand{\lppml}{LPPM$_\tau$\xspace}
\newcommand{\lppmu}{LPPM$^\tau$\xspace}

\newcommand{\lmd}{LMD\xspace}

\DeclareMathOperator*{\argmin}{arg\,min}
\DeclareMathOperator*{\argmax}{arg\,max}

\DeclareMathOperator*{\diag}{diag}

\DeclareMathOperator{\prox}{prox}

\DeclareMathOperator{\env}{env}

\icmltitlerunning{LPGD: A General Framework for Backpropagation through Embedded Optimization Layers}

\begin{document}

\twocolumn[
\icmltitle{LPGD: A General Framework for Backpropagation \\through Embedded Optimization Layers}

\begin{icmlauthorlist}
\icmlauthor{Anselm Paulus}{unitue,mpi}
\icmlauthor{Georg Martius}{unitue,mpi}
\icmlauthor{V\'it Musil}{unimas}
\end{icmlauthorlist}

\icmlaffiliation{mpi}{Max Planck Institute for Intelligent Systems, T\"ubingen, Germany}
\icmlaffiliation{unitue}{University of Tübingen, T\"ubingen, Germany}
\icmlaffiliation{unimas}{Masaryk University, Brno, Czech Republic}

\icmlcorrespondingauthor{Anselm Paulus}{apaulus@tue.mpg.de}
\icmlcorrespondingauthor{Georg Martius}{gmartius@uni-tuebingen.de}
\icmlcorrespondingauthor{V\'it Musil}{musil@fi.muni.cz}

\icmlkeywords{Machine Learning, ICML, bilevel optimization, differentiable optimization}

\vskip 0.3in
]

\printAffiliationsAndNotice{} %

\begin{abstract}
    Embedding parameterized optimization problems as layers into machine learning architectures serves as a powerful inductive bias. 
    Training such architectures with stochastic gradient descent requires care, as degenerate derivatives of the embedded optimization problem often render the gradients  uninformative.
    We propose \mbox{\emph{Lagrangian Proximal Gradient Descent} (LPGD)} a flexible framework for training architectures with embedded optimization layers that seamlessly integrates into automatic differentiation libraries.
    \lpgd efficiently computes meaningful replacements of the degenerate optimization layer derivatives by re-running the forward solver oracle on a perturbed input.
    \lpgd captures various previously proposed methods as special cases, while fostering deep links to traditional optimization methods. 
    We theoretically analyze our method and demonstrate on historical and synthetic data that \lpgd converges faster than gradient descent even in a differentiable setup.
    
\end{abstract}

\section{Introduction}\label{sec:introduction}
Optimization at inference is inherent to many prediction tasks, including autonomous driving \citep{paden2016survey}, modeling physical systems \citep{cramer2020lagrangian}, or robotic control \citep{kumar2016optimal}.
Therefore, embedding optimization algorithms as building blocks of machine learning models serves as a powerful inductive bias.
A~recent trend has been to embed parameterized constrained optimization problems that can efficiently be solved to optimality~\citep{amos2017optnet, agrawal2019differentiable, agrawal2019differentiating, VlastelicaEtal2020, sun2022alternating, sahoo2022gradient}.

Training such a \emph{parameterized} optimization model is an instance of bi-level optimization \citep{gould2016differentiating}, which is generally challenging.
Whenever it is possible to propagate gradients through the optimization problem via an informative derivative of the solution mapping, the task is typically approached with standard stochastic gradient descent~(GD) \citep{amos2017optnet, agrawal2019differentiating}.
However, when the optimization problem has discrete solutions, the derivatives are typically degenerate, as small perturbations of the input do not affect the optimal solution. Previous works have proposed several methods to overcome this challenge, ranging from differentiable relaxations \citep{wang2019satnet, Wilder2019:melding, MandiGuns2020:InteriorPointPO, djolonga2017differentiable} and stochastic smoothing \citep{berthet2020learning, dalle2022learning}, over proxy losses \citep{paulus2021:comboptnet}, to finite-difference based techniques \citep{VlastelicaEtal2020}.\looseness=-1

The main contribution of this work is the unification of a variety of previous methods \citep{mcallester2020direct, VlastelicaEtal2020, domke2010implicit, sahoo2022gradient, elmachtoub2022smart, blondel2020learning} into a general framework called \emph{Lagrangian Proximal Gradient Descent}~(\lpgd).
Motivated by traditional proximal optimization techniques \citep{moreau1962fonctions, rockafellar1970convex, nesterov1983method, figueiredo2007gradient, tseng2008accelerated, beck2009fast, combettes2011proximal, bauschke2011convex, nesterov2014introductory, parikh2014proximal}, we derive \lpgd as gradient descent on a smoothed envelope of a loss linearization. This fosters deep links between traditional and contemporary methods.
We provide theoretical insights into the asymptotic behavior of our method, capturing a trade-off between smoothness and tightness of the introduced envelope.

We identify multiple practical use-cases of \lpgd.
On the one hand, when non-degenerate derivatives of the solution mapping exist, they can be computed as the limit of the \lpgd update, providing a fast and simple alternative to previous methods based on differentiating the optimality conditions \citep{amos2017optnet, agrawal2019differentiating, Wilder2019:melding, MandiGuns2020:InteriorPointPO}.
On the other hand, when the derivatives are degenerate and GD fails, \lpgd still allows learning the optimization parameters.
This generalizes \citet{VlastelicaEtal2020} to non-linear objectives, saddle-point problems, and learnable constraint parameters.
Finally, we explore a new experimental direction by demonstrating on synthetic and historical data that \lpgd can result in faster convergence than GD even when non-degenerate derivatives of the solution mapping exist.

\section{Related work}\label{sec:related-work}
Numerous implicit layers have been proposed in recent years, including neural ODEs \citep{chen2018neural, dupont2019augmented} and root-solving-based layers \citep{bai2019deep, bai2020multiscale, gu2020implicit, winston2020monotone, fung2021fixed, elghaoui2021implicit, geng2021training}.
In this work, we focus on optimization-based layers. A lot of research has been done on obtaining the gradient of such a layer, either by using the implicit function theorem to differentiate quadratic programs \citep{amos2017optnet}, conic programs \citep{agrawal2019differentiating}, ADMM \citep{sun2022alternating}, dynamic time warping \citep{xu2023deep}, or by finite-differences \citep{domke2010implicit, mcallester2020direct, song2016training, lorberbom2019direct}.

Another direction of related work has investigated optimization problems with degenerate derivatives of the solution mapping. The techniques developed for training these models range from continuous relaxations of SAT problems \citep{wang2019satnet} and submodular optimization \citep{djolonga2017differentiable}, over regularization of linear programs \citep{amos2019limited, Wilder2019:melding, MandiGuns2020:InteriorPointPO, paulus2020gradient} to stochastic smoothing \citep{berthet2020learning, dalle2022learning}, learnable proxies \citep{wilder2019end} and generalized straight-through-estimators \citep{jang2016categorical, sahoo2022gradient}. Other works have built on geometric proxy losses \citep{paulus2021:comboptnet} and, again, finite-differences \citep{VlastelicaEtal2020, niepert2021implicit, minervini2023adaptive}.

Finally, a special case of an optimization layer is to embed an optimization algorithm as the final component of the prediction pipeline. This encompasses energy-based models \citep{lecun2005energy, blondel2022energy}, structured prediction \citep{mcallester2020direct, blondel2019structured, blondel2020learning}, smart predict-then-optimize \citep{ferber2020mipaal, elmachtoub2022smart} and symbolic methods such as SMT solvers \citep{fredrikson2023learning}.
Additional details of the closest related methods are in~\appref{sec:extended-related-work}.

\section{Problem Setup}\label{sec:problem-setup}
We consider a parameterized embedded constrained optimization problem of the form
\begin{align}\label{eq:optimal-lagrangian}
\begin{split}
	\lag^*(\w)
	\define &\min_{\x\in\X} \max_{\y\in\Y} \lag(\x, \y, \w)
\end{split}
\end{align}
where $\w\in\sR^k$ are the parameters, $\X\subseteq\sR^n$ and $\Y\subseteq\sR^m$ are the primal and dual feasible set, and $\lag\in\mathcal{C}^1$ is a \emph{Lagrangian}. 
The corresponding optimal solution is
\begin{equation}\label{eq:embedded-opt-problem}
	\z^*(\w)
		= 
        (\x^*, \y^*)(\w)
		\define \arg\min_{\x\in\X} \max_{\y\in\Y} \lag(\x, \y, \w).
\end{equation}
We assume strong duality holds for \eqref{eq:optimal-lagrangian}.
For instance, this setup covers conic programs and quadratic programs, see~\appref{sec:implicit-function-theorem} for details.
Note, that the solution of~\eqref{eq:embedded-opt-problem} is in general set-valued. 
We assume that the solution set is non-empty and has a selection $\z^*(\w)$ continuous at $\w$.\!%
\footnote{These assumptions \eg follow from compactness of $\X$ and $\Y$ and the existence of a unique solution, given the continuity of $\lag$.}
Throughout the paper, we assume access to an oracle that efficiently solves~\eqref{eq:embedded-opt-problem} to high accuracy. 
In our experiments, \eqref{eq:embedded-opt-problem} is a conic program that we efficiently solve to high accuracy using the SCS solver~\citep{odonoghue2016conic}.\!%
\footnote{We use CVXPY~\citep{diamond2016cvxpy, agrawal2019differentiable} for automatic reduction of parameterized convex optimization problems to conic programs in a differentiable way.}

Our aim is to embed optimization problem~\eqref{eq:embedded-opt-problem} into a larger prediction pipeline. Given an input $\mu\in\sR^p$ (\eg an image), the parameters of the embedded optimization problem $\w$ are predicted by a parameterized backbone model $\backbone_\theta\colon \sR^p\rightarrow \sR^k$ (\eg a neural network with weights $\theta\in\sR^r$) as $\w=\backbone_\theta(\mu)$. The embedded optimization problem~\eqref{eq:embedded-opt-problem} is then solved on the predicted parameters $\w$ returning the predicted solution $\x^*(\w)$, and its quality is measured by a loss function $\ell\colon \sR^n \rightarrow \sR$. The backbone and the loss function are assumed to be continuously differentiable.

Our goal is to train the prediction pipeline by minimizing the loss on a dataset of inputs $\{\mu_i\}_{i=1}^N$
\begin{align}\label{eq:outer-problem}
    \min_{\theta\in\sR^r} \textstyle\sum_{i=1}^N\ell\bigl(\x^*(\backbone_\theta(\mu_i))\bigr)
\end{align}
using gradient backpropagation as in stochastic gradient descent or variations thereof~\citep{kingma2015adam}.
However, the solution mapping does not need to be differentiable, and even when it is, the derivatives are often degenerate (\eg they can be zero almost everywhere).\!%
\footnote{Our method only assumes continuity of the solution mapping which is weaker than differentiability. Therefore, whenever the true gradients exist, the continuity assumption is also~fulfilled.}
Therefore, we aim to derive informative replacements for the gradient $\nabla_\w\ell(\x^*(\w))$, which can then be further backpropagated to the weights $\theta$ by standard automatic differentiation libraries~\citep{abadi2015tensorflow, bradbury2018jax, paszke2019pytorch}.
Note that the loss could also be composed of further learnable components that might be trained simultaneously.
A list of symbols is provided in~\appref{sec:list-of-symbols}.

\section{Backgound: Proximal Point Method \& Proximal Gradient Descent}\label{proximal-gradient-descent}

The \emph{Moreau envelope}~\citep{moreau1962fonctions} $\env_{\tau\!\f}\colon \sR^n \rightarrow \sR$ of a proper, lower semi-continuous, possibly non-smooth 
function $\f\colon\sR^n\to\sR$ is defined for $\tau>0$~as
\begin{equation}\label{eq:moreau-envelope}
	\env_{\tau\!\f}(\widehat\x)
		\define \inf\nolimits_{\x} \f(\x) + 
	\inv{2} \|\x - \widehat\x\|_2^2.
\end{equation}
The envelope $\env_{\tau\!\f}$ is a smoothed lower bound approximation of $\f$ \citep[Theorem~1.25]{rockafellar1998variational}.
The corresponding \emph{proximal map} $\prox_{\tau\!\f}\colon \sR^n \to \sR^n$ is given~by
\begin{align}\label{eq:proximal-map}
	\begin{split}
    	\prox_{\tau\!\f}(\widehat\x)
    	   &\define \arg\min\nolimits_{\x} \f(\x) + \inv{2\tau} \|\x - \widehat\x\|_2^2
        \\ &\phantom{:}
    		= \widehat\x - \tau\nabla \env_{\tau\!\f}(\widehat\x)
	\end{split}
\end{align}
and can be interpreted as a gradient descent step on the Moreau envelope with step-size $\tau$.
For a more detailed discussion of the connection between proximal map and Moreau envelope see  \eg~\citet{rockafellar1998variational, bauschke2011convex, parikh2014proximal}.

The \emph{proximal point method}~\citep{rockafellar1976monotone, guler1992new, bauschke2011convex, parikh2014proximal}
aims to minimize $\f$ by iteratively updating $\widehat\x\mapsto\prox_{\tau\!\f}(\widehat\x)$.
Now, assume that $\f$ decomposes as $\f = \g + \h$, with $\g$ differentiable and $\h$ potentially non-smooth,
and consider a linearization of $\g$ around $\widehat\x$ given by
\begin{align}\label{eq:linear-loss-approx}
    \widetilde{\g}(\x) \define \g(\widehat\x) + \langle \x-\widehat\x,\nabla \g(\widehat \x)\rangle.
\end{align}
The corresponding proximal map reads as
\begin{align}
	   \prox_{\tau(\widetilde\g+\h)}(\widehat\x) 
   		&= \arg\min\nolimits_{\x} \widetilde\g(\x) + \h(\x) +  \inv{2\tau} \|\x - \widehat\x\|_2^2\nonumber
			\\ &
	   	= \prox_{\tau\h}\bigl(\widehat\x-\tau\nabla\g(\widehat\x)\bigr)\label{eq:proximal-gradient-descent}
\end{align}
and iterating $\widehat\x\mapsto\prox_{\tau\h}\bigl(\widehat\x-\tau\nabla\g(\widehat\x)\bigr)$ is called \emph{proximal gradient descent}~\citep{nesterov1983method, combettes2011proximal, parikh2014proximal}.

\section{Method}\label{sec:method}

Our goal is to translate the idea of proximal methods to parameterized optimization models as in \Secref{sec:problem-setup}, by defining a \emph{Lagrange-Moreau envelope} of the loss $\w\mapsto\ell(\x^*(\w))$ on which we can perform gradient descent.
In analogy to~\eqref{eq:moreau-envelope}, given $\w$ and corresponding optimal solution $\x^*$, the envelope should select an $\x$ in the proximity of $\x^*$ with a lower loss~$\ell$.
The key concept is to replace as a measure of proximity the Euclidean distance with a \emph{Lagrangian divergence} indicating how close $\x$ is to optimality given~$\w$.

\subsection{Lagrangian Divergence}

First we define the \emph{Lagrangian difference} for $\x\in\X$ and $\w\in\sR^k$ as
\begin{align} \label{eq:lagrangian-difference}
    D_\lag(\x, \y | \w) 
        \define  \lag (\x, \y, \w) - \lag^*(\w),
\end{align}
where $\lag^*(\w)$ is the optimal Lagrangian~\eqref{eq:optimal-lagrangian}.
We then define the \emph{Lagrangian divergence}%
\footnote{%
	In some cases, the Lagrangian divergence coincides with the \emph{Bregman divergence}, which generalizes the squared Euclidean distance, opening connections to \emph{Mirror descent}, see~\appref{sec:mirror-descent}.}
$D^*_\lag(\cdot | \w)\colon \X\rightarrow \sR^+$ as
\begin{align}
\begin{split}\label{eq:lagrangian-divergence}
    D^*_\lag(\x | \w) 
    &\define \sup_{\y\in\Y}D_\lag(\x, \y | \w) 
    \\&
    \phantom{:}= \sup_{\y\in\Y}\bigl[\lag (\x, \y, \w) - \lag^*(\w)\bigr]
    \geq 0
\end{split}
\end{align}
for $\w\in\sR^k$,
where the last inequality follows from
\begin{align}
    \sup_{\y\in\Y} \lag(\x, \y, \w) \geq \min_{\widetilde\x\in\X} \max _{\y\in\Y} \lag(\widetilde\x, \y, \w) = \lag^*(\w).
\end{align}
The divergence has the key property
\begin{align}\label{eq:lagrangian-divergence-property}
    D^*_\lag(\x | \w) = 0
		\quad\Leftrightarrow\quad
	\text{$\x\in\X$ minimizes \eqref{eq:embedded-opt-problem}}
\end{align}
which makes it a reasonable measure of optimality of $\x$ given~$\w$, 
for the proof, see~\appref{sec:proofs}.

\subsection{Lagrange-Moreau Envelope}

Given $\tau>0$, we say that $\ell_{\tau}\colon\sR^k\to\sR$ is the \emph{lower Lagrange-Moreau envelope} ($\lag$-envelope) if
\begin{align}\label{eq:lower-lag-moreau-envelope}
	\begin{split}
    	\ell_{\tau}(\w)
        &\define \min_{\x\in\X} \ell(\x) + \inv{\tau} D^*_\lag(\x | \w)
            \\&
        \phantom{:}= \min_{\x\in\X} \max_{\y\in\Y} \ell(\x) + \inv{\tau} D_\lag(\x, \y| \w).
	\end{split}
\end{align}
The corresponding \emph{lower Lagrangian proximal map} $\z_{\tau}\colon \sR^k \to\sR^{n+m}$ is defined as
\begin{align}\label{eq:lower-lag-proximal-map}
	\begin{split}
    	\z_{\tau}(\w)
        &\define \argmin_{\x\in\X} \max_{\y\in\Y} \ell(\x) + \inv{\tau}D_\lag(\x, \y| \w)
            \\&
        \phantom{:}= \arg\min_{\x\in\X} \max_{\y\in\Y} \lag(\x, \y, \w) + \tau\mspace{-1mu}\ell(\x).
	\end{split}
\end{align}
The \textit{upper $\lag$-envelope}
$\ell^{\tau}\colon\sR^k\to\sR$ is defined analogously with maximization instead of minimization as
\begin{align}\label{eq:upper-lag-moreau-envelope}
	\begin{split}
    	\ell^{\tau}(\w)
        &\define \max_{\x\in\X} \ell(\x) - \inv{\tau}D^*_\lag(\x | \w)
            \\&
        \phantom{:}= \max_{\x\in\X} \min_{\y\in\Y} \ell(\x) - \inv{\tau}D_{\lag}(\x, \y | \w),
	\end{split}
\end{align}
and the corresponding \emph{upper $\lag$-proximal map} $\z^{\tau}\colon \sR^k \rightarrow \sR^{n+m}$ ($\lag$-proximal map) as
\begin{align}\label{eq:upper-lag-proximal-map}
	\begin{split}
   	\z^{\tau}(\w)
    	&\define \arg\max_{\x\in\X} \min_{\y\in\Y} \ell(\x) - \inv{\tau} D_{\lag}(\x,\y | \w)
            \\&
    	\phantom{:}= \arg\min_{\x\in\X} \max_{\y\in\Y} \lag(\x, \y, \w) - \tau\mspace{-1mu}\ell(\x).
	\end{split}
\end{align}
The lower and upper $\lag$-envelope are lower and upper bound approximations of the loss~$\w\mapsto\ell(\x^*(\w))$, respectively.
We emphasize that the solutions to~\eqref{eq:lower-lag-moreau-envelope} and~\eqref{eq:upper-lag-moreau-envelope} are in general set-valued and we assume that they are non-empty and admit a single-valued selection that is continuous at $\w$, which we denote by~\eqref{eq:lower-lag-proximal-map} and~\eqref{eq:upper-lag-proximal-map}. We also assume that strong duality holds for the upper and lower $\lag$-envelopes.\!%
\footnote{In general, these assumptions cast strong restrictions on the allowed loss functions. This is one of the reasons why our main focus will be on loss approximations introduced in \secref{sec:lpgd}.}
We will also work with the \emph{average} $\lag$-envelope
\begin{align} \label{eq:eq:average-lag-proximal-map}
	\ell\mathmiddlescript\tau(\w) 
  	&\define \inv{2}\bigl[\ell_{\tau}(\w) + \ell^{\tau}(\w)\bigr].
\end{align}
The different envelopes are closely related to right-, left- and double-sided directional derivatives.

\subsection{Lagrangian Proximal Point Method}

Our goal will be to perform gradient descent on the $\lag$-envelope \twoeqrefs{eq:lower-lag-moreau-envelope}{eq:upper-lag-moreau-envelope}.
The gradients of the $\lag$-envelopes read
\begin{align}
\begin{split}
    	\nabla \ell_{\tau}(\w) 
        &= \,\inv{\tau}\nabla_\w\bigl[\lag(\z_{\tau}, \w) - \lag(\z^*, \w)\bigr],
            \\
    	\nabla \ell^{\tau}(\w) 
        &= \inv{\tau}\nabla_\w\bigl[\lag(\z^*, \w) - \lag(\z^{\tau}, \w)\bigr],
        \label{eq:lppm}
\end{split}
\end{align}
where we abbreviate $\z_{\tau}=\z_{\tau}(\w)$ and $\z^{\tau}=\z^{\tau}(\w)$.
The proof is in~\appref{sec:proofs}.
In analogy to the proximal point method~\eqref{eq:proximal-map} we refer to GD using  \eqref{eq:lppm} as the \emph{Lagrangian Proximal Point Method} (\lppm), or specifically, \lppml, \lppma and \lppmu for GD on $\ell_\tau$, $\ell\mathmiddlescript\tau$ and~$\ell^\tau$, respectively. 
\looseness=-1

\begin{example}[Direct Loss Minimization]
For an input $\inp\in\sR^p$, label $\x_\text{true}\in\X$, loss $\ell\colon \X \times \X \rightarrow \sR$, feature map 
$\Psi\colon \X \times \sR^p \rightarrow \sR^k$
and an optimization problem of the form
\begin{align}
    \x^*(\w, \inp)=\argmin_{\x\in\X} -\langle \w, \Psi(\x, \inp) \rangle
\end{align}
the \lppml update \eqref{eq:lppm} reads
\begin{align}
    \nabla \ell_\tau(\w) = \inv{\tau}\bigl[\Psi(\x^*, \inp) - \Psi(\x_{\tau}, \inp)\bigr],
\end{align}
with $\x^*=\x^*(\w, \inp)$ and	
\begin{align}
    \x_{\tau}
    = \argmin_{\x\in\X} -\langle \w, \Psi(\x, \inp) \rangle + \tau \ell(\x, \x_\text{true}).
\end{align}
This recovers the ``towards-better'' \emph{Direct Loss Minimization} (DLM) update \citep{mcallester2020direct}, while the ``away-from-worse'' update corresponds to the \lppmu update, both of which were proposed in the context of taking the limit $\tau\rightarrow 0$, which aims to compute the true gradients.
\end{example}

\subsection{Lagrangian Proximal Gradient Descent}\label{sec:lpgd}

\lppm requires computing the $\lag$-proximal map~\eqref{eq:lower-lag-proximal-map} or~\eqref{eq:upper-lag-proximal-map}. 
Due to the loss term, efficiently solving the involved optimization problem might not be possible or it requires choosing and implementing a custom optimization algorithm that is potentially much slower than the oracle used to solve the forward problem~\eqref{eq:embedded-opt-problem}.
Instead, we aim to introduce an approximation of the loss that allows solving the $\lag$-proximal map with the forward solver oracle.
We first observe that in many cases the parameterized Lagrangian takes the form
\begin{align}\label{eq:lag-decomposition}
    \lag(\x, \y, \w) = \langle \x, \c \rangle + \Omega(\x, \y, \v),
\end{align}
with $\w=(\c,\v)$, linear parameters~$\c\in\sR^n$, non-linear parameters~$\v\in\sR^{k-n}$ and continuously differentiable~$\Omega$.
Our approximation, inspired by proximal gradient descent~\eqref{eq:proximal-gradient-descent}, is to consider a linearization $\smash{\widetilde\ell}$ of the loss $\ell$ at $\x^*$.\!%
\footnote{Note that other approximations of the loss can also be used depending on the parameterization supported by the forward solver. For example, if quadratic terms are supported we could consider a quadratic loss approximation.}
Importantly, the loss linearization is only applied \emph{after the solver} and does not approximate or linearize the solution mapping.
Abbreviating $\nabla \ell=\nabla \ell(\x^*)$, we get the \lag-proximal~maps\looseness=-1
\begin{align}
     \widetilde\z_{\tau}(\w)
     & \define \arg\min_{\x\in\X} \max_{\y\in\Y} \langle \x, \c \rangle + \Omega(\x, \y, \v) + \tau\langle \x, \nabla\ell\rangle \nonumber
            \\
    & \phantom{:}= \z^*(\c + \tau\nabla\ell, \v),
     \label{eq:lower-lag-proximal-map-linearized}
        \\
	 \widetilde\z^{\mspace{2mu}\tau\mspace{-2mu}}(\w)
     & \define \arg\min_{\x\in\X} \max_{\y\in\Y} \langle \x, \c \rangle + \Omega(\x, \y, \v) - \tau\langle \x, \nabla\ell\rangle \nonumber
        \\
    & \phantom{:}= \z^*(\c - \tau\nabla\ell, \v).
     \label{eq:upper-lag-proximal-map-linearized}
\end{align}
As these are instances of the forward problem~\eqref{eq:embedded-opt-problem} on different parameters, they can be computed with the same solver oracle.
The assumed strong duality and continuity of the solution of~\eqref{eq:embedded-opt-problem} also typically implies the same properties for the perturbed problems \twoeqrefs{eq:lower-lag-proximal-map-linearized}{eq:upper-lag-proximal-map-linearized}.
Note that warm-starting the solver with $\z^*$ can strongly accelerate the computation, often making the evaluation of the $\lag$-proximal map much faster than the forward problem.
This enables efficient computation of the $\lag$-envelope gradient~\eqref{eq:lppm} as
\begin{align}
\begin{split}
	\nabla \widetilde\ell_{\tau}(\w) 
    &= \,\inv{\tau}
					\nabla_\w\bigl[
							\lag\bigl(\w, \widetilde\z_{\tau}\bigr)
						- \lag\bigl(\w, \z^*\bigr)
					\bigr],
        \\
	\nabla \widetilde\ell^{\tau}(\w) 
    &= \inv{\tau}
					\nabla_\w\bigl[
							\lag\bigl(\w, \z^*\bigr)
						- \lag\bigl(\w, \widetilde\z^{\mspace{2mu}\tau\mspace{-2mu}}\bigr)
					\bigr].
     \label{eq:lpgd}
\end{split}
\end{align}
In analogy to proximal gradient descent~\eqref{eq:proximal-gradient-descent} we refer to gradient descent using \eqref{eq:lpgd} as \emph{Lagrangian Proximal Gradient Descent} (\lpgd), or more specifically, to \lpgdl, \lpgda and~\lpgdu for GD on $\widetilde\ell_\tau$, $\widetilde\ell\mathmiddlescript\tau$ and~$\widetilde\ell^\tau$, respectively.  

\lpgd can also be viewed as computing gradients of the loss $\nabla_\w\ell(\x^*(\w))$ standardly by rolling out the chain rule, but replacing every appearance of the optimization layer co-derivative with a certain finite-difference.\!%
\footnote{Note that this only requires a single additional solver evaluation, in contrast to traditional finite-difference gradient estimation requiring $k$ solver evaluations, which is often intractable.}
This shows that \lpgd only affects the optimization layer, providing a derivative replacement that carries higher-order information than linear sensitivities.
Further, this viewpoint directly implies that \lpgd smoothly integrates into existing automatic differentiation frameworks~\citep{abadi2015tensorflow, bradbury2018jax, paszke2019pytorch}, by simply replacing the backward pass operation of the optimization layer with the finite-difference, as summarized for \lpgdl in \Algref{alg:lpgd-lower}.
\begin{algorithm}[t]
\begin{algorithmic}
       \FUNCTION{ForwardPass($\w$)}
       \STATE $\displaystyle\z^* \gets \Solver(\w)$ \hfill\COMMENT{Solve optimization~\eqref{eq:embedded-opt-problem}}
       \STATE\textbf{save} $\w$, $\z^*$ for backward pass
       \STATE\textbf{return} $\z^*$
       \ENDFUNCTION
       \medskip
       \FUNCTION{BackwardPass($\nabla\ell=\nabla_\z\ell(\z^*)$, $\tau$)}
       \STATE\textbf{load} $(\c,\v)=\w$ and $\z^*$ from forward pass
       \STATE $\w_\tau \gets \c+\tau\nabla\ell, \v$ \hfill\COMMENT{Perturb parameters}
       \STATE $\displaystyle\widetilde\z_{\tau} \gets \Solver(\w_\tau)$\hfill\COMMENT{\eqref{eq:lower-lag-proximal-map-linearized}, warmstart~with~$\z^*$}
       \STATE $\nabla_\w \widetilde\ell_{\tau}(\w) = \inv{\tau}\nabla_\w\bigl[\lag\bigl(\widetilde\z_{\tau}, \w\bigr) - \lag\bigl(\z^*, \w\bigr)\bigr]$%
       \hfill\COMMENT{\eqref{eq:lpgd}}
       \STATE\textbf{return} $\nabla_\w\widetilde\ell_{\tau}(\w)$\hfill\COMMENT{Gradient of $\lag$-envelope}
       \ENDFUNCTION
\end{algorithmic}
\caption{Forward and Backward Pass of \lpgdl}
\label{alg:lpgd-lower}
\end{algorithm}

\begin{example}[Blackbox Backpropagation]
For a linear program (LP)%
\footnote{For an LP over a polytope $\X$ the space of possible solutions is discrete. Whenever the solution is unique, which is true for almost every $\w$, the solution mapping is locally constant (and hence continuous) around $\w$. Therefore our continuity assumptions hold for almost all $\w$.}
\begin{align}\label{eq:linear-program}
    \x^*(\c)=\argmin_{\x\in\X} \langle \x, \c \rangle,
\end{align}
the \lpgdl update \eqref{eq:lpgd} reads
\begin{align}
    \nabla \widetilde\ell_\tau(\c) 
    = \inv{\tau}\bigl[\widetilde\x_\tau(\c) - \x^*(\c)\bigr]
    = \inv{\tau}\bigl[\x^*(\c+\tau\nabla\ell) - \x^*(\c)\bigr],\nonumber
\end{align}
which recovers the update rule in \emph{Blackbox Backpropagation} (BB) \citep{VlastelicaEtal2020}.
The piecewise affine interpolation of the loss $\c \mapsto \smash{\widetilde\ell}(\x^*(\c))$ derived in BB agrees with the lower $\lag$-envelope~$\widetilde\ell_\tau$.
\looseness=-1
\end{example}

\begin{example}[Implicit Differentiation by Perturbation]
For a regularized linear program
\begin{align}\label{eq:regularized-lp}
    \x^*(\c)=\argmin_{\x\in\X} \langle \x, \c \rangle + \Omega(\x)
\end{align}
with a strongly convex regularizer $\Omega\colon \X\rightarrow \R$, the \lpgda update \eqref{eq:lpgd} reads
\begin{align}
\begin{split}
    \nabla \widetilde\ell\mathmiddlescript\tau(\c) 
    &= \inv{2\tau}\bigl[\widetilde\x_\tau(\c) - \widetilde\x^{\mspace{2mu}\tau\mspace{-2mu}}(\c)\bigr]
        \\&
    = \inv{2\tau}\bigl[\x^*(\c+\tau\nabla\ell) - \x^*(\c-\tau\nabla\ell)\big],
\end{split}
\end{align}
recovering the update in~\citet{domke2010implicit}, where only the limit case $\tau\rightarrow 0$ is considered. 
\end{example}

\subsection{Regularization \& Augmented Lagrangian}\label{sec:augmented-lagrangian}
To increase the smoothness of the $\lag$-envelope, we augment the Lagrangian with a strongly convex regularizer
\begin{align}\label{eq:augmentation}
    \lag_\rho(\x, \y, \w) &\define \lag(\x, \y, \w) + \inv{2\rho} \|\x - \x^*\|_2^2
\end{align}
with $\rho>0$.
Equivalently, we may re-introduce the quadratic regularizer from the Moreau envelope~\eqref{eq:moreau-envelope} into the \lag-envelope~\eqref{eq:lower-lag-moreau-envelope} and \lag-proximal map~\eqref{eq:lower-lag-proximal-map}%
\begin{align}
\begin{split}
	\ell_{\tau\rho}(\w)
    & \define \min_{\x\in\X} \max_{\y\in\Y} \ell(\x)
        + \inv{\tau}D_\lag(\x, \y | \w)
        \\
    & \phantom{\define \min_{\x\in\X} \max_{\y\in\Y} \ell(\x)}\quad
        + \inv{2\rho} \|\x - \x^*\|_2^2,
    \label{eq:augmented-lower-lagrange-moreau-envelope}
\end{split}
    \\
\begin{split}
	\z_{\tau\rho}(\w)
    & \define \arg\min_{\x\in\X} \max_{\y\in\Y} \ell(\x) 
        + \inv{\tau}D_\lag(\x, \y | \w)
        \\
    & \phantom{\define \arg\min_{\x\in\X} \max_{\y\in\Y} \ell(\x)}\quad
        + \inv{2\rho} \|\x - \x^*\|_2^2.
    \label{eq:augmented-lower-lagrangian-proximal-map}
\end{split}
\end{align}
These definitions have an analogy for the upper envelope and for a linearized loss, which we omit for brevity.
The \lppml and \lpgdl updates then take the form
\begin{align}
\begin{split}
    \nabla \ell_{\tau\rho}(\w)
    &= \inv\tau\nabla_\w\bigl[
            \lag\bigl(\w, \z_{\tau\rho}\bigr)
            - \lag\bigl(\w, \z^*\bigr)
        \bigr],
        \\
    \nabla \widetilde\ell_{\tau\rho}(\w)
    &= \inv\tau\nabla_\w\bigl[
            \lag\bigl(\w, \widetilde\z_{\tau\rho}\bigr)
            - \lag\bigl(\w, \z^*\bigr)
        \bigr].
    \label{eq:augmented-lower-lppm-lpgd}
\end{split}
\end{align}
The augmentation does not alter the current optimal solution, but smoothens the Lagrange-Moreau envelope.
This also has connections to Jacobian-regularization in the implicit function theorem, which we discuss in~\appref{sec:implicit-function-theorem}.
Note that using this quadratic regularization with \lpgd requires the solver oracle to support quadratic objectives, which is the case for the conic program solver used in our experiments.
We visualize the smoothing of the different $\lag$-envelopes of the loss $\c\mapsto\ell(\x^*(\c))$ in \Figref{fig:envelope-visualization-rho}, for a quadratic loss on the solution to the linear program \eqref{eq:linear-program} with $\X=[0,1]^n$ and a one-dimensional random cut through the cost space.\!%
\footnote{Here, the envelopes appear non-continuous due to the loss linearization.
Namely, the point $\x^*(\c)$ at which we take the loss linearization varies with $\c$, and therefore the function of which we take the envelope varies as well.}
\begin{figure}[tb]
    \centering
    \includegraphics[width=\linewidth]{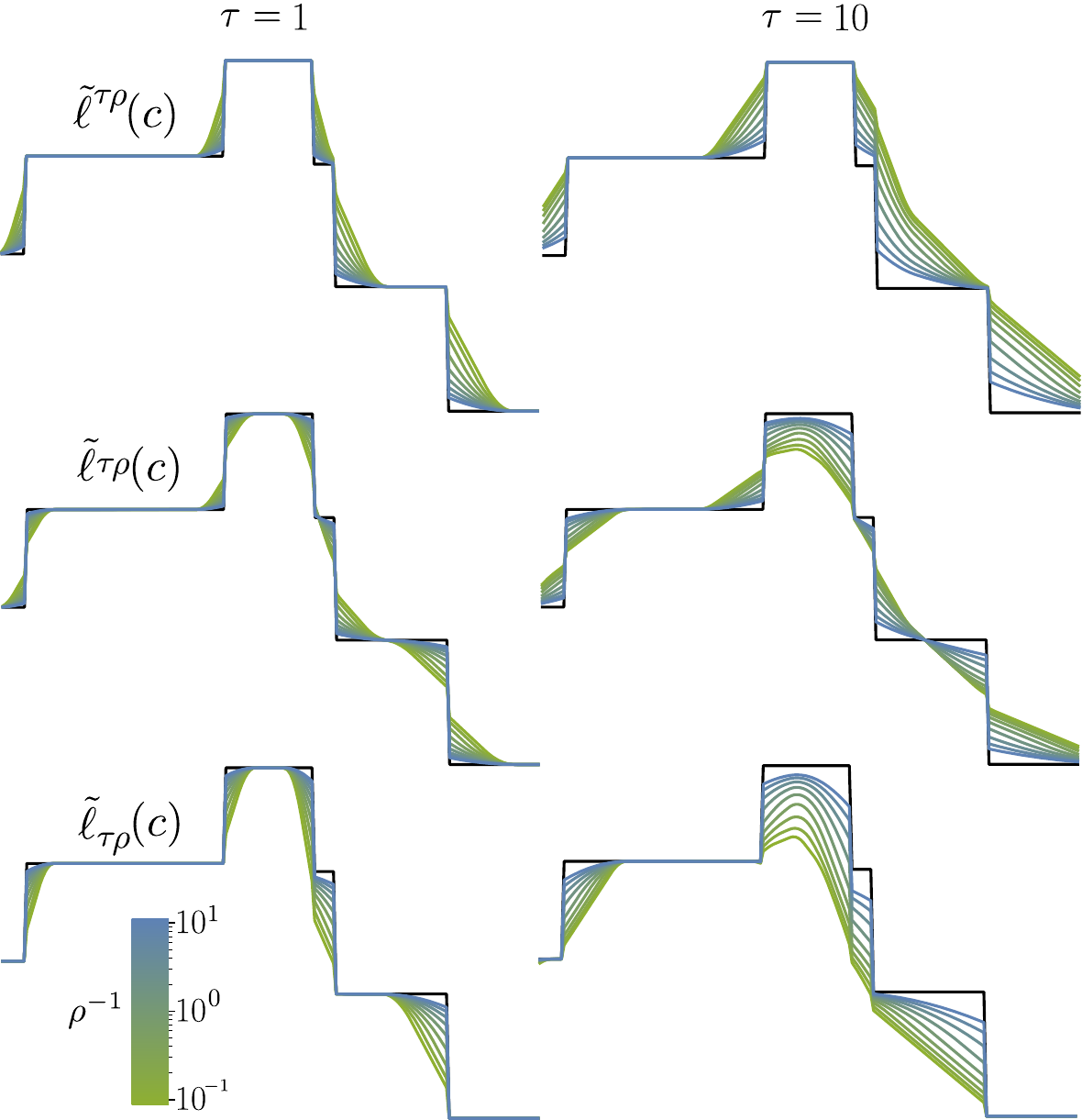}
    \caption{Visualization of the upper $\widetilde\ell^{\tau\rho}$, average $\widetilde\ell\mathmiddlescript{\tau\rho}$, and lower $\widetilde\ell_{\tau\rho}$ Lagrange-Moreau envelope for different temperatures $\tau$ and augmentation strengths $\rho$.
    The envelopes are smoothed approximations of the loss $\c\mapsto \ell(\x^*(\c))$ (black).
    \looseness=-1
    }
    \label{fig:envelope-visualization-rho}
\end{figure}

\section{Method Analysis}\label{sec:asymptotic-behavior}
We now theoretically analyze our method. Proofs of the statements are given in \appref{sec:proofs}.
The following proposition characterizes the continuity properties that the $\lag$-envelope inherits from the Lagrangian.
\begin{proposition}\label{prop:lipschitz}
Assume that $\lag$ is $L$-Lipschitz continuous in $\w$. Then $\ell_\tau,\ell\mathmiddlescript\tau,\ell^\tau$ are $\tfrac{2L}{\tau}$-Lipschitz continuous in $\w$.
\end{proposition}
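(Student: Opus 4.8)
The plan is to exploit the additive structure of each envelope so that the whole statement reduces to the fact that nested $\min$/$\max$ operations preserve Lipschitz moduli. Using the definition of the Lagrangian difference~\eqref{eq:lagrangian-difference}, the lower $\lag$-envelope splits as
\begin{align*}
    \ell_{\tau}(\w) = \Bigl[\min_{\x\in\X}\max_{\y\in\Y} \ell(\x) + \inv{\tau}\lag(\x,\y,\w)\Bigr] - \inv{\tau}\lag^*(\w),
\end{align*}
since $\lag^*(\w)$ does not depend on $\x,\y$ and can be pulled out of the inner $\min$-$\max$. The $\w$-dependence thus enters only through the bracketed $\min$-$\max$ term and through $\lag^*(\w)$, both built from $\lag$. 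The upper envelope~\eqref{eq:upper-lag-moreau-envelope} decomposes identically, with a $\max$-$\min$ term and a $+\inv{\tau}\lag^*(\w)$ correction.

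The key lemma I would establish first is: if a family $\{f_\alpha\}_\alpha$ is uniformly $\kappa$-Lipschitz in $\w$, then both $\w\mapsto\sup_\alpha f_\alpha(\w)$ and $\w\mapsto\inf_\alpha f_\alpha(\w)$ are $\kappa$-Lipschitz. This rests on the elementary inequality $|\sup_\alpha f_\alpha(\w_1) - \sup_\alpha f_\alpha(\w_2)| \le \sup_\alpha|f_\alpha(\w_1) - f_\alpha(\w_2)|$ together with its infimum analogue, and it propagates through nested $\min$ and $\max$ without enlarging the constant.

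Applying the lemma: for each fixed $(\x,\y)$ the map $\w\mapsto\lag(\x,\y,\w)$ is $L$-Lipschitz by assumption, so $\lag^*(\w)=\min_{\x}\max_{\y}\lag(\x,\y,\w)$ is $L$-Lipschitz and $\inv{\tau}\lag^*$ is $\tfrac{L}{\tau}$-Lipschitz. The bracketed term only adds the $\w$-independent summand $\ell(\x)$, which does not affect the Lipschitz modulus in $\w$, so after the $\inv{\tau}$ scaling it is likewise $\tfrac{L}{\tau}$-Lipschitz. The triangle inequality for Lipschitz constants then shows that $\ell_{\tau}$, a difference of two $\tfrac{L}{\tau}$-Lipschitz functions, is $\tfrac{2L}{\tau}$-Lipschitz; $\ell^{\tau}$ is handled the same way. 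Finally the average $\ell\mathmiddlescript{\tau}=\inv{2}(\ell_{\tau}+\ell^{\tau})$ is $\tfrac{2L}{\tau}$-Lipschitz as a convex combination of two $\tfrac{2L}{\tau}$-Lipschitz functions.

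The main obstacle is not the algebra but the careful justification of the $\min$-$\max$ Lipschitz lemma under the paper's standing assumptions: one must ensure the relevant values are finite (guaranteed here by strong duality and the assumed continuous selections of $\z^*$, $\z_\tau$, $\z^\tau$) so that the sup-difference inequality is meaningful. I would also remark that the factor $2$ is an artifact of bounding the two pieces separately — the $\lag^*(\w)$ correction and the perturbed $\min$-$\max$ can partially cancel — so the stated modulus need not be tight.
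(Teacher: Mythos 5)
Your proof is correct and takes essentially the same route as the paper: both decompose $\ell_\tau$ (and $\ell^\tau$) into $\tfrac{1}{\tau}$ times a difference of two optimal-value functions, $\lag^*(\w)$ and the $\tau\ell$-perturbed min--max value, and both rest on the lemma that nested $\min$/$\max$ over a family uniformly $L$-Lipschitz in $\w$ preserves the constant $L$ (the paper cites Proposition~1.32 of Weaver's \emph{Lipschitz Algebras} where you sketch the elementary sup-difference argument directly). Your explicit treatment of $\ell\mathmiddlescript\tau$ as a convex combination, and your closing remark that the constant $\tfrac{2L}{\tau}$ need not be tight, are fine additions to what the paper dismisses as ``analogous.''
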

This shows that by increasing $\tau$ we get smaller Lipschitz bounds of the $\lag$-envelope, giving a form of ``smoothening''.

Next, we characterize the asymptotic behavior of the $\lag$-envelope, $\lag$-proximal map, \lppm and \lpgd updates.
First, we consider the limit as $\tau\rightarrow 0$.
Let $X^*(\w)$ and $\widehat\X(\w)$ denote the \emph{optimal solution set} and the \emph{effective feasible set}, respectively, defined as
\begin{align}
    X^*(\w)
        &\define \{\x\in\X \mid \D^*_\lag(\x |\w) = 0\},\label{eq:effective-feasible-set}\\
    \widehat\X(\w)
        &\define \{\x\in\X \mid \D^*_\lag(\x |\w) < \infty\}.\label{eq:optimal-solution-set}
\end{align}
\begin{proposition}\label{prop:tau-to-zero}
    Let $w$ be such that $X^*(w)\ne\emptyset$.
    Assume $\lag,\ell$ are lower semi-continuous and $\ell$ is finite-valued on $\X$.
    Then it holds that
    \begin{align}\label{eq:tau-to-zero-envelope}
        \lim_{\tau\rightarrow 0}\ell_\tau(\w)
        \define \min_{\x^*\in{X^*(\w)}} \ell(\x^*)
    \end{align}
    and also
    \begin{align}
        \lim_{\tau\rightarrow 0}\x^*_\tau(\w)
        \in \argmin_{\x^*\in{X^*(\w)}} \ell(\x^*).
    \end{align}
    whenever the limit exists.
\end{proposition}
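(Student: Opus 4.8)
The plan is to read the limit off a standard penalty/epi-convergence argument: as $\tau\to 0$ the weight $\inv\tau$ on the divergence $D^*_\lag(\cdot | \w)$ diverges, forcing the minimization in~\eqref{eq:lower-lag-moreau-envelope} onto the zero set of the divergence, which by~\eqref{eq:lagrangian-divergence-property} is exactly $X^*(\w)$; on that set the penalty vanishes so only $\ell$ survives. I would write $g_\tau(\x)\define\ell(\x)+\inv\tau D^*_\lag(\x | \w)$ and $m\define\min_{\x^*\in X^*(\w)}\ell(\x^*)$. Here $m$ is attained and finite because $X^*(\w)$ is the zero sublevel set of the divergence $D^*_\lag(\x | \w)=\sup_{\y\in\Y}[\lag(\x,\y,\w)-\lag^*(\w)]$, a supremum of functions lsc in $\x$ and hence lsc, so $X^*(\w)$ is a closed subset of the (compact) $\X$ on which the lsc, finite-valued $\ell$ attains its minimum.

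First I would prove the upper bound, which is immediate: for every $\x^*\in X^*(\w)$ the divergence vanishes, so $g_\tau(\x^*)=\ell(\x^*)$ and therefore $\ell_\tau(\w)=\min_{\x\in\X} g_\tau(\x)\le m$ uniformly in $\tau>0$, giving $\limsup_{\tau\to 0}\ell_\tau(\w)\le m$.

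The crux is the matching lower bound. Let $\x_\tau\define\x^*_\tau(\w)$ minimize $g_\tau$ (existence follows from lower semi-continuity of $g_\tau$ and compactness of $\X$). Since $\ell$ is lsc and finite-valued on the compact set $\X$, it is bounded below there by some $\ell_{\min}$, and combining this with the uniform bound $g_\tau(\x_\tau)=\ell_\tau(\w)\le m$ yields $\inv\tau D^*_\lag(\x_\tau | \w)\le m-\ell(\x_\tau)\le m-\ell_{\min}$, so that $D^*_\lag(\x_\tau | \w)\le\tau\,(m-\ell_{\min})\to 0$. Taking a sequence $\tau_k\to 0$ realizing $\liminf_{\tau\to 0}\ell_\tau(\w)$ and, by compactness of $\X$, a further subsequence along which $\x_{\tau_k}\to\bar\x$, lower semi-continuity of the divergence together with $D^*_\lag\ge 0$ forces $D^*_\lag(\bar\x | \w)=0$, i.e.\ $\bar\x\in X^*(\w)$. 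Using $\ell_\tau(\w)\ge\ell(\x_\tau)$ and lower semi-continuity of $\ell$ then gives $\liminf_{\tau\to 0}\ell_\tau(\w)\ge\ell(\bar\x)\ge m$, which with the upper bound establishes $\lim_{\tau\to 0}\ell_\tau(\w)=m$.

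For the proximal-map claim the same accumulation-point analysis applies: any accumulation point $\bar\x$ of $\{\x_\tau\}$ lies in $X^*(\w)$, and since $\ell(\x_\tau)\le\ell_\tau(\w)\to m$ while $\ell$ is lsc, we get $\ell(\bar\x)\le m$; combined with $\bar\x\in X^*(\w)$ this forces $\ell(\bar\x)=m$, i.e.\ $\bar\x\in\argmin_{\x^*\in X^*(\w)}\ell(\x^*)$. Hence whenever $\lim_{\tau\to 0}\x^*_\tau(\w)$ exists it coincides with this (unique) accumulation point and lands in $\argmin_{X^*(\w)}\ell$. I expect the main obstacle to be exactly the control of the minimizers $\x_\tau$: the lower bound does not follow from pointwise convergence of $g_\tau$ alone but needs equi-coercivity to guarantee accumulation points inside $\X$ and to pass $D^*_\lag(\x_\tau | \w)\to 0$ through lower semi-continuity into membership in $X^*(\w)$. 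This is precisely where compactness of $\X$ (the standing assumption under which the envelopes are well-defined) and the finite-valuedness of $\ell$ enter essentially; relaxing either would require an explicit coercivity hypothesis on $\ell+\inv\tau D^*_\lag$.
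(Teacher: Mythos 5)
Your proof is correct and runs on the same engine as the paper's: bound the envelope above by evaluating at any $\x^*\in X^*(\w)$, then force the penalty term of the minimizers to zero and pass to the limit using lower semi-continuity of $\ell$ and of $D^*_\lag(\cdot\,|\,\w)$ (a supremum of lsc functions, exactly as in the paper). The one real divergence is how the two arguments secure the limits. You lean on compactness of $\X$ --- to obtain minimizers $\x_\tau$, a lower bound $\ell_{\min}$, and convergent subsequences --- but compactness is \emph{not} a standing assumption of the paper: its footnote lists compactness of $\X$ and $\Y$ only as one sufficient condition for the setup assumptions, and the hedge ``whenever the limit exists'' in the statement is precisely how the paper's proof avoids it. There, existence of $\lim_{\tau\rightarrow 0^+}\ell_\tau(\w)$ comes for free from monotonicity ($f_\tau\le f_{\tau'}$ whenever $\tau'\le\tau$, so $\ell_\tau$ increases as $\tau\downarrow 0$ and is bounded above by $m$), and the identification with $\min_{\x^*\in X^*(\w)}\ell(\x^*)$ is then carried out under the assumed convergence $\x_\tau\rightarrow\x_0$, replacing your quantitative bound $D^*_\lag(\x_\tau|\w)\le\tau(m-\ell_{\min})$ by the splitting $\ell_0\ge\ell(\x_0)+\liminf_{\tau\rightarrow 0^+}\inv\tau D^*_\lag(\x_\tau|\w)$, whose finiteness (no lower bound on $\ell$ needed) together with lsc forces $D^*_\lag(\x_0|\w)=0$; attainment of the minimum over $X^*(\w)$ then falls out as a byproduct rather than being argued upfront via closedness and compactness. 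So each version buys something: yours yields an unconditional limit for the envelope and handles mere accumulation points, but only on compact $\X$; the paper's works for arbitrary $\X\subseteq\sR^n$ with possibly unbounded-below $\ell$, at the price of conditioning the conclusions on convergence of the proximal selection. If you swap your subsequence extraction for the statement's assumed convergence and replace the $\ell_{\min}$ bound by the liminf splitting, your argument reproduces the paper's proof essentially verbatim.
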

Propositions~\ref{prop:lipschitz} and \ref{prop:tau-to-zero} highlight that choosing $\tau$ determines a trade-off between smoothness and tightness of the approximation.
Next, we show that the \lpgd update converges to the true gradient.
\looseness=-1

\begin{theorem}\label{thm:limit-gradient}
Let $\lag\in\mathcal{C}^2$ and assume the optimizer of~\eqref{eq:embedded-opt-problem} admits a differentiable selection $\x^*(\w)$ at~$\w$. Then
\begin{align}
    \lim_{\tau\rightarrow 0} \nabla  \widetilde\ell_{\tau}(\w) 
    = \nabla_{\w} \ell(\x^*(\w))
    = \lim_{\tau\rightarrow 0} \nabla  \widetilde\ell^{\tau}(\w).
\end{align}
\end{theorem}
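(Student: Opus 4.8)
The plan is to recognize the \lpgd update as a finite difference of a smooth vector field and to identify its $\tau\to0$ limit with a directional derivative, and then to show that this directional derivative equals the true loss gradient by combining the implicit function theorem with the symmetry of the Lagrangian Hessian. Throughout I fix $\w=(\c,\v)$, abbreviate $\nabla\ell=\nabla\ell(\x^*)$, and introduce the field $F(\z):=\partial_\w\lag(\z,\w)\in\sR^k$, i.e.\ the explicit partial derivative of $\lag$ in its parameter argument with the solution argument frozen (this is exactly the sense of $\nabla_\w$ in \eqref{eq:lpgd}). Using the linearized proximal map $\widetilde\z_\tau(\w)=\z^*(\c+\tau\nabla\ell,\v)$ from \eqref{eq:lower-lag-proximal-map-linearized}, the \lpgdl update reads as the difference quotient
\[
\nabla\widetilde\ell_\tau(\w)=\tfrac1\tau\bigl[F(\widetilde\z_\tau)-F(\z^*)\bigr].
\]
Because $\lag\in\mathcal{C}^2$ the field $F$ is $\mathcal{C}^1$, and because $\x^*$ (hence $\z^*$) admits a differentiable selection the curve $\tau\mapsto\widetilde\z_\tau$ is differentiable at $\tau=0$ with $\widetilde\z_0=\z^*$. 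Expanding $F$ to first order along this curve gives
\[
\lim_{\tau\to0}\nabla\widetilde\ell_\tau(\w)=\nabla^2_{\w\z}\lag(\z^*,\w)\,\dot\z,\qquad \dot\z:=\tfrac{d}{d\tau}\big|_{0}\widetilde\z_\tau=\partial_\c\z^*\,\nabla\ell,
\]
where $\partial_\c\z^*$ is the Jacobian of the solution map with respect to the linear cost block.

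Next I would express both $\dot\z$ and the true gradient $\nabla_\w\ell(\x^*(\w))=(\partial_\w\x^*)^\top\nabla\ell$ through the implicit function theorem applied to the stationarity condition $\nabla_\z\lag(\z^*(\w),\w)=0$. Differentiating in $\w$ yields $\partial_\w\z^*=-(\nabla^2_{\z\z}\lag)^{-1}\nabla^2_{\z\w}\lag$. The decomposition $\lag(\x,\y,\w)=\langle\x,\c\rangle+\Omega(\x,\y,\v)$ makes the $\c$-block of $\nabla^2_{\z\w}\lag$ equal to the injection $P=[\,\mathrm{I}_n;0\,]$ of the primal coordinates, so that $\partial_\c\z^*=-(\nabla^2_{\z\z}\lag)^{-1}P$ and $\partial_\w\x^*=P^\top\partial_\w\z^*$.

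The crux is the final matching step. Substituting, the \lpgd limit equals $-\nabla^2_{\w\z}\lag\,(\nabla^2_{\z\z}\lag)^{-1}P\,\nabla\ell$, whereas the true gradient equals $-\nabla^2_{\w\z}\lag\,(\nabla^2_{\z\z}\lag)^{-\top}P\,\nabla\ell$ after using $(\nabla^2_{\z\w}\lag)^\top=\nabla^2_{\w\z}\lag$. These two coincide precisely because $\nabla^2_{\z\z}\lag$ is symmetric for $\lag\in\mathcal{C}^2$, so its inverse equals its inverse transpose. The upper-envelope statement follows identically: with $\widetilde\z^\tau=\z^*(\c-\tau\nabla\ell,\v)$ from \eqref{eq:upper-lag-proximal-map-linearized} the update becomes $\tfrac1\tau[F(\z^*)-F(\widetilde\z^\tau)]$, whose limit is the same directional derivative after the reparameterization $s=-\tau$.

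The main obstacle I anticipate is the genuinely constrained case, where $\X\subsetneq\sR^n$ or $\Y\subsetneq\sR^m$ so that optimality is governed by KKT conditions with active inequalities rather than plain interior stationarity. There one must argue that the differentiable-selection hypothesis (together with the implicit constraint qualification and strict complementarity) keeps the active set locally constant, so that for small $\tau$ the perturbed solutions $\widetilde\z_\tau$ remain on the same active face and the implicit function theorem applies to the reduced equality system; the symmetry argument must then be carried out for the symmetric reduced KKT operator in place of $\nabla^2_{\z\z}\lag$. A secondary, more routine point is to justify carefully the interchange of the limit $\tau\to0$ with differentiation, i.e.\ that the difference quotient of $F$ along the differentiable curve $\widetilde\z_\tau$ converges to the one-sided directional derivative; this follows from the $\mathcal{C}^1$-regularity of $F$ and differentiability of the selection but deserves an explicit statement.
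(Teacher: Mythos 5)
Your proposal is correct in substance and takes a genuinely different route from the paper's proof. Both arguments share the same skeleton---read the \lpgd update as a finite difference, pass to a directional derivative as $\tau\to 0$, and convert forward-mode into backward-mode via a Hessian symmetry---but the key lemma differs. The paper never touches the stationarity system: it invokes the envelope theorem of \citet{oyama2018on} to write $\nabla_\u\lag^*(\u,\v)=\z^*(\u,\v)$, observes that your field $F(\z)=\partial_\w\lag(\z,\w)$ evaluated along the solution curve is precisely $\nabla_\w\lag^*$ at the perturbed parameters (since $\partial_\w\lag$ does not depend on the linear block $\u$), and then exploits symmetry of the Hessian of the \emph{value function}, $\partial^2\lag^*/\partial\w^2$, to identify the limit with $(\partial_\w\z^*)^\top\nabla\ell$; in fact it does so for general losses $\L(\x,\y)$ on primal and dual variables. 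You instead apply the implicit function theorem to $\nabla_\z\lag(\z^*(\w),\w)=0$ and use symmetry of $\nabla^2_{\z\z}\lag$ itself. This has two consequences worth noting. First, as written you assume invertibility of $\nabla^2_{\z\z}\lag$, an IFT regularity condition strictly stronger than the theorem's hypothesis that a differentiable selection exists; this is removable within your framework: since the selection is assumed differentiable, differentiate the stationarity identity to get $\nabla^2_{\z\z}\lag\,\partial_\w\z^*=-\nabla^2_{\z\w}\lag$, whose $\c$-block gives $\nabla^2_{\z\z}\lag\,\partial_\c\z^*=-P$, and then compute without any inverse: $\nabla^2_{\w\z}\lag\,\partial_\c\z^*\,\nabla\ell=-(\partial_\w\z^*)^\top\nabla^2_{\z\z}\lag\,\partial_\c\z^*\,\nabla\ell=(\partial_\w\z^*)^\top P\,\nabla\ell=(\partial_\w\x^*)^\top\nabla\ell$. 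Second, and more substantively, your stationarity equation only holds at saddle points interior to $\X\times\Y$, which is exactly why you must resort to active-set and strict-complementarity surgery in the constrained case; the paper's value-function route absorbs the constraints into $\lag^*$, so the envelope theorem and the Hessian-symmetry step apply verbatim with no active-set analysis---this is the main thing its approach buys. What your approach buys in return is an explicit mechanism for the cancellation (the symmetric KKT-type operator), closer in spirit to the implicit-differentiation baselines discussed in \appref{sec:implicit-function-theorem}, whereas the paper's symmetry claim for $\partial^2\lag^*/\partial\w^2$ rests on Schwarz-type conditions for the value function that it treats somewhat informally. A small shared mismatch with the statement: both proofs in fact differentiate the full primal--dual selection $\z^*$, not only $\x^*$, and both use that the stationarity (resp.\ envelope) identity holds in a neighborhood of $\w$, not merely at $\w$; your closing remark about justifying the chain-rule interchange is easily discharged by the $\mathcal{C}^1$-regularity of $F$.
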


The proof, also highlighting the connections between \lpgd updates and finite-difference approximations of forward-mode derivatives, is given in \appref{sec:proofs}.  
Theorem \ref{thm:limit-gradient} asserts that \lpgd computes the same gradients in the limit as methods based on the implicit function theorem, such as \emph{OptNet} \citep{amos2017optnet}, regularized LPs \citep{Wilder2019:melding, MandiGuns2020:InteriorPointPO}, or differentiable conic programs \citep{agrawal2019differentiating}.\looseness=-1

Next, we consider the limit $\tau\rightarrow\infty$. 
First, we have the result for the lower \lag-proximal map~\eqref{eq:lower-lag-proximal-map}.

\begin{proposition}\label{prop:tau-to-infty}
    Let $w$ be such that $\widehat\X(w)\ne\emptyset$. Then
    \begin{align}\label{eq:lower-lppm-arg-limit}
        \lim_{\tau\rightarrow\infty}\x_\tau(\w)
        \in \argmin_{\x\in{\widehat\X(\w)}} \ell(\x)
    \end{align}
    whenever the limit exists. For a linearized loss, we have
    \begin{align}\label{eq:lower-lpgd-arg-limit}
        \lim_{\tau\rightarrow\infty}\widetilde\x_\tau(\w)
        \in \argmin_{\x\in{\widehat\X(\w)}} \langle \x, \nabla\ell\rangle
        = \x_{FW}(\w),
    \end{align}
    where $\x_{FW}$ is the solution to a Frank-Wolfe iteration LP~\citep{frank1956algorithm}
\end{proposition}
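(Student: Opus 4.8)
The plan is to treat this as a standard variational (epi-convergence-type) limit. I would abbreviate $\x_\tau=\x_\tau(\w)$ and recall that, after carrying out the inner maximization over $\y$, the lower $\lag$-proximal map is the primal minimizer
\[
\x_\tau \in \argmin_{\x\in\X} f_\tau(\x),\qquad f_\tau(\x)\define \ell(\x)+\inv{\tau}D^*_\lag(\x\mid\w),
\]
where I regard $f_\tau$ as an extended-real-valued function equal to $+\infty$ off $\widehat\X(\w)$. The two facts I would lean on are that $D^*_\lag(\cdot\mid\w)\ge 0$ (from \eqref{eq:lagrangian-divergence}) and that $D^*_\lag(\cdot\mid\w)$ is lower semi-continuous, being a supremum over $\y\in\Y$ of functions continuous in $\x$; $\ell$ is continuous by assumption. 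As $\tau\to\infty$ the penalty $\inv{\tau}D^*_\lag$ vanishes on $\widehat\X(\w)$ and stays infinite off it, so $f_\tau$ should converge to $\ell+\delta_{\widehat\X(\w)}$ and its minimizers to a minimizer of $\ell$ over $\widehat\X(\w)$.

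First I would record that every $\x_\tau$ lies in $\widehat\X(\w)$: since $\widehat\X(\w)\ne\emptyset$ and $\ell$ is finite, $\ell_\tau(\w)=\min_\X f_\tau<\infty$, so a minimizer cannot sit where $f_\tau=+\infty$. Writing $\alpha^*\define\inf_{\x\in\widehat\X(\w)}\ell(\x)$, I would then show $\ell_\tau(\w)\to\alpha^*$: the map $\tau\mapsto\ell_\tau(\w)$ is non-increasing (the penalty shrinks) and bounded below by $\alpha^*$, while for every fixed $\x'\in\widehat\X(\w)$ optimality gives $\ell_\tau(\w)\le\ell(\x')+\inv{\tau}D^*_\lag(\x'\mid\w)\to\ell(\x')$, and taking the infimum over $\x'$ pins the limit at $\alpha^*$.

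The delicate step, and the one I expect to be the main obstacle, is showing that the limit $\bar\x\define\lim_\tau\x_\tau$ belongs to $\widehat\X(\w)$: since $\widehat\X(\w)$ is only the region $\{D^*_\lag<\infty\}$ of a lower semi-continuous function, it need not be closed, so a priori $\bar\x$ could escape it. To control this I would fix a minimizer $\x^\dagger\in\argmin_{\widehat\X(\w)}\ell$ (whose existence I would justify from the paper's compactness of $\X$ with lower semi-continuity) and use optimality of $\x_\tau$,
\[
\ell(\x_\tau)+\inv{\tau}D^*_\lag(\x_\tau\mid\w)\le\ell(\x^\dagger)+\inv{\tau}D^*_\lag(\x^\dagger\mid\w).
\]
Because $\x_\tau\in\widehat\X(\w)$ forces $\ell(\x_\tau)\ge\alpha^*=\ell(\x^\dagger)$, the $\ell$-terms rearrange to $\inv{\tau}\bigl[D^*_\lag(\x_\tau\mid\w)-D^*_\lag(\x^\dagger\mid\w)\bigr]\le 0$, whence the uniform bound $D^*_\lag(\x_\tau\mid\w)\le D^*_\lag(\x^\dagger\mid\w)<\infty$. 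Lower semi-continuity then gives $D^*_\lag(\bar\x\mid\w)\le\liminf_\tau D^*_\lag(\x_\tau\mid\w)<\infty$, so $\bar\x\in\widehat\X(\w)$. Finally, passing to the limit in $\ell(\x_\tau)\le\ell(\x')+\inv{\tau}D^*_\lag(\x'\mid\w)$ and using continuity of $\ell$ yields $\ell(\bar\x)\le\ell(\x')$ for every $\x'\in\widehat\X(\w)$, i.e. $\bar\x\in\argmin_{\widehat\X(\w)}\ell$.

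For the linearized statement I would rerun the identical argument with $\ell$ replaced by its linearization $\widetilde\ell(\x)=\ell(\x^*)+\langle\x-\x^*,\nabla\ell\rangle$, which is affine and hence satisfies every hypothesis used above; this substitution is legitimate since $\widetilde\x_\tau$ is by definition the proximal map for $\widetilde\ell$. The conclusion becomes $\lim_\tau\widetilde\x_\tau(\w)\in\argmin_{\x\in\widehat\X(\w)}\widetilde\ell(\x)$, and since the additive constant $\ell(\x^*)-\langle\x^*,\nabla\ell\rangle$ does not affect the minimizer, this set equals $\argmin_{\x\in\widehat\X(\w)}\langle\x,\nabla\ell\rangle$. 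That is exactly the linear program solved to pick the descent vertex in one Frank–Wolfe step of $\ell$ over the feasible set $\widehat\X(\w)$, identifying the limit with $\x_{FW}(\w)$.
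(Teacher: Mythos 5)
Your overall strategy coincides with the paper's: both proofs analyze the penalized objective $f_\tau(\x)=\ell(\x)+\inv{\tau}D^*_\lag(\x\,|\,\w)$, use $D^*_\lag\geq 0$ and its monotone effect in $\tau$, exploit lower semi-continuity of $D^*_\lag(\cdot\,|\,\w)$ as a supremum of lower semi-continuous functions, record that $\x_\tau\in\widehat\X(\w)$ because $f_\tau(\x_\tau)<\infty$, and dispatch the linearized statement by substituting the affine $\widetilde\ell$ (the paper likewise says the second part ``follows directly by taking a loss linearization''). Your paragraph establishing $\ell_\tau(\w)\to\alpha^*$ is not needed for the conclusion but is harmless, and your use of continuity of $\ell$ in the final passage to the limit (versus the paper's liminf chain under mere lower semi-continuity) is fine given the standing $\mathcal{C}^1$ assumption on the loss from Section~\ref{sec:problem-setup}.

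The one genuine divergence---and the gap---is the membership step $\bar\x\in\widehat\X(\w)$. The paper passes a liminf through $D^*_\lag$ and infers finiteness from $\x_n\in\widehat\X(\w)$; you instead derive the uniform bound $D^*_\lag(\x_\tau\,|\,\w)\leq D^*_\lag(\x^\dagger\,|\,\w)$ by comparing against a minimizer $\x^\dagger\in\argmin_{\widehat\X(\w)}\ell$. The comparison itself is correct and quantitatively sharper than the paper's argument, but your justification of the \emph{existence} of $\x^\dagger$ (``compactness of $\X$ with lower semi-continuity'') does not go through: as you yourself observe two sentences earlier, $\widehat\X(\w)=\{\x\in\X\mid D^*_\lag(\x\,|\,\w)<\infty\}$ need not be closed, so lower semi-continuity of $\ell$ on a compact $\X$ does not yield attainment of $\inf_{\widehat\X(\w)}\ell$ (and the proposition does not assume compactness of $\X$; in the paper it appears only in a footnote as a sufficient condition for other assumptions). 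The gap matters because the obvious repair fails: replacing $\x^\dagger$ by an $\epsilon$-minimizer $\x^\dagger_\epsilon$ only yields $D^*_\lag(\x_\tau\,|\,\w)\leq\tau\epsilon+D^*_\lag(\x^\dagger_\epsilon\,|\,\w)$, which is vacuous as $\tau\to\infty$. So your proof is complete exactly when the minimum of $\ell$ over $\widehat\X(\w)$ is attained---a condition the proposition's conclusion implicitly presupposes, and, to be fair, a point at which the paper's own proof is also terse, since termwise finiteness of $D^*_\lag(\x_n\,|\,\w)$ does not by itself bound $\liminf_n D^*_\lag(\x_n\,|\,\w)$. If you state attainment as an explicit hypothesis, your argument is correct and arguably cleaner than the paper's at this delicate step.
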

Next proposition covers the case of the lower $\lag$-proximal map \eqref{eq:augmented-lower-lagrangian-proximal-map} with a quadratic regularizer.

\begin{proposition}\label{prop:tau-to-infty-augmented}
The primal lower $\lag$-proximal map \eqref{eq:augmented-lower-lagrangian-proximal-map} turns into the standard proximal map \eqref{eq:proximal-map}
\begin{align}\label{eq:augmented-lower-lppm-arg-limit}
\begin{split}
    \lim_{\tau\rightarrow\infty}\x_{\tau\rho}(\w)
    &= \argmin_{\x\in{\widehat\X(\w)}} \bigl[\ell(\x) + \inv{2\rho} \|\x - \x^*\|_2^2\bigr]
        \\&
    = \prox_{\rho\ell + I_{\widehat\X(\w)}}(\x^*),
\end{split}
\end{align}
whenever the limit exists. For a linearized loss, it reduces to the Euclidean projection onto $\widehat\X(\w)$
\begin{align}\label{eq:augmented-lower-lpgd-arg-limit}
\begin{split}
    \lim_{\tau\rightarrow\infty}\widetilde\x_{\tau\rho}(\w)
    &= \argmin_{\x\in{\widehat\X(\w)}} \bigl[\langle \x, \nabla \ell\rangle + \inv{2\rho}\|\x - \x^*\|_2^2\bigr]
        \\&
    = P_{\widehat\X(\w)}(\x^* - \rho \nabla\ell).
\end{split}
\end{align}
\end{proposition}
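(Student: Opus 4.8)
The plan is to turn the min--max defining $\z_{\tau\rho}(\w)$ into a single penalized minimization in the primal variable and then pass to the limit $\tau\to\infty$ using the assumed existence of the limit, which lets me avoid the full machinery of epi-convergence. Since only the divergence term depends on $\y$, carrying out the inner maximization gives
\begin{align*}
\x_{\tau\rho}(\w)=\arg\min_{\x\in\X} g_\tau(\x),\qquad g_\tau(\x)\define\ell(\x)+\inv{2\rho}\|\x-\x^*\|_2^2+\inv{\tau}D^*_\lag(\x\mid\w),
\end{align*}
and likewise for the linearized loss with $\ell$ replaced by $\langle\x,\nabla\ell\rangle$ (up to an additive constant). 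Because $D^*_\lag(\x\mid\w)=+\infty$ for $\x\notin\widehat\X(\w)$, the minimization runs effectively over $\widehat\X(\w)$ for every finite $\tau$; on $\widehat\X(\w)$ the penalty $\inv{\tau}D^*_\lag(\,\cdot\mid\w)$ is nonnegative and decreases monotonically to $0$ as $\tau\to\infty$.

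The core step is an elementary two-sided estimate. Write $h(\x)\define\ell(\x)+\inv{2\rho}\|\x-\x^*\|_2^2$ and abbreviate $\x_\tau\define\x_{\tau\rho}(\w)$. Dropping the nonnegative penalty gives the lower bound $g_\tau(\x_\tau)\ge h(\x_\tau)$, while testing the objective at any fixed $\x\in\widehat\X(\w)$ (nonempty by hypothesis) gives $g_\tau(\x_\tau)\le g_\tau(\x)=h(\x)+\inv{\tau}D^*_\lag(\x\mid\w)$. Combining,
\begin{align*}
h(\x_\tau)\le h(\x)+\inv{\tau}D^*_\lag(\x\mid\w)\qquad\text{for all }\x\in\widehat\X(\w).
\end{align*}
Assuming the limit $\bar\x\define\lim_{\tau\to\infty}\x_\tau$ exists, each $\x_\tau\in\widehat\X(\w)$ (the objective is $+\infty$ elsewhere), so closedness of $\widehat\X(\w)$ yields $\bar\x\in\widehat\X(\w)$; letting $\tau\to\infty$ and using continuity of $h$ together with $D^*_\lag(\x\mid\w)<\infty$ gives $h(\bar\x)\le h(\x)$ for every $\x\in\widehat\X(\w)$, i.e. $\bar\x\in\arg\min_{\x\in\widehat\X(\w)}h(\x)$. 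The identical computation with the linearized objective yields $\bar\x\in\arg\min_{\x\in\widehat\X(\w)}[\langle\x,\nabla\ell\rangle+\inv{2\rho}\|\x-\x^*\|_2^2]$.

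It remains to rewrite the two limiting minimizers in closed form. For the first, matching $\arg\min_{\x}[(\ell+I_{\widehat\X(\w)})(\x)+\inv{2\rho}\|\x-\x^*\|_2^2]$ against the proximal map~\eqref{eq:proximal-map} with $f=\ell+I_{\widehat\X(\w)}$, step-size $\rho$, and base point $\x^*$, and using $\rho\,I_{\widehat\X(\w)}=I_{\widehat\X(\w)}$, identifies it with $\prox_{\rho\ell+I_{\widehat\X(\w)}}(\x^*)$. For the second, completing the square,
\begin{align*}
\langle\x,\nabla\ell\rangle+\inv{2\rho}\|\x-\x^*\|_2^2=\inv{2\rho}\bigl\|\x-(\x^*-\rho\nabla\ell)\bigr\|_2^2+\text{const},
\end{align*}
so the minimizer over $\widehat\X(\w)$ is exactly the Euclidean projection $P_{\widehat\X(\w)}(\x^*-\rho\nabla\ell)$. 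The main obstacle is the topological bookkeeping needed to conclude $\bar\x\in\widehat\X(\w)$: one must argue that the effective feasible set is closed, which follows from lower semicontinuity of $D^*_\lag(\,\cdot\mid\w)$ as a supremum of continuous functions in the structured (LP/QP/conic) cases of interest; for the stated equalities rather than mere membership, one additionally invokes uniqueness of the limiting minimizer, guaranteed by strong convexity of the quadratic once $\widehat\X(\w)$ is convex and closed.
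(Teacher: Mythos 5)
Your proof follows the same skeleton as the paper's: the paper proves this proposition simply by rerunning the argument of Proposition~\ref{prop:tau-to-infty} with the augmented objective $f_\tau(\x)=\ell(\x)+\inv{\tau}D^*_\lag(\x|\w)+\inv{2\rho}\|\x-\x^*\|_2^2$, which is exactly your $g_\tau$; your reduction of the min--max to a penalized primal minimization, the value comparison at $\x_\tau$ against a fixed feasible point, and the closing identifications with the proximal map and the Euclidean projection (the latter by completing the square) are all the intended steps. If anything, your two-sided estimate $h(\x_\tau)\le h(\x)+\inv{\tau}D^*_\lag(\x|\w)$ streamlines the paper's version, which instead tracks the monotone sequence $f_{\tau_n}(\x_n)$, extracts a limit $\inv{\tau_n}D^*_\lag(\x_n|\w)\to c\ge 0$, and only then shows $c=0$ by testing $\hat\x=\x_\infty$; your one-pass inequality yields $h(\bar\x)\le h(\x)$ directly, and in fact needs only lower semicontinuity of $h$ rather than continuity, via $h(\bar\x)\le\liminf_{\tau}h(\x_\tau)$. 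Your explicit remark that the stated \emph{equalities} (rather than mere argmin membership) require uniqueness of the limiting minimizer is a point the paper passes over silently.

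The one step that does not hold as you state it is the justification of $\bar\x\in\widehat\X(\w)$: lower semicontinuity of $D^*_\lag(\cdot|\w)$ does \emph{not} imply that $\widehat\X(\w)=\{\x\in\X \mid D^*_\lag(\x|\w)<\infty\}$ is closed. Lower semicontinuity gives closedness of each sublevel set $\{D^*_\lag\le c\}$ for finite $c$, but the finiteness domain is their countable union, an $F_\sigma$ set; for instance $D^*(x)=1/x$ on $(0,1]$ extended by $+\infty$ at $0$ is lower semicontinuous with finiteness domain $(0,1]$, which is not closed. The paper's route avoids asserting closedness: it applies lower semicontinuity along the minimizing sequence, $\liminf_{n} D^*_\lag(\x_n|\w)\ge D^*_\lag(\x_\infty|\w)$, and combines this with finiteness along the sequence to conclude $\x_\infty\in\widehat\X(\w)$ (a step that is itself terse, since termwise finiteness does not bound the liminf --- both arguments are fully airtight only under an explicit closedness-type hypothesis or in structured settings). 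Your hedge that closedness does hold in the LP/QP/conic cases is correct, since there $\widehat\X(\w)$ is the intersection of $\X$ with an affine or closed convex feasibility set, so your argument is sound in the regimes the paper targets; but the blanket claim ``closedness follows from lower semicontinuity'' should be replaced either by the paper's liminf argument or by an explicit closedness assumption on $\widehat\X(\w)$.
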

The \lppml \twoeqrefs{eq:lppm}{eq:augmented-lower-lppm-lpgd} and \lpgdl \twoeqrefs{eq:lpgd}{eq:augmented-lower-lppm-lpgd} updates corresponding to the \lag-proximal maps \twoeqrefs{eq:lower-lppm-arg-limit}{eq:augmented-lower-lppm-arg-limit} and \twoeqrefs{eq:lower-lpgd-arg-limit}{eq:augmented-lower-lpgd-arg-limit} have the interpretation of decoupling the update step, by first computing a ``target'' (\eg $\widetilde\x_{\tau\rho}$ via projected gradient descent with step-size $\rho$), and then minimizing the Lagrangian divergence to make the target the new optimal solution.

We discuss multiple examples that showcase the asymptotic variations of \lppm and \lpgd.
Here, we will work with the finite-difference version of the updates~\twoeqrefs{eq:lppm}{eq:augmented-lower-lppm-lpgd}, denoted~by
\begin{align}
    \Delta \ell_{\tau}(\w)
    &\define \tau\nabla\ell_{\tau}(\w),
        &
    \Delta \ell^{\tau}(\w)
    &\define \tau\nabla\ell^{\tau}(\w),\label{eq:finite-differences-lppm}
        \\
    \Delta \ell_{\tau\rho}(\w)
    &\define \tau\nabla\ell_{\tau\rho}(\w),
        &
    \Delta \widetilde\ell_{\tau\rho}(\w)
    &\define \tau\nabla\widetilde\ell_{\tau\rho}(\w).\label{eq:finite-differences-augmented}
\end{align}

\begin{example}[Identity with Projection]
For an LP \eqref{eq:linear-program}
it is $\widehat\X(\w) = \X$ and we get the asymptotic regularized \lpgdl update \eqref{eq:augmented-lower-lppm-lpgd} in finite-difference form \eqref{eq:finite-differences-augmented} as
\begin{align}\label{eq:augmented-lower-lpgd-limit}
\begin{split}
    \lim_{\tau\rightarrow\infty} \Delta \widetilde\ell_{\tau\rho}(\c) 
    & = \lim_{\tau\rightarrow\infty}\bigl[\widetilde\x_{\tau\rho}(\c) - \x^*\bigr]
        \\&
    = P_{\X}(\x^*-\rho\nabla\ell) - \x^*,
\end{split}
\end{align}
where we used \eqref{eq:augmented-lower-lpgd-arg-limit}.
In the limit of large regularization $\rho\rightarrow 0$ with division by $\rho$ in analogy to Theorem \ref{thm:limit-gradient},
the above update converges to
\begin{align*}
\begin{split}
    \lim_{\rho\rightarrow 0}\lim_{\tau\rightarrow\infty} \inv{\rho}\Delta \widetilde\ell_{\tau\rho}(\c) 
    = \lim_{\rho\rightarrow 0} \inv{\rho} \bigl[P_{\X}(\x^*-\rho\nabla\ell) - \x^*\bigr]&
        \\
    = DP_{\X}(\x^*|-\nabla\ell)
    = D^*P_{\X}(\x^*|-\nabla\ell)&,
\end{split}
\end{align*}
where $DP$ and $D^*P$ denote the directional derivative and coderivative of the projection  $P$ at~$\x^*$.
This is closely related to the \textit{Identity with Projection} method by \citet{sahoo2022gradient}, in which the true gradient is replaced by backpropagating~$-\nabla\ell$ through the projection onto a relaxation of~$\X$.\!%
\footnote{Note that this also has close ties to the one-step gradient arising in implicit differentiation of fixed-point iterations by treating the inverse Jacobian as an identity function \citep{geng2021training, chang2022object, bai2022deep}.}
\end{example}

\begin{example}[Smart Predict then Optimize]
The \emph{Smart Predict then Optimize} (SPO) setting \citep{mandi2020smart, elmachtoub2022smart} embeds an LP \eqref{eq:linear-program}
as the final component of the prediction pipeline and assumes access to the ground truth cost $\c_\text{true}$. The goal is to optimize the 
SPO loss $\ell_\text{SPO}(\x^*(\c), \c_\text{true})=\langle \x^*(\c) - \x^*(\c_\text{true}), \c_\text{true} \rangle$. Due to the discreteness of the LP, the SPO loss has degenerate gradients with respect to $\c$, \ie they are zero almost everywhere and undefined otherwise.
Choosing $\tau=\inv{2}$ for the upper $\lag$-proximal map~\eqref{eq:upper-lag-proximal-map}, we get
\begin{align}
        \x^{\frac12}(\c) 
        &= \argmax_{\x\in\X}\langle \x - \x^*(\c_\text{true}), \c_\text{true}\rangle -  2\langle \x - \x^*,  \c \rangle\nonumber
        \\&
        = \argmax_{\x\in\X} \langle \x,  \c_\text{true} - 2\c \rangle
\end{align}
which gives the lower and upper \lppm %
updates \eqref{eq:lppm} in finite-difference form \eqref{eq:finite-differences-lppm}
\begin{align}
	\Delta \ell_{\tau}(\c)  
        = \x_{\tau}(\c) - \x^*
        \,\,\text{and}\,\,
	\Delta \ell^{\frac1{2}}(\c) 
        = \x^* - \x^{\frac1{2}}(\c).
\end{align}
Summing both the updates and taking the limit $\tau\rightarrow\infty$ yields the combined \lppm update
\begin{align}\label{eq:lppm-spo}
\begin{split}
	\lim_{\tau\rightarrow\infty} \bigl[\Delta \ell_{\tau}(\c) + \Delta \ell^{\frac1{2}}(\c)\bigr]
    = \lim_{\tau\rightarrow\infty} \bigl[\x_{\tau}(\c) - \x^{\frac1{2}}(\c)\big] &
        \\
    =\x^*(\c_\text{true}) - \x^{\frac1{2}}(\c)
    = \inv{2}\nabla\ell_\text{SPO+}(\c, \c_\text{true})&,
\end{split}
\end{align}
where we used \eqref{eq:lower-lppm-arg-limit}.
Note that as the SPO loss is already linear in $\x$, \lppm and \lpgd are equivalent.
Update \eqref{eq:lppm-spo} recovers the gradient of the SPO$+$ loss
\begin{align}
\begin{split}
    \ell_\text{SPO+}(\c, \c_\text{true}) 
    \define \sup_{\x\in\X} \langle \x,  \c_\text{true} - 2\c\rangle + 2\langle \x^*(\c_\text{true}), \c \rangle &
        \\
    - \langle \x^*(\c_\text{true}), \c_\text{true} \rangle&
\end{split}
\end{align}
introduced by \citet{elmachtoub2022smart}, which has found widespread applications.
\end{example}

\begin{example}[Fenchel-Young Losses\footnote{
Note that an analogous derivation holds for \emph{generalized} Fenchel-Young losses \citep{blondel2022energy}, in which the regularized LP is replaced with a regularized energy function.
}]
In the \emph{structured prediction} setting we consider the regularized LP \eqref{eq:regularized-lp}
as the final component of the prediction pipeline and assume access to the ground truth solutions $\x_\text{true}$. The goal is to bring $\x^*(\c)$ close to $\x_\text{true}$ 
by minimizing any loss $\ell(\x)$ that is minimized over $\X$ at $\x_\text{true}$.
We compute the asymptotic \lppml update \eqref{eq:lppm} in finite-difference form \eqref{eq:finite-differences-lppm} as
\begin{align}
\begin{split}
	\lim_{\tau\rightarrow\infty} \Delta \ell_{\tau}(\c)
    & = \lim_{\tau\rightarrow\infty} \bigl[\x_{\tau}(\c) - \x^*\bigr]
        \\&
    =\x_\text{true} - \x^*
    = \nabla\ell_\text{FY}(\c, \x_\text{true}),
\end{split}
\end{align}
where we used \eqref{eq:lower-lppm-arg-limit} to compute the limit.
This recovers the gradient of the Fenchel-Young loss\footnote{Note that \citet{blondel2020learning} consider maximization.}
\begin{align*}
    \begin{split}
    \ell_\text{FY}(\c, \x_\text{true}) 
    &\define  \max_{\x\in\X} \bigl[-\langle \c, \x\rangle \!-\! \Omega(\x)\bigr]  \!+\! \Omega(\x_\text{true}) \!+\! \langle \c, \x_\text{true}\rangle 
        \\
    &\phantom{:}=  \langle \c, \x_\text{true}\rangle + \Omega(\x_\text{true}) - \min_{\x\in\X} \bigl[\langle \c, \x\rangle + \Omega(\x)\bigr]
    \end{split}
\end{align*}
defined by \citet{blondel2020learning}.
Depending on the regularizer $\Omega$ and the feasible region $\X$, Fenchel-Young losses cover multiple structured prediction setups, including the \emph{structured hinge} \citep{tsochantaridis2005large}, \emph{CRF} \citep{lafferty2001conditional}, and \emph{SparseMAP} \citep{niculae2018sparsemap} losses.
\looseness=-1
\end{example}

Finally, note that solvers in general return approximate solutions, yielding an approximate $\lag$-envelop denoted by $\widehat\ell_\tau(\w)$, which we quantify in the following proposition.

\begin{proposition}\label{prop:approximation-error}
    Assume that the solvers for \twoeqrefs{eq:embedded-opt-problem}{eq:lower-lag-proximal-map} return $\varepsilon$-accurate solutions with $\delta$-accurate objective values. Then
    \begin{align}
        |\widehat\ell_\tau(\w) - \ell_\tau(\w)| \leq \tfrac{2\delta}{\tau}.
    \end{align}
    Moreover, if $\nabla_\w\lag(\x,\y,\w)$ is $L$-Lipschitz continuous in $(\x,\y)\in\X\times\Y$, it holds that
    \begin{align}
        \|\nabla\widehat\ell_\tau(\w) - \nabla\ell_\tau(\w)\| \leq \tfrac{2L\varepsilon}{\tau}.
    \end{align}
\end{proposition}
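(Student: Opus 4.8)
The plan is to treat the two bounds separately and, in each case, reduce the claim to a triangle inequality after writing the envelope value (resp.\ its gradient) as a difference of two solver-produced quantities, each of which carries an independent error. No deep structure is needed beyond making precise which objects the solver actually approximates.

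For the value bound, I would first rewrite the envelope so that its dependence on the solver is transparent. Multiplying the definition~\eqref{eq:lower-lag-moreau-envelope} by $\tau$ and using $D_\lag(\x,\y|\w)=\lag(\x,\y,\w)-\lag^*(\w)$, the constant $\lag^*(\w)$ pulls out of the inner optimization and gives
\[
  \tau\,\ell_\tau(\w) = \Bigl[\,\min_{\x\in\X}\max_{\y\in\Y}\tau\ell(\x)+\lag(\x,\y,\w)\,\Bigr] - \lag^*(\w).
\]
Thus $\ell_\tau(\w)$ is $\tfrac1\tau$ times the difference of the optimal value of the $\lag$-proximal problem~\eqref{eq:lower-lag-proximal-map} and the optimal value $\lag^*(\w)$ of the forward problem~\eqref{eq:embedded-opt-problem}. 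The approximate envelope $\widehat\ell_\tau(\w)$ replaces each of these two optimal values by the corresponding solver output, and since each is reported to accuracy $\delta$, the triangle inequality gives $|\widehat\ell_\tau(\w)-\ell_\tau(\w)|\le\tfrac1\tau(\delta+\delta)=\tfrac{2\delta}{\tau}$.

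For the gradient bound, I would start from the envelope-theorem expression~\eqref{eq:lppm}, namely $\nabla\ell_\tau(\w)=\tfrac1\tau\nabla_\w\bigl[\lag(\z_\tau,\w)-\lag(\z^*,\w)\bigr]$, where the two optimizers $\z_\tau$ and $\z^*$ are held fixed and $\nabla_\w$ is the partial derivative in $\w$. The approximate gradient $\nabla\widehat\ell_\tau(\w)$ is the same formula evaluated at the solver's $\varepsilon$-accurate optimizers $\widehat\z_\tau,\widehat\z^*$. Forming the difference and splitting it into the $\z_\tau$-term and the $\z^*$-term, the Lipschitz hypothesis on $(\x,\y)\mapsto\nabla_\w\lag(\x,\y,\w)$ together with $\|\widehat\z_\tau-\z_\tau\|\le\varepsilon$ and $\|\widehat\z^*-\z^*\|\le\varepsilon$ bounds each term by $L\varepsilon$, yielding $\|\nabla\widehat\ell_\tau(\w)-\nabla\ell_\tau(\w)\|\le\tfrac{2L\varepsilon}{\tau}$.

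Both bounds are routine once the envelope value and its gradient are expressed as differences of two independently-solved quantities, so the only real care — and the main place the argument could slip — is the bookkeeping of what the solver approximates. Specifically, one must make explicit that $\widehat\ell_\tau$ combines two \emph{separate} $\delta$-accurate objective values (the forward value $\lag^*(\w)$ and the proximal value), so that the two errors add to $2\delta$ rather than partially cancel, and that the approximate gradient substitutes the $\varepsilon$-accurate \emph{solutions} $\widehat\z_\tau,\widehat\z^*$ into the exact partial-derivative formula~\eqref{eq:lppm}, rather than numerically differentiating the already-perturbed value $\widehat\ell_\tau$.
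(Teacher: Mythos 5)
Your proposal is correct and takes essentially the same route as the paper's proof: the value bound is the identical triangle inequality splitting $\widehat\ell_\tau(\w)-\ell_\tau(\w)$ into the $\delta$-error of the proximal objective value $\lag(\z_\tau,\w)+\tau\ell(\x_\tau)$ (scaled by $\inv\tau$) and the $\delta$-error of the forward value $\lag(\z^*,\w)$, and the gradient bound is the same split of $\inv\tau\nabla_\w\lag$ differences into a $\z_\tau$-term and a $\z^*$-term, each bounded by $\tfrac{L\varepsilon}{\tau}$ via the Lipschitz hypothesis. Your closing bookkeeping remark — that $\widehat\ell_\tau$ combines two \emph{separate} $\delta$-accurate objective values and that $\nabla\widehat\ell_\tau$ substitutes the $\varepsilon$-accurate solutions $\widehat\z_\tau,\widehat\z^*$ into the exact formula~\eqref{eq:lppm} — is exactly how the appendix version of the proposition formalizes the accuracy assumptions before running the same two triangle inequalities.
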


This shows that increasing the temperature $\tau$ helps reducing the error introduced by approximate solutions.

\section{Experiments}\label{sec:experiments}
Previous works have successfully applied variations of \lpgd and \lppm to two types of experimental settings:
1)~Producing informative gradient replacements when the derivative of the solution-mapping is degenerate~\citep{rolinek2020cvpr, rolinek2020deep, mandi2020smart, sahoo2022gradient, ferber2023surco} and
2)~efficiently computing the true gradient in the limit as $\tau\rightarrow 0$ when the derivative of the solution-mapping is non-degenerate~\citep{domke2010implicit, mcallester2020direct}.
However, our interpretation of \lpgd, as updates capturing higher-order sensitivities than standard gradients, suggests the application to a third potential setting:
3)~Using \lpgd with finite values of $\tau$ even when the derivative of the solution-mapping is non-degenerate. 
We will compare \lpgd to GD in two such cases:
a) Learning the rules of Sudoku from synthetic data and
b) tuning the parameters of a Markowitz control policy on historical trading data.

\paragraph{Implementation.}
We build on the CVXPY ecosystem~\citep{diamond2016cvxpy, agrawal2018rewriting, agrawal2019differentiable} to implement \lpgd for a large class of parameterized optimization problems. 
CVXPY automatically reduces parameterized optimization problems to conic programs in a differentiable way, which are then solved with the SCS solver~\citep{odonoghue2016conic}.
These solutions can then be differentiated based on the results of~\citet{agrawal2019differentiating}, which is natively implemented in CVXPY.\!%
\footnote{We modify this implementation to support the regularization term as described in \appref{sec:implicit-function-theorem}.}
\footnote{Note that for many optimization problems the automatic reduction results in conic programs that have degenerate derivatives, in which case the method of \citet{agrawal2019differentiating} returns a heuristic quantity for the gradient without strong theoretical support.}
As an alternative to the true conic program derivative computation based on~\citet{agrawal2019differentiating}, we implement Algorithm~\ref{alg:lpgd-lower} (in all variations).
This allows using \lpgd for the large class of parameterized convex optimziation problems supported by CVXPY without modification.
The code is available at
\href{https://github.com/martius-lab/diffcp-lpgd}{github.com/martius-lab/diffcp-lpgd}.

\subsection{Learning the Rules of Sudoku}\label{sec:experiments-sudoku}

We consider a version of the Sudoku experiment proposed by \citet{amos2017optnet}.
The task is to learn the rules of Sudoku in the form of linear programming constraints from pairs of incomplete and solved Sudoku puzzles.
See \appref{sec:experimental-details} for details.

The results are reported in Figure \ref{fig:sudoku-train-curves}. 
\lpgd reaches a lower final loss than GD, which suggests that \lpgda produces better update steps than standard gradients. 
We observe faster convergence of \lpgd compared to GD in terms of wallclock time, which is due to the faster backward pass computation resulting from warmstarting.
Note that the forward solution time over training increases as the initial random optimization problem becomes more structured as the loss decreases.
Additional results can be found in \appref{sec:experimental-details-sudoku}.

\begin{figure}[!htb]
    \centering
        \smallskip
        \includegraphics[width=\linewidth]{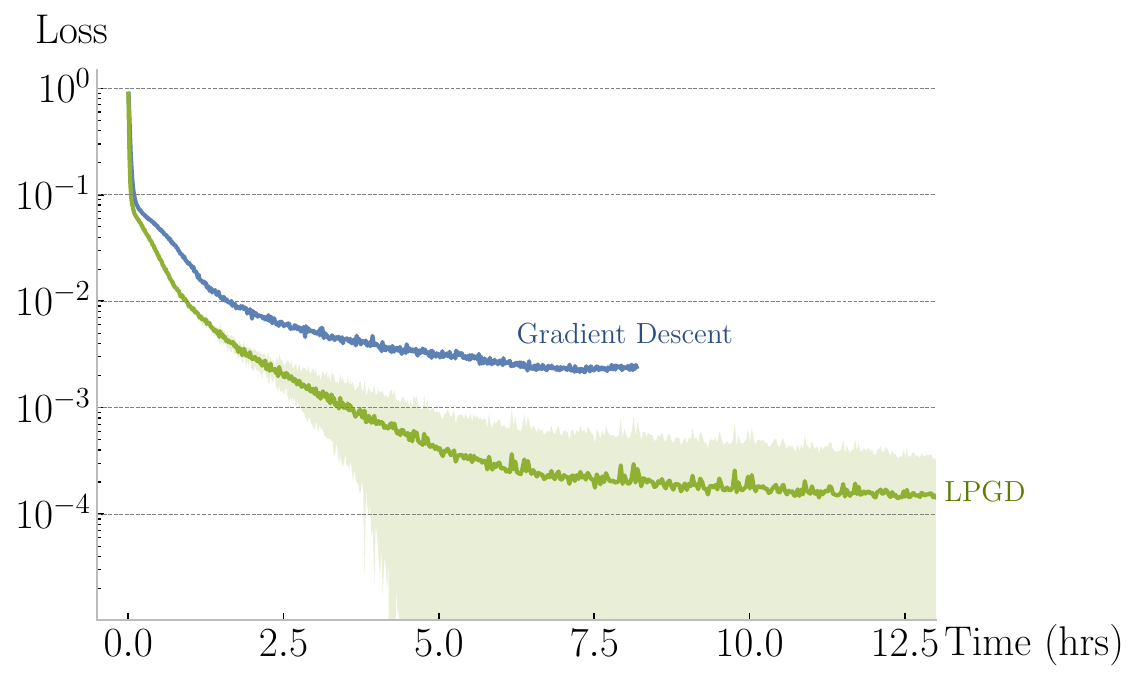}
        \smallskip
        \includegraphics[width=\linewidth]{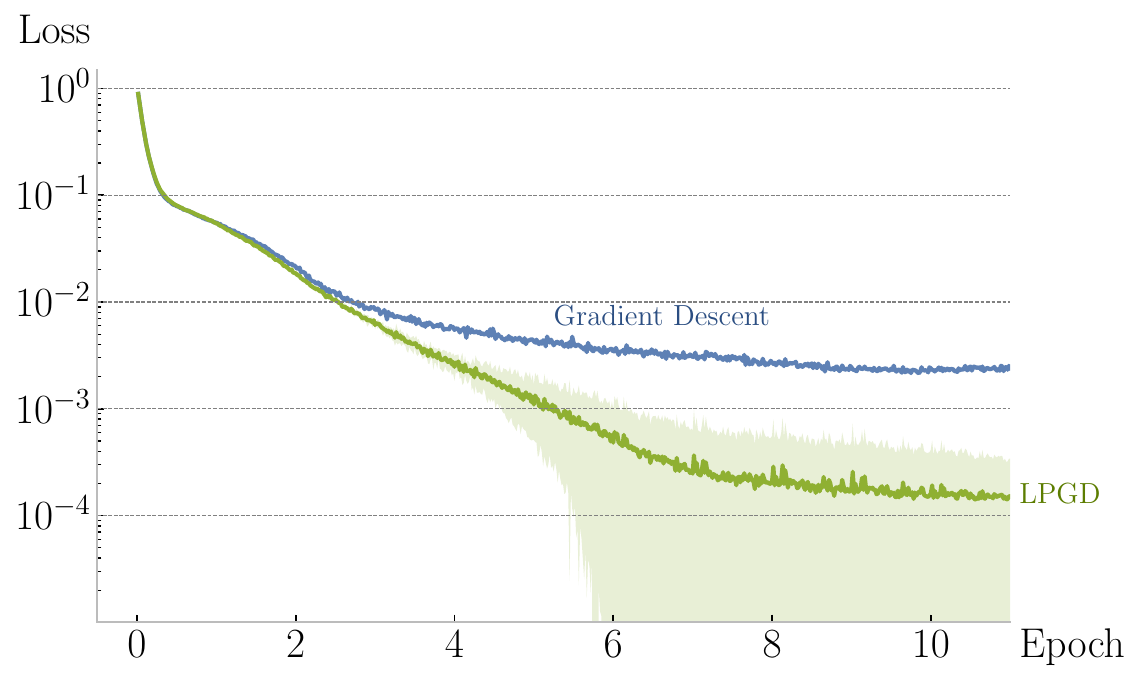}
        \smallskip
        \includegraphics[width=\linewidth]{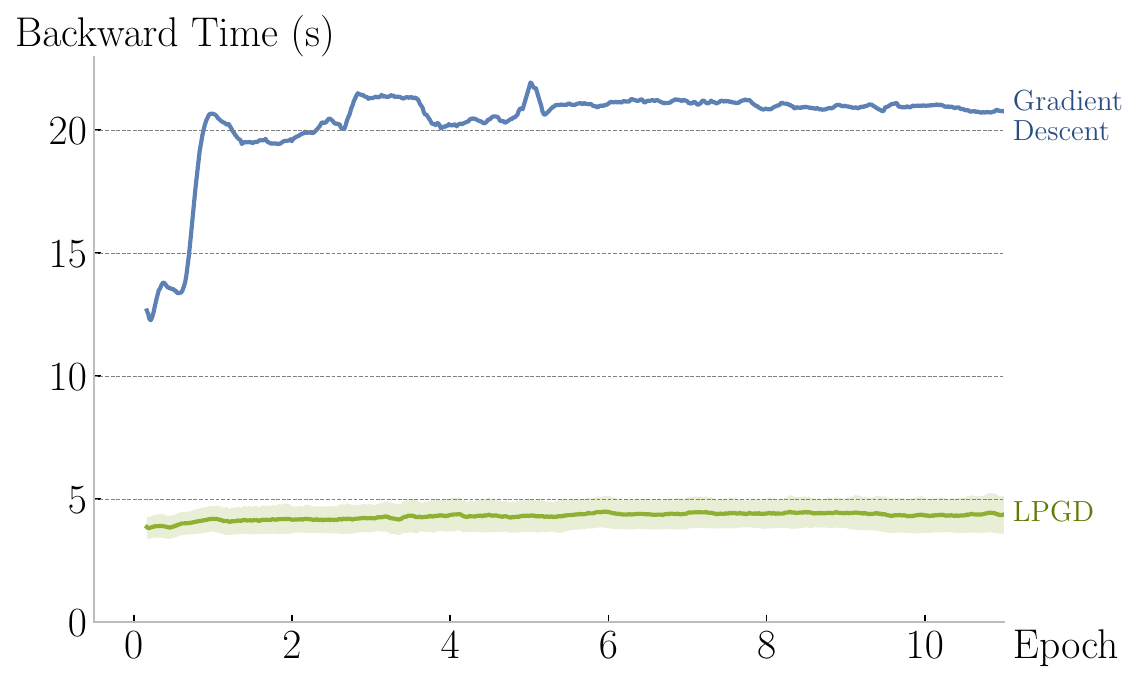}
        \smallskip
        \includegraphics[width=\linewidth]{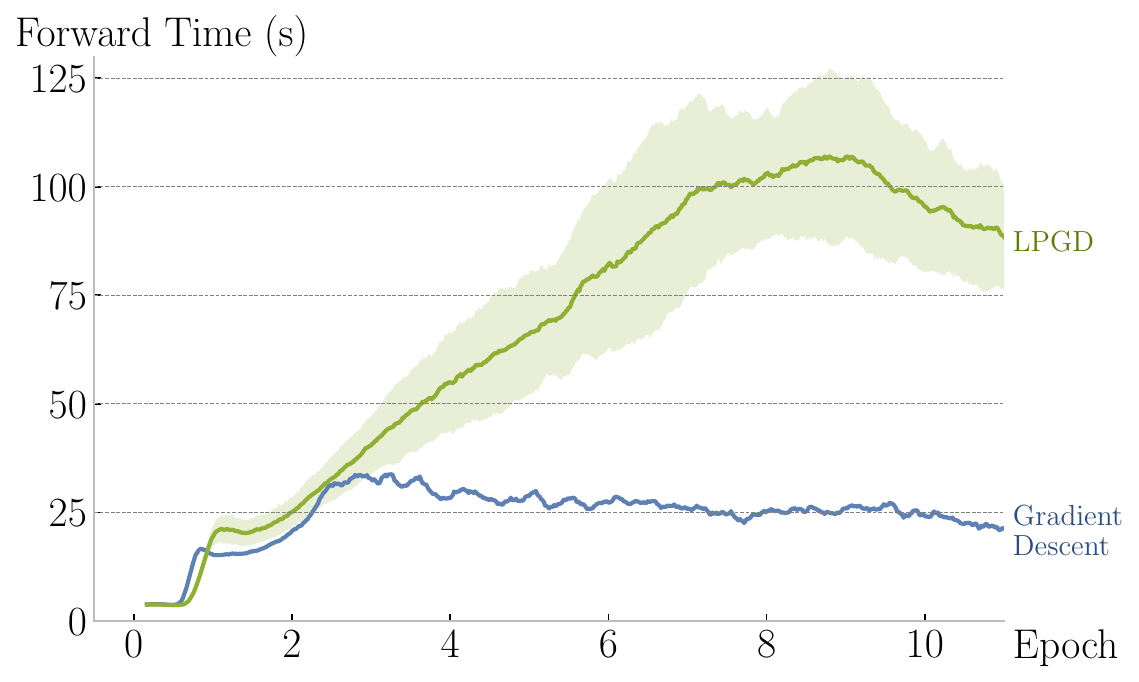}
    \caption{Comparison of \lpgd and gradient descent (GD) on the Sudoku experiment. 
    Reported train MSE over epochs, wall-clock time, and time spent in the backward and forward passes.
    Statistics are over $5$~restarts. 
    }
    \vspace{-2em}
    \label{fig:sudoku-train-curves}
\end{figure}

\subsection{Tuning a Markowitz Control Policy}\label{sec:experiments-portfolio}
We now consider the Markowitz Portfolio Optimization setting described by \citet[\S~5]{agrawal2020learning-control}.
The task is to tune a convex optimization control policy that iteratively trades assets over a trading horizon.
The policy parameters are initialized as a Markowitz model based on historical data.
Differentiating through the optimization problem in the control policy allows tuning them to maximize the utility on simulated evolutions of the asset values.
See \appref{sec:experimental-details-portfolio} for a more detailed description.

In~\Figref{fig:portfolio:lr}, we report a sweep over the learning rate $\alpha$.
The best-performing GD run achieves an improvement of $23\%$ over the Markowitz initialization, while the best-performing \lpgd run achieves an improvement of $35\%$.
Moreover, \lpgd converges faster in terms of iterations, with both methods requiring similar runtime.
Note that with a higher learning rate, GD becomes noisier than \lpgd.
For both methods, runs with $\alpha=0.1$ diverged and terminated due to infeasible conic programs.
In~\Figref{fig:portfolio:temperature}, we report a sweep on the temperature $\tau$ of~\lpgd.
For low $\tau=0.1$, \lpgd performs badly, as numerical issues paired with the backpropagation through time make the training dynamics unstable.
For $\tau=1$, \lpgd matches the performance of GD, as \lpgd updates approximate the true gradients.
The best performance is achieved for $\tau=100$, for which \lpgd provides higher-order information than the gradients.
For $\tau=1000$, the strong perturbations sometimes result in the infeasibility of the corresponding conic program, which causes the run to terminate.
This shows the practical trade-off in selecting the temperature parameter $\tau$, as discussed in the theoretical analysis of \lpgd.
In~\Figref{fig:portfolio:solver_acc}, we also conduct a sweep over the solver accuracy $\epsilon$, showing that GD and \lpgd have similar sensitivities to inaccurate solutions.
Additional results can be found in \appref{sec:experimental-details-portfolio}.

\begin{figure}[!t]
    \begin{subfigure}[c]{\linewidth}
        \includegraphics[width=\linewidth]{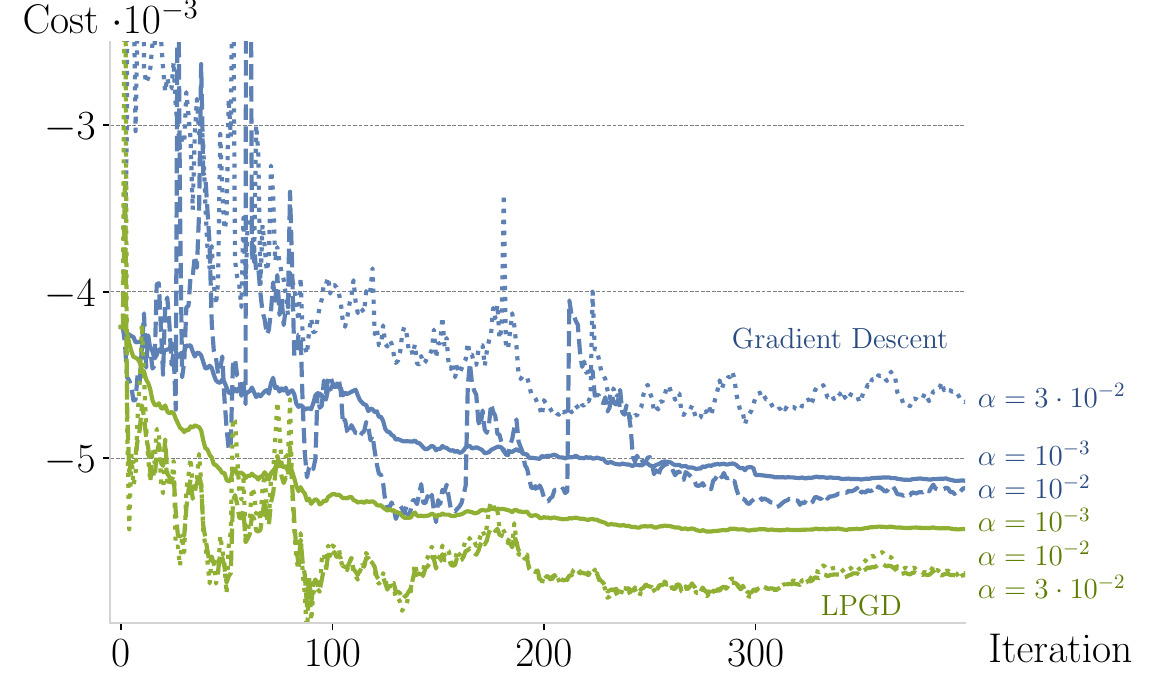}
        \caption{Learning rate $\alpha$ sweep.}
        \label{fig:portfolio:lr}
    \end{subfigure}
    \begin{subfigure}[c]{\linewidth}
        \includegraphics[width=\linewidth]{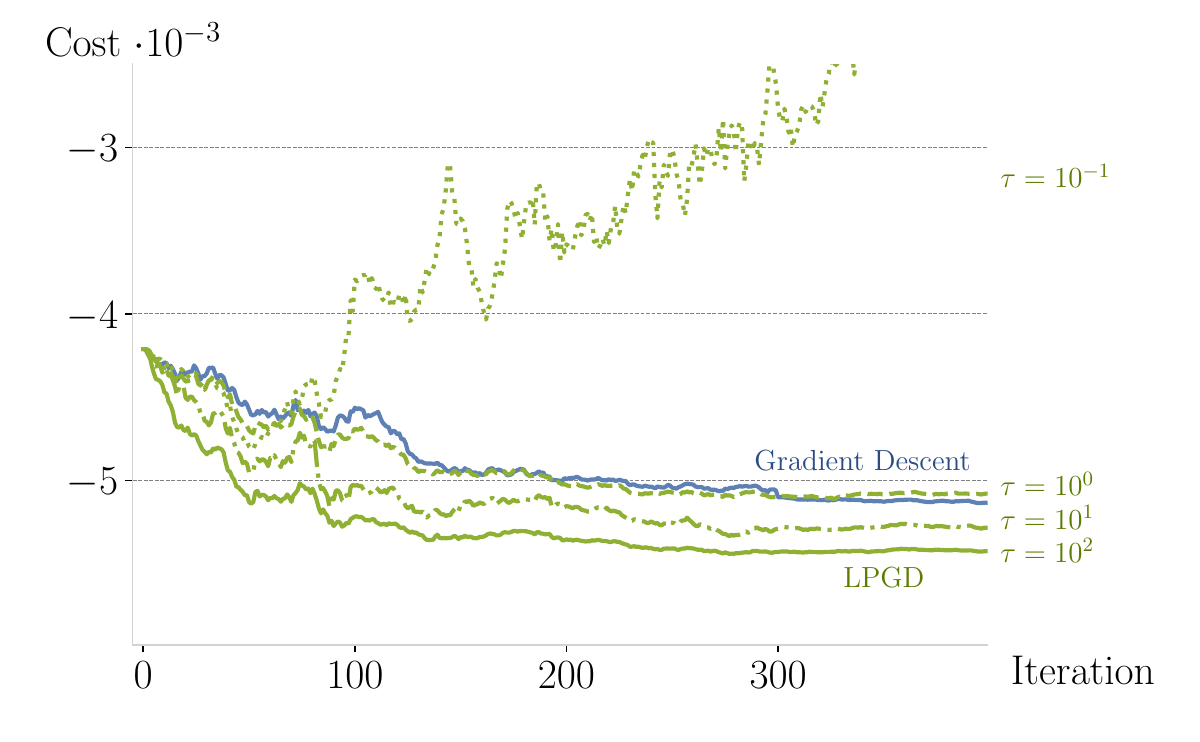}
        \caption{Temperature $\tau$ sweep.}
        \label{fig:portfolio:temperature}
    \end{subfigure}
    \begin{subfigure}[c]{\linewidth}
        \includegraphics[width=\linewidth]{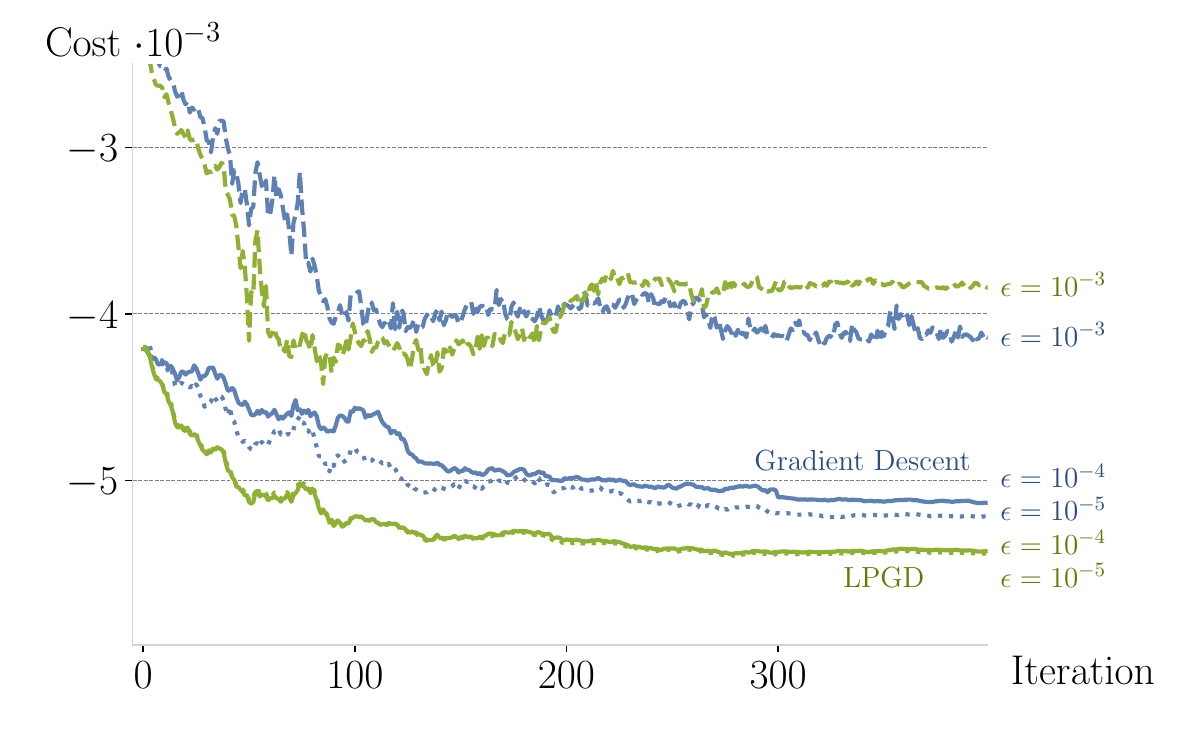}
        \caption{Solver accuracy $\epsilon$ sweep.}
        \label{fig:portfolio:solver_acc}
    \end{subfigure}
    \caption{
    Comparison of \lpgda and GD in the Markowitz portfolio optimization experiment. 
    When not specified otherwise, we use $\alpha=0.001$, $\epsilon=0.0001$, $\tau=100$ and $\rho=0$.
    }
    \label{fig:portfolio}
\end{figure}

\section{Conclusion}\label{sec:conclusion}
We propose \emph{Lagrangian Proximal Gradient Descent} (\lpgd), a flexible framework for learning parameterized optimization models.
\textbf{\lpgd~unifies and generalizes various state-of-the-art contemporary optimization methods}, including \emph{Direct Loss Minimization}~\citep{mcallester2020direct}, \emph{Blackbox Backpropagation}~\citep{VlastelicaEtal2020}, \emph{Implicit Differentiation by Perturbation}~\citep{domke2010implicit}, \emph{Identity with Projection} \citep{sahoo2022gradient}, \emph{Smart Predict then Optimize}~\citep{elmachtoub2022smart}, and \emph{Fenchel-Young losses}~\citep{blondel2020learning, blondel2022energy}, and \textbf{provides deep links to traditional optimization methods}.

\lpgd computes updates as finite differences and only requires accessing the forward solver as a black-box oracle, which makes it extremely simple to implement.
We also provide an implementation of \lpgd that smoothly integrates it into the CVXPY ecosystem.
Formulated as gradient descent on a loss function envelope,
\lpgd allows learning general objective and constraint parameters of saddlepoint problems even for solution mappings with degenerate derivatives.

Various special cases of \lpgd have shown impressive results in optimizing parameters of solution mappings with degenerate derivatives and speeding up the computation of non-degenerate derivatives.
We explore a new experimental direction by using \lpgd to efficiently compute more informative updates even when non-degenerate derivatives exist.
We find on two experiments that \lpgd can achieve faster convergence and better final results when compared to gradient descent.
\looseness=-1

\clearpage

\section*{Acknowledgements}
We thank the International Max Planck Research School for Intelligent Systems (IMPRS-IS) for supporting Anselm Paulus.
Georg Martius is a member of the Machine Learning Cluster of Excellence, EXC number 2064/1 – Project number 390727645.
This work was supported by the ERC - 101045454 REAL-RL.
We acknowledge the support from the German Federal Ministry of Education and Research (BMBF) through the Tübingen AI Center (FKZ: 01IS18039B).
This work was supported by the Cyber Valley Research Fund.
V\'\i t Musil was supported by the Czech Science Foundation Grant GA23-06963S.

\section*{Impact Statement}
This paper presents theory-oriented work whose goal is to advance the field of Machine Learning. There are many potential societal consequences of our work, none of which we feel must be specifically highlighted here.

\bibliographystyle{icml2024}

\appendix

\newpage
\section{Limitations}

We are aware of a few limitations of our method and are committed to transparent communication of~them.

Reformulating the \lag-proximal map as an instance of the forward solver makes the implementation simple and efficient.
However, such reformulation is not possible in all cases since it requires access to the linear coefficients of the Lagrangian via the solver interface. This same issue arises for the augmentation introduced in \Secref{sec:augmented-lagrangian}, which requires accessing the quadratic coefficients of the solver and potentially turns an LP from the forward pass into a QP on the backward pass. 
This does not apply to our implementation of \lpgd for the SCS solver \citep{odonoghue2016conic} in CVXPY, as the solver natively supports quadratic conic programs.

When considering extremely small values of $\tau>0$, the \lpgd update requires solving the optimization problem very accurately, which can be expensive. Otherwise, warm-starting the solver on the backward pass with the forward pass solution will already satisfy the stopping criterion.
However, we advocate for larger values of $\tau$ as the increased smoothing is usually beneficial. In our experiment, due to the convexity of the Lagrangian we are able to use the forward solver oracle to solve the optimization problems \eqref{eq:embedded-opt-problem} and \eqref{eq:augmented-lower-lagrangian-proximal-map} to a very high accuracy in reasonable time using the SCS solver~\citep{odonoghue2016conic}, and therefore did not observe issues arising from inexact solutions.

Finally, choosing the right combination of hyperparameters $\tau$ and $\rho$ can potentially require expensive tuning, as the different terms in the objective of the \lag-envelope \eqref{eq:augmented-lower-lppm-lpgd} can be of different magnitudes, depending on the problem at hand. A potential remedy is to normalize the terms onto a shared scale, which makes well-performing values of $\tau$ and $\rho$ more transferable and interpretable. Future work might also take inspiration from adaptive methods such as AIMLE \citep{minervini2023adaptive} for automatically tuning $\tau$ and $\rho$ on the fly.

\section{Extended Related Work}\label{sec:extended-related-work}
In this section we discuss the methods that are closest related to our framework. 
\paragraph{Direct Loss Minimization.}
In \emph{Direct Loss Minimization}~\citep{mcallester2020direct} the goal is to directly optimize a structured prediction pipeline for a given task loss such as the BLEAU score, extending previous work using structured SVMs or CRFs.
Given an input $\inp\in\sR^p$ the prediction pipeline consists of a feature map $\Psi\colon \X \times \sR^p \rightarrow \sR^k\colon (\x,\mu)\mapsto \Psi(\x, \inp)$ and a corresponding parameterized Lagrangian (called score function) $\lag(\x,\w)=\langle\w, \Psi(\x, \inp)\rangle$.
The structured prediction is then computed by solving the embedded optimization problem 
\begin{align}
    \x^*(\w, \inp)\define\argmax_{\x\in\X}\langle \w, \Psi(\x, \inp) \rangle
\end{align}
over a finite set of solutions $\X$. Finally a task-specific loss $\ell\colon \X \times \X \rightarrow \sR$ is used to compare the prediction to a label $\x_\text{true}\in\X$.
The goal is to optimize the loss over a dataset $\{(\inp_i,\x_{\text{true},i})\}_{i=1}^N$, and the authors propose to optimize it using gradient descent. They show for the case $\X = \{-1,1\}$ that the gradient can be computed using the limit of the finite difference
\begin{align*}
    \nabla_\w\ell(\x^*(\w, \inp))
    &= \pm\lim_{\tau\rightarrow 0} \inv{\tau}\bigl[\Psi(\x_{\pm\tau}, \inp) - \Psi(\x^*, \inp)\bigr]
\end{align*}
with
\begin{align}
    \x_{\pm\tau}
    \define \argmax_{\x\in\X} \langle \w, \Psi(\x, \inp) \rangle \pm \tau \ell(\x, \x_\text{true}).
\end{align}
Where the two sign cases are called the ``away-from-worse'' and the ``towards-better'' update.
The authors also discuss using relaxations of $\X$ as well has hidden variables that have similarities to our dual variables.
The method is applied to phoneme-to-speech alignment.
The DLM framework has also been generalized to non-linear objective functions \citep{song2016training, lorberbom2019direct}, always considering the limit $\tau\rightarrow 0$, with applications to action classification, object detection, and semi-supervised learning of structured variational autoencoders.
\paragraph{Blackbox Backpropagation.}
In \emph{Blackbox Backpropagation}~\citep{VlastelicaEtal2020} the authors consider embedded combinatorial optimization problems with linear cost functions. Given a (potentially high-dimensional) input $\mu\in\sR^p$ the prediction pipeline first computes the cost vector $\c\in\sR^n$ using a backbone model $\backbone\colon \sR^p \times \Theta \rightarrow \sR^n \colon (\mu,\theta)\mapsto \c$ and then solves the embedded linear optimization problem over a combinatorial space $\X\subset\sR^n$
\begin{align}
    \x^*(\c)\define\argmin_{\x\in\X} \langle \x, \c \rangle.
\end{align}
Finally the optimal solution is used as the prediction and compared to a label $\x_\text{true}\in\X$ on a given loss $\ell\colon \X \times \X \rightarrow \sR$.
Because of the discrete solution space the gradient of the loss with respect to the cost vector (and therefore also the parameters $\theta$) is uninformative, as it is either zero or undefined. 
The authors propose to replace the uninformative gradient w.r.t. $\c$ with the gradient of a piecewise-affine loss interpolation
\begin{align}
    \widetilde\ell_\tau(\c)\define\ell(\x^*(\c)) - \inv\tau \min_{\x\in\X} \langle \c, \x^*(\c) - \widetilde\x_\tau(\c)\rangle
\end{align}
where we defined
\begin{align}
    \widetilde\x_\tau(\c)\define\argmin_{\x\in\X} \langle \x, \c + \tau \nabla_\x\ell(\x^*) \rangle.
\end{align}
It has the gradient
\begin{align}
    \nabla\widetilde\ell_\tau(\c)=-\inv{\tau}[\x^* - \x^*(\c + \tau\nabla_\x\ell(\x^*))].
\end{align}
This approach has also been extended to learning discrete distributions in \cite{niepert2021implicit}.
\paragraph{Implicit Differentiation by Perturbation.}
\emph{Implicit Differentiation by Perturbation}~\citep{domke2010implicit} was proposed in the context of graphical models and marginal inference.
The parameters $\c$ now correspond to the parameters of an exponential family distribution and $\X$ is the marginal polytope. Computing the marginals requires solving the entropy-regularized linear program
\begin{align}
    \argmax_{\x\in\X} \langle \x, \c \rangle + \Omega(\x),
\end{align}
where $\Omega$ denotes the entropy. The final loss function $\ell$ depends on the computed marginals and some data $\x_\text{true}$.
Approximate marginal inference can be seen as approximating the marginal polytope with a set of linear equality constraints
\begin{align}
    \x^*(\c)\define\argmax_{\x\in\sR^n, A\x=b} \langle \x, \c \rangle + \Omega(\x),
\end{align}
which can be solved by loopy/tree-reweighted belief propagation.
The authors show that in this case, as an alternative to implicit differentiation, the gradients of the loss with respect to the parameters can be computed by
\begin{align}
    \nabla_\w\ell(\x^*(\w)) = \lim_{\tau\rightarrow 0} \inv\tau [\x^*(\c\!+\!\tau\nabla_\x\ell(\x^*)) \!-\! \x^*(\c)].
\end{align}
The method is applied to binary denoising, where the authors also test the double-sided perturbation case.
\paragraph{Identity with Projection.}
In \emph{Identity with Projection}~\citep{sahoo2022gradient} the setup is the same as in Blackbox Backpropagation. 
The goal of this method is to speed up the backward pass computation by removing the second invocation of the solver oracle on the backward pass.
In the basic version, the authors propose to replace the uninformative gradient through the solver $\nabla_\c\ell(\x^*(\c))$ by simply treating the solver as a negative identity, and returning $\Delta_\text{Id}\define -\nabla\ell(\x^*)$ instead.
Connections are drawn to the straight-through estimator.
Further, the authors identify transformations of the cost vector $P\colon \sR^k \rightarrow \sR^k$ that leave the optimal solution unchanged (\eg normalization), \ie $\x^*(P(\c))=\x^*(\c)$. 
They propose to refine the vanilla identity method by differentiating through the transformation, yielding the update $\Delta_\text{Id}\define P'_{\x^*}(-\nabla\ell(\x^*))$.
This is also shown to have an interpretation as differentiating through a projection onto a relaxation $\widetilde\X$ of the feasible space, \ie
\begin{align}
    \Delta_\text{Id}=D^*P_{\widetilde\X}(\x^*)(-\nabla\ell(\x^*)).
\end{align}
Experimentally, the method is shown to be competitive with Blackbox Backpropagation and I-MLE.
\paragraph{Smart Predict then Optimize.}
In \emph{Smart Predict then Optimize}~\citep{elmachtoub2022smart} setting the authors consider a linear program
\begin{align}
    \x^*(\c)\define\argmin_{\x\in\X} \langle \x, \c \rangle
\end{align}
as the final component of a prediction pipeline. The cost parameters $\c\in\sR^n$ are not known with certainty at test time, and are instead predicted from an input $\inp\in\sR^p$ via a prediction model $\backbone_\theta\colon \sR^p \rightarrow \sR^n \colon \mu\mapsto \c$ with parameters $\theta\in\Theta$.
What distinguishes this setup from the one considered in \eg Blackbox Backpropagation is that during training the true cost vectors $\c_\text{true}$ are available, \ie there is a dataset $\{\inp_i, \c_{\text{true},i}\}_{i=1}^N$. A naive approach would be to directly regress the prediction model onto the true cost vectors by minimizing the mean squared error
\begin{align}
    \ell_\text{MSE}(\inp, \c_\text{true}) = \inv 2 \|W_\theta(\mu) - \c_\text{true}\|.
\end{align}
However, the authors note that this ignores the actual downstream performance metric, which is the regret or SPO loss
\begin{align}
    \ell_\text{SPO}(\x^*(\c), \c_\text{true})=\langle \x^*(\c) - \x^*(\c_\text{true}), \c_\text{true} \rangle.
\end{align}
Unfortunately this loss does not have informative gradients, so the authors propose to instead optimize a convex upper bound, the SPO$+$ loss
\begin{align}
\begin{split}
    \ell_\text{SPO+}(\c, \c_\text{true}) 
    &\define \sup_{\x\in\X} \langle \x,  \c_\text{true} - 2\c\rangle 
    \\
    &\quad+ 2\langle \x^*(\c_\text{true}), \c \rangle
    - \langle \x^*(\c_\text{true}), \c_\text{true} \rangle
\end{split}
\end{align}
a generalization of the hinge loss.
Experimentally, results on shortest path problems and portfolio optimization demonstrate that optimizing the SPO$+$ loss offers significant benefits over optimizing the naive MSE loss.
\paragraph{Fenchel-Young Losses.}
\emph{Fenchel Young Losses}~\citep{blondel2020learning} were proposed in the context of structured prediction, in which the final prediction is the solution of a regularized linear program
\begin{align}
    \x^*(\c)\define\argmax_{\x\in\X} \langle \x, \c \rangle - \Omega(\x).
\end{align}
Supervision is assumed in the form of ground truth labels $\x_\text{true}$.
The authors propose to learn the parameters $\c$ by minimizing the convex Fenchel Young Loss
\begin{align*}
    \ell_\text{FY}(\c, \x_\text{true}) 
    &\define  \max_{\x\in\X} \bigl[\langle \c, \x\rangle \!-\! \Omega(\x)\bigr]  \!+\! \Omega(\x_\text{true}) \!-\! \langle \c, \x_\text{true}\rangle 
        \\
    &\phantom{:}=  \langle \c, \x_\text{true}\rangle \!+\! \Omega(\x_\text{true}) \!-\! \min_{\x\in\X} \bigl[\langle \c, \x\rangle \!+\! \Omega(\x)\bigr],
\end{align*}
for which a variety of appealing theoretical results are presented.
Many known loss functions from structured prediction and probabilistic prediction are shown to be recovered by Fenchel-Young losses for specific choices of feasible region $\X$ and regularizer $\Omega$, including the \emph{structured hinge} \citep{tsochantaridis2005large}, \emph{CRF} \citep{lafferty2001conditional}, and \emph{SparseMAP} \citep{niculae2018sparsemap} losses.

\section{Implicit Differentiation with Augmentation} \label{sec:implicit-function-theorem}

We inspect how the augmentation in \eqref{eq:augmentation} affects existing methods for computing the adjoint derivative of the augmented optimization problem
\begin{equation*} %
	\!\z_\rho^*(\w)
		= (\x_\rho^*(\w), \y_\rho^*(\w))
		\define \arg\min_{\x\in\X} \max_{\y\in\Y} \lag_\rho(\x, \y, \w).
\end{equation*}

\paragraph{Quadratic Program.}
For a symmetric positive semi-definite matrix $\H$ we can write a quadratic program with inequality constraints as
\begin{align}
\begin{split}
    (\x^*, \s^*) = &\argmin_{\x, \s\geq 0} \inv{2}\x^ T \H \x + c^Ty
        \\&
    \text{subject to} \quad A \x + b + \s = 0.
\end{split}
\end{align}
In Lagrangian form, we can write it as 
\begin{align*}
    \z^* = (\x^*, \s^*, \y^*) = &\arg\min_{\x, \s\geq 0} \max_{\y} \lag(\H, \c, A,\b, \x, \y)
\end{align*}
with the Lagrangian
\begin{align*}
    \lag(\x, \s, \y, \H, A, \b, \c) = \inv{2}\x^ T \H  \x + \c^T \x + (A\x + \b + \s)^T \y.
\end{align*}
The augmentation in \eqref{eq:augmentation} augments the Lagrangian to
\begin{align*}
    \lag_\rho&(\x, \s, \y, \H, A, \b, \c) 
        \\&
        = \inv{2}\x^ T \H  \x + \c^T \x + (A\x + \b + \s)^T \y + \tfrac{1}{2\rho}\|\x - \x^*\|^2_2\nonumber
\end{align*}
and we write
\begin{align*}
    \z_\rho^*(\H, A,\b, \c) = &\arg\min_{\x, \s\geq 0} \max_{\y} \lag_\rho(\x, \s, \y, \H, A, \b, \c).
\end{align*}
As described in \citep{amos2017optnet}, the optimization problem can be differentiated by treating it as an implicit layer via the KKT optimality conditions, which are given as
\begin{align}
    \H \x + A^T\y + \c + \tfrac{1}{\rho}(\x - \x^*) &= 0
        \\
    \diag(\y)\s  &= 0
        \\
    A\x + \b + \s &= 0
        \\
    \s&\geq 0.
\end{align}
Assuming strict complementary slackness renders the inequality redundant and the conditions reduce to the set of equations
\begin{align}\label{eq:augmented-qp-opt-conditions}
\begin{split}
    0&=F_\rho(\x, \s, \y, \H, A,\b, \c) 
        \\&
    = 
    \begin{pmatrix}
        \H \x + A^T\y + \c + \tfrac{1}{\rho}(\x - \x^*)\\
        \diag(\y)\s\\
        A\x + \b + \s
    \end{pmatrix}
\end{split}
\end{align}
which admits the use of the implicit function theorem. It states that under the regularity condition that ${\partial F_\rho}/{\partial\z}$ is invertible,
$\z_\rho^*(\w)$ can be expressed as an implicit function, and we can compute its Jacobian by linearizing the optimality conditions around the current solution
\begin{align}
    0 = \frac{\partial F_\rho}{\partial\z}\frac{\partial \z_\rho^*}{\partial \w} + \frac{\partial F_\rho}{\partial \w}
\end{align}
with
\begin{align}
    \frac{\partial F_\rho}{\partial\z} = 
    \begin{bmatrix}
        \H + \tfrac{1}{\rho} I & 0 & A^T \\
        0 & \diag(\tfrac{\y}{\s}) & I \\
        A & I & 0
    \end{bmatrix}.
\end{align}
It is now possible to compute the desired Vector-Jacobian-product as
\begin{align}
\begin{split}    
    \nabla_\w\ell(\x^*(\w)) 
        & = \frac{\partial \z^*}{\partial \w}^T\nabla\ell(\x^*)
            \\
        & = -\frac{\partial F_\rho}{\partial \w}^T\frac{\partial F_\rho}{\partial\z}^{-T}\nabla\ell(\x^*),
\end{split}
\end{align}
which involves solving a linear system. The augmentation term, therefore, serves as a regularizer for this linear system.

\paragraph{Conic Program.}
A conic program \citep{boyd2014convex} is defined as
\begin{align*}
    (\x^*, \s^*) = &\argmin_{\x,s\in\cone} c^Ty
    \quad \text{subject to} \quad A \x + b + s = 0
\end{align*}
where $\cone$ is a cone. The Lagrangian of this optimization problem is
\begin{align}
    \lag(\x, \s, \y, A, \b, \c) = \c^T \x + (A\x + \b + \s)^T \y
\end{align}
which allows an equivalent saddle point formulation given by
\begin{align*}
    \z^* = (\x^*, \s^*, \y^*) = &\arg\min_{\x,s\in\cone}\max_\y \lag(\x, \s, \y, A, \b, \c).
\end{align*}
The KKT optimality conditions are
\begin{align}
    A^T\y + \c &= 0,
        \\
    A\x + \s + \b&= 0,
        \\
    (\s, \y) &\in \cone\times \cone^*,
        \\
    \s^T \y &= 0,
\end{align}
where $\cone^*$ is the dual cone of $\cone$.
The skew-symmetric mapping
\begin{align}\label{eq:skew-symmetric-mapping}
    \Q(A, \b, \c) = \begin{bmatrix}
        0 & A^T & \c \\
        -A & 0 & \b \\
        -\c^T & -\b^T & 0
    \end{bmatrix}
\end{align}
is used in the homogenous self-dual embedding \citep{odonoghue2016conic, busseti2019solution}, a feasibility problem that embeds the conic optimization problem. 
\citet{agrawal2019differentiating} solve and differentiate the self-dual embedding.
We use the CVXPY implementation of this method as our baseline for computing the true adjoint derivatives of the optimization problem.
The augmentation in \eqref{eq:augmentation} changes the stationarity condition and, thereby, the skew-symmetric mapping as
\begin{align}\label{eq:augmented-skew-symmetric-mapping}
    \Q_\rho(A, \b, \c) = \begin{bmatrix}
        \tfrac{1}{\rho} I & A^T & \c \\
        -A & 0 & \b \\
        -\c^T & -\b^T & 0
    \end{bmatrix}.
\end{align}
We adjust the CVXPY implementation accordingly for our experiments.

\section{Relation to Mirror Descent}\label{sec:mirror-descent}

\paragraph{Standard Mirror Descent.}  %
Classical mirror descent is an algorithm for minimizing a function $\ell(\x)$ over a closed convex set $\X\subseteq \sR^n$.
The algorithm is defined by the distance-generating function (or mirror map) $\phi\colon\sR^n\rightarrow \sR$, a strictly convex continuously differentiable function.
The mirror descent algorithm also requires the assumption that the dual space of $\phi$ is all of $\sR^n$, \ie $\{\nabla \phi(\x)\mid \x \in \sR^n\} = \sR^n$, and that the gradient of $\phi$ diverges as $\|\x\|_2\rightarrow\infty$. 

The Bregman divergence of the mirror map is defined as
\begin{align}\label{eq:bregman-divergence}
    D_\phi(\x, \widehat\x) \define \phi (\x) - \phi (\widehat\x) - \langle \x-\widehat\x, \nabla \phi(\widehat\x) \rangle.
\end{align}
The lower
\footnote{The terms lower and upper are replaced with left and right in \citet{bauschke2018regularizing}.}
\emph{Bregman-Moreau envelope}~\citep{bauschke2018regularizing}
$\env^\phi_{\tau\!\f}\colon \sR^n \rightarrow \sR$ of a possibly non-smooth function $\f\colon\sR^n\to\sR$ is defined for $\tau>0$~as
\begin{align}\label{eq:lower-bregman-moreau-envelope}
    \env_{\tau\!\f}^\phi(\widehat\x) 
    \define \min_{\x} f(\x) + \tfrac{1}{\tau}D_\phi(\x, \widehat\x).
\end{align}
The corresponding lower \emph{Bregman-Moreau proximal map} $\prox^\phi_{\tau\!\f}\colon \sR^n \to \sR^n$ is given by
\begin{align}\label{eq:lower-bregman-moreau-proximal-map}
	\begin{split}
    	\prox_{\tau\!\f}^\phi(\widehat\x)
    	   \define \arg\inf\nolimits_{\x} \f(\x) + \inv{\tau} D_\phi(\x, \widehat\x)
	\end{split}
\end{align}
The superscript $\phi$ distinguishes the notation from the standard Moreau envelope \eqref{eq:moreau-envelope} and proximal map \eqref{eq:proximal-map}.

Then, the mirror descent algorithm in proximal form is given by iteratively applying the Bregman-Moreau proximal map of $\tilde\ell + I_\X$ as
\begin{align}
    \x_{k+1} 
    &= \argmin_{\x} \tilde\ell(\x) + I_\X(\x) + \tfrac{1}{\tau}D_\phi(\x, \x_k)\\
    &= \argmin_{\x\in\X} \langle \x, \nabla\ell(\x_k) \rangle + \tfrac{1}{\tau}D_\phi(\x, \x_k).
\end{align}

\paragraph{Lagrangian Mirror Descent.}
In this section, we derive \emph{Lagrangian Mirror Descent} (\lmd), an alternative algorithm to \lpgd inspired by mirror descent.
We define 
\begin{align}\label{eq:primal-objective}
    \lag_\w(\x) \define \sup_{\y\in\Y} \lag(\x, \y, \w)
\end{align}
and assume that $\lag_\w$ is strongly convex and continuously differentiable in $\x$.
As the key step, we identify the distance-generating function (mirror map) as the Lagrangian, i.e. $\phi=\lag_\w$. This has the interpretation that distances are measured in terms of the Lagrangian, similar to the intuition behind the previously defined Lagrangian divergence.

This mirror map leads to the Bregman divergence
\begin{align}
    D_{\lag_\w}(\x, \widehat\x) 
    &= \lag_\w(\x) - \lag_\w(\widehat\x) - \langle\x-\widehat\x, \nabla \lag_\w(\widehat\x)\rangle
\end{align}
satisfying
\begin{align}
    &D_{\lag_\w}(\x, \x^*) \geq 0,\\
    &D_{\lag_\w}(\x, \x^*)  = 0 \Leftrightarrow \x = \x^*(\w) \quad \text{for $\x\in\X$}.
\end{align}
We define the lower \emph{Lagrange-Bregman-Moreau envelope} at $\w$ as the Bregman-Moreau envelope at $\x^* = \x^*(\w)$, \ie
\begin{align}\label{eq:lower-lagrange-bregman-moreau-envelope}
    \ell^\phi_{\tau}(\w) 
    &\define \env_{\tau\ell+I_\X}^{\lag_\w}(\x^*) 
        \\
    &\phantom{:}= \min_{\x} \ell(\x) + I_\X(\x) + \tfrac{1}{\tau}D_{\lag_\w}(\x, \x^*)
        \\
    \begin{split}
    &\phantom{:}= \min_{\x\in\X} \ell(\x) + \tfrac{1}{\tau}(\lag(\x,\w) - \lag^*(\w) 
    \\&\qquad
    - \langle\x-\x^*, \nabla_\x \lag(\x^*, \w)\rangle),
    \end{split}
\end{align}
and the corresponding lower \emph{Lagrange-Bregman-Moreau proximal map}
\begin{align}\label{eq:lower-lagrange-bregman-moreau-proximal-map}
    \x^\phi_{\tau}(\w) 
    &\define \prox_{\tau\ell+I_\X}^{\lag_\w}(\x^*) 
        \\
    &\phantom{:}= \argmin_{\x} \ell(\x) + I_\X(\x) + \tfrac{1}{\tau}D_{\lag_\w}(\x, \x^*)
        \\
    \begin{split}
    &\phantom{:}= \argmin_{\x\in\X} \ell(\x) + \tfrac{1}{\tau}(\lag(\x,\w) - \lag^*(\w) 
        \\&\qquad
    - \langle\x-\x^*, \nabla_\x \lag(\x^*, \w)\rangle).
    \end{split}
\end{align}
Again, the superscript $\phi$ distinguishes the notation from the lower Lagrange-Moreau envelope \eqref{eq:lower-lag-moreau-envelope} and lower $\lag$-proximal map \eqref{eq:lower-lag-proximal-map}.

Similar to how we defined \lpgd as gradient descent on the Lagrange-Moreau envelope of the linearized loss, we define \emph{Lagrangian Mirror Descent} (\lmd) as gradient descent on the Lagrange-Bregman-Moreau envelope of the linearized loss, \ie
\begin{align}
\begin{split}
    \nabla_\w \widetilde\ell^\phi_{\tau}(\w) 
    &= \tfrac{1}{\tau}\nabla_\w [\lag(\widetilde\x^\phi_{\tau}, \w) - \lag(\x^*, \w) 
        \\&\qquad
    - \langle \widetilde\x^\phi_{\tau} - \x^*, \nabla_\x \lag(\x^*, \w) \rangle].
\end{split}
\end{align}
Again, the approximation allows efficiently computing the gradients using the forward solver as
\begin{align}
    \begin{split}
     \widetilde\x^\phi_{\tau}(\u, \v)
     &= \arg\min_{\x\in\X} \tau\langle \x, \nabla\ell\rangle + \langle \x, \u \rangle 
        \\&\qquad
     + \Omega(\x, \y, \v) - \langle\x, \nabla_\x \lag(\x^*, \w)\rangle
     \end{split}
        \\&
    = \x^*(\u + \tau\nabla\ell - \nabla_\x \lag(\x^*, \w), \v).
\end{align}

If $\X=\sR^n$, then the original optimization problem \eqref{eq:embedded-opt-problem} is unconstrained and we have the optimality condition $\nabla_\x\lag(\x^*,\w)=0$.
Therefore, the Bregman divergence
\begin{align*}
    D_{\lag_\w}(\x, \x^*) 
    &= \lag_\w(\x) - \lag_\w(\x^*) - \langle\x-\x^*, \nabla \lag_\w(\x^*)\rangle\nonumber
        \\&
    = \lag(\x,\w) - \lag(\x^*,\w)
        \\&
    = \lag(\x,\w) - \lag^*(\w)
    = D_\lag(\x|\w)
\end{align*}
coincides with the Lagrangian divergence \eqref{eq:lagrangian-divergence}.
It follows that, in this case, the Lagrange-Moreau envelope and \lpgd coincide with the Lagrange-Bregman-Moreau envelope and \lmd, respectively.

\section{Extension to General Loss Functions}\label{sec:general-loss}
\paragraph{Loss on Dual Variables.}
In the main text we considered losses depending only on the primal variables, \ie $\ell(\x)$.
For a loss on the dual variables $\ell(\y)$, if we assume strong duality of \eqref{eq:optimal-lagrangian}, we can reduce the situation to the primal case, as
\begin{align}
    \y^*(\w) &=  \argmax_{\y\in\Y} \min_{\x\in\X}\lag(\x, \y, \w) \\
    &= \argmin_{\y\in\Y} \max_{\x\in\X} -\lag(\x, \y, \w).
\end{align}
This amounts to simply negating the Lagrangian in all equations while swapping $\x$ and $\y$.

\paragraph{Loss on Primal and Dual Variables.}
The situation becomes more involved for a loss function $\L(\x,\y)$ depending on both primal and dual variables. 
If it decomposes into a primal and dual component, \ie $\L(\x, \y)=\ell_p(\x)+\ell_d(\y)$, we can compute the envelopes of the individual losses independently. Note that a linearization of the loss $\tilde \L$ trivially decomposes this way. Adding the envelopes together yields a combined lower and upper envelope for the total loss as
\begin{align}
    \begin{split}
    (\ell_p)_{\tau}(\w) + (\ell_d)_{\tau}(\w)
    = &\min_{\x\in\X}\max_{\y\in\Y} [\tfrac{1}{\tau}\lag(\x, \y, \w) + \ell_p(\x)] 
        \\
    - &\max_{\y\in\Y} \min_{\x\in\X} [\tfrac{1}{\tau}\lag(\x, \y, \w) - \ell_d(\y)],
    \end{split}
        \\
    \begin{split}
    (\ell_d)^{\tau}(\w) + (\ell_p)^{\tau}(\w)
    = &\max_{\y\in\Y} \min_{\x\in\X} [\tfrac{1}{\tau}\lag(\x, \y, \w) + \ell_d(\y)]  
        \\
    - &\min_{\x\in\X}\max_{\y\in\Y} [\tfrac{1}{\tau}\lag(\x, \y, \w) - \ell_p(\x)].
    \end{split}
\end{align}
Assuming strong duality of these optimization problems, this leads to
\begin{align*}
    \begin{split}
    (\ell_p)_{\tau}(\w) + (\ell_d)_{\tau}(\w)
    = &\min_{\x\in\X}\max_{\y\in\Y} [\tfrac{1}{\tau}\lag(\x, \y, \w) + \ell_p(\x)]
        \\
    -  &\min_{\x\in\X}\max_{\y\in\Y} [\tfrac{1}{\tau}\lag(\x, \y, \w) - \ell_d(\y)],
    \end{split}
        \\
    \begin{split}
    (\ell_d)^{\tau}(\w) + (\ell_p)^{\tau}(\w)
    = &\min_{\x\in\X}\max_{\y\in\Y} [\tfrac{1}{\tau}\lag(\x, \y, \w) + \ell_d(\y)]  
        \\
    -  &\min_{\x\in\X}\max_{\y\in\Y} [\tfrac{1}{\tau}\lag(\x, \y, \w) - \ell_p(\x)].
    \end{split}
\end{align*}
Strong duality holds in particular for a linearized loss $\tilde \L$, as this only amounts to a linear perturbation of the original optimization problem.
Unfortunately, computing the average envelope 
\begin{align*}
    \!\!(\ell_p)\mathmiddlescript{\tau}(\w) + (\ell_d)\mathmiddlescript{\tau}(\w)
    = \tfrac{1}{2}\biggl\{\!
    &\min_{\x\in\X}\max_{\y\in\Y} [\tfrac{1}{\tau}\lag(\x, \y, \w) + \ell_p(\x)] \nonumber
        \\
    -  \!&\min_{\x\in\X}\max_{\y\in\Y} [\tfrac{1}{\tau}\lag(\x, \y, \w) - \ell_d(\y)]\nonumber
        \\
    + &\min_{\x\in\X}\max_{\y\in\Y} [\tfrac{1}{\tau}\lag(\x, \y, \w) + \ell_d(\y)]  \nonumber
        \\
    -  \!&\min_{\x\in\X}\max_{\y\in\Y} [\tfrac{1}{\tau}\lag(\x, \y, \w) - \ell_p(\x)] \biggr\}
\end{align*}
or its gradient would now require four evaluations of the solver, which can be expensive. To reduce the number of evaluations, we instead ``combine'' the perturbations of the primal and dual loss, \ie we define
\begin{align}
    \L_{\tau}(\w)
    &\define \min_{\x\in\X} \max_{\y\in\Y} \L(\x,\y) + \tfrac{1}{\tau}[\lag(\x, \y, \w) - \lag^*(\w)] \label{eq:general-lower-lag-moreau-envelope}\\
    \L^{\tau}(\w)
    &\define \min_{\x\in\X} \max_{\y\in\Y} \L(\x, \y) + \tfrac{1}{\tau}[\lag(\x, \y, \w) - \lag^*(\w)] \\
    \begin{split}
    \L\mathmiddlescript{\tau}(\w) 
    &\define \tfrac{1}{2} \{ \L_{\tau}(\w) + \L^{\tau}(\w) \}\\
    &\phantom{:}= \min_{\x\in\X} \max_{\y\in\Y}  [\tfrac{1}{\tau}\lag(\x, \y, \w) + \L(\x, \y)] 
        \\&\qquad
    - \min_{\x\in\X} \max_{\y\in\Y} [\tfrac{1}{\tau}\lag(\x, \y, \w) - \L(\x, \y)].
    \end{split}
\end{align}
The above definitions are valid even when we do not have strong duality of~\eqref{eq:optimal-lagrangian} and~\eqref{eq:general-lower-lag-moreau-envelope}. Note that $\L_{\tau}$ and $\L^{\tau}$ are not necessarily lower and upper bounds of the loss anymore.
However, these combined envelopes also apply to loss functions that do not separate into primal and dual variables, and computing their gradients requires fewer additional solver evaluations.
For $\L(\x, \y) = \ell(x)$ we have
\begin{align}
    \L_{\tau}(\w) &= \ell_{\tau}(\w), &
    \L\mathmiddlescript{\tau}(\w) &= \ell\mathmiddlescript{\tau}(\w), &
    \L^{\tau}(\w) &= \ell^{\tau}(\w). 
\end{align}
In some of the proofs we will work with $\L$ instead of $\ell$ for full generality and reduce the situation to a primal loss $\ell$ with the relations above.

\section{Proofs}\label{sec:proofs}
\begin{lemma}[Equation~\eqref{eq:lagrangian-divergence-property}]\label{lemma:1}
    It holds that
    \begin{align}
	\def\myspace{\hspace{0.5em plus 0.5em}}
  \!\!D^*_\lag(\x | \w) = 0
		\myspace\text{if and only if}\myspace
	\text{$\x$ minimizes \eqref{eq:embedded-opt-problem}}
	\myspace\text{for $\x\in\X$}.\!\!\nonumber
    \end{align}
\end{lemma}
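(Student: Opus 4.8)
The plan is to show that the Lagrangian divergence equals the primal suboptimality gap, after which the equivalence becomes a tautology about minimizers. The first step is to unpack the definition~\eqref{eq:lagrangian-divergence} and observe that $\lag^*(\w)$ is constant with respect to $\y$, so it can be pulled out of the supremum:
\[
D^*_\lag(\x | \w) = \sup_{\y\in\Y}\bigl[\lag(\x,\y,\w) - \lag^*(\w)\bigr] = \Bigl(\sup_{\y\in\Y}\lag(\x,\y,\w)\Bigr) - \lag^*(\w).
\]
Writing $g(\x)\define\sup_{\y\in\Y}\lag(\x,\y,\w)$ for the primal objective of~\eqref{eq:embedded-opt-problem} and recalling from~\eqref{eq:optimal-lagrangian} that $\lag^*(\w)=\min_{\x\in\X}g(\x)$, this identity reads $D^*_\lag(\x | \w)=g(\x)-\min_{\widetilde\x\in\X}g(\widetilde\x)$; that is, the divergence is exactly the gap between the primal value at $\x$ and the optimal primal value.

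With this identity in hand, I would establish both implications directly. For the ``only if'' direction, suppose $D^*_\lag(\x | \w)=0$; then $g(\x)=\min_{\widetilde\x\in\X}g(\widetilde\x)$, so $\x$ attains the minimum in~\eqref{eq:embedded-opt-problem} and therefore minimizes it. For the ``if'' direction, suppose $\x\in\X$ minimizes~\eqref{eq:embedded-opt-problem}, i.e. $g(\x)=\min_{\widetilde\x\in\X}g(\widetilde\x)=\lag^*(\w)$; then $D^*_\lag(\x | \w)=g(\x)-\lag^*(\w)=0$. The nonnegativity $D^*_\lag\ge 0$ already recorded below~\eqref{eq:lagrangian-divergence} confirms that $0$ is indeed the attained minimal value of the divergence, consistent with the equivalence though not strictly needed for it.

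I do not expect a genuine obstacle, since the argument is purely definitional. The only step that requires a moment of care is the interchange that moves $\lag^*(\w)$ outside the supremum, which is justified solely because $\lag^*(\w)$ does not depend on the maximization variable $\y$. The one point to pin down in advance is the reading of ``$\x$ minimizes~\eqref{eq:embedded-opt-problem}'' as ``$\x$ minimizes the primal objective $g(\x)=\sup_{\y\in\Y}\lag(\x,\y,\w)$ over $\X$''; once this is agreed, the proof is the one-line consequence of the definition of a minimizer displayed above.
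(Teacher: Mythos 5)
Your proof is correct and follows essentially the same route as the paper's: both arguments pull the constant $\lag^*(\w)$ out of the supremum over $\y$, identify $D^*_\lag(\x|\w)$ with the primal suboptimality gap $\sup_{\y\in\Y}\lag(\x,\y,\w)-\min_{\widetilde\x\in\X}\max_{\y\in\Y}\lag(\widetilde\x,\y,\w)$, and read off both implications from the definition of a minimizer. Your explicit remark that ``$\x$ minimizes~\eqref{eq:embedded-opt-problem}'' means minimizing the primal objective $g(\x)=\sup_{\y\in\Y}\lag(\x,\y,\w)$ over $\X$ is exactly the reading the paper's proof uses implicitly.
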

\begin{proof}[Proof of \Lemmaref{lemma:1}]
    If $\x\in\X$ minimizes~\eqref{eq:embedded-opt-problem}, this means
    \begin{align}
        \sup_{\y\in\Y}\lag (\x, \y, \w)
        &= \inf_{\widetilde\x\in\X}\sup_{\y\in\Y}\lag (\widetilde\x, \y, \w)
    \end{align}
    and therefore
    \begin{align}
        D^*_\lag(\x | \w) 
        &= \sup_{\y\in\Y}\bigl[\lag (\x, \y, \w) - \lag^*(\w)\bigr]
            \\
        &= \min_{\widetilde\x\in\X}\max_{\y\in\Y}\bigl[\lag (\widetilde\x, \y, \w)\bigr] - \lag^*(\w)
        = 0.
    \end{align}
    If $D^*_\lag(\x | \w)=0$, we have from the definition of the Lagrangian divergence~\eqref{eq:lagrangian-divergence} that
    \begin{align}
        \sup_{\y\in\Y}\lag (\x, \y, \w) 
        &= \lag^*(\w)
            \\
        &= \min_{\widetilde\x\in\X}\max_{\y\in\Y}\lag (\widetilde\x, \y, \w)
    \end{align}
    and hence $\x$ is a minimizer of~$\eqref{eq:embedded-opt-problem}$.
\end{proof}

\begin{lemma}[Equation~\eqref{eq:lppm}]\label{lemma:2}
    Assume that $\lag,\ell\in\mathcal{C}^1$ and assume that the solution mappings of optimization~\threeeqrefs{eq:embedded-opt-problem}{eq:lower-lag-moreau-envelope}{eq:upper-lag-moreau-envelope} admit continuous selections $\x^*(\w), \x_\tau(\w), \x^\tau(\w)$ at $\w$. Then
    \begin{align}
        	\nabla \ell_{\tau}(\w) 
            &= \,\inv{\tau}\nabla_\w\bigl[\lag(\w, \z_{\tau}) - \lag(\z^*, \w)\bigr],
            \\
        	\nabla \ell^{\tau}(\w) 
            &= \inv{\tau}\nabla_\w\bigl[\lag(\z^*, \w) - \lag(\z^{\tau}, \w)\bigr].
    \end{align}
\end{lemma}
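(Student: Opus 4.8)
The plan is to read both identities as the \emph{envelope theorem} (Danskin's theorem) for parametric saddle-point problems, applied twice and subtracted. The point is that $\ell_\tau$ and $\lag^*$ are each optimal values of a $\min$--$\max$ problem, and when one differentiates such a value in $\w$, the implicit dependence through the optimizers $\z_\tau,\z^*$ cancels by first-order optimality, leaving only the \emph{partial} derivative of $\lag$ in its explicit last slot. This is exactly why the hypotheses only demand continuity — not differentiability — of the selections $\x^*,\x_\tau,\x^\tau$.

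First I would rewrite the lower envelope as a difference of two optimal-value functions,
$$\ell_\tau(\w) = \Bigl(\min_{\x\in\X}\max_{\y\in\Y} \bigl[\ell(\x) + \inv\tau\lag(\x,\y,\w)\bigr]\Bigr) - \inv\tau\,\lag^*(\w),$$
whose saddle points are $\z_\tau=(\x_\tau,\y_\tau)$ and $\z^*=(\x^*,\y^*)$. The crux is the following saddle-point envelope statement: if $G\in\mathcal C^1$, strong duality holds, and $V(\w)=\min_{\x\in\X}\max_{\y\in\Y}G(\x,\y,\w)$ has a saddle point $\bar\z(\w)=(\bar\x(\w),\bar\y(\w))$ that is the unique optimizer and depends continuously on $\w$, then $V$ is differentiable at $\w$ with $\nabla V(\w)=\nabla_\w G(\bar\z(\w),\w)$, the gradient taken only in the explicit $\w$-slot. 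Granting this, I apply it once to $\lag^*$ (with $G=\lag$) to get $\nabla_\w\lag^*(\w)=\nabla_\w\lag(\z^*,\w)$, and once to the bracketed problem (with $G=\ell+\inv\tau\lag$, whose $\w$-gradient is $\inv\tau\nabla_\w\lag$ since $\ell$ is $\w$-independent) to get $\inv\tau\nabla_\w\lag(\z_\tau,\w)$. Subtracting yields
$$\nabla\ell_\tau(\w)=\inv\tau\nabla_\w\lag(\z_\tau,\w)-\inv\tau\nabla_\w\lag(\z^*,\w)=\inv\tau\nabla_\w\bigl[\lag(\z_\tau,\w)-\lag(\z^*,\w)\bigr],$$
which is the first claim. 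The upper envelope is the mirror image: writing $\ell^\tau(\w)=\bigl(\max_{\x}\min_{\y}[\ell(\x)-\inv\tau\lag(\x,\y,\w)]\bigr)+\inv\tau\lag^*(\w)$ and applying the analogous $\max$--$\min$ envelope theorem gives $\nabla\ell^\tau(\w)=\inv\tau\nabla_\w[\lag(\z^*,\w)-\lag(\z^\tau,\w)]$.

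The main obstacle, and the only place real work is needed, is the saddle-point envelope statement under continuity-only assumptions. I would prove it by the classical two-sided sandwich. For the upper bound, freeze the primal variable at $\bar\x(\w)$: feasibility for the outer minimization gives $V(\w+h)\le\max_\y G(\bar\x(\w),\y,\w+h)=G(\bar\x(\w),\y^\dagger,\w+h)$ for the maximizer $\y^\dagger$, and the saddle inequality $G(\bar\x(\w),\y^\dagger,\w)\le V(\w)$ yields $V(\w+h)-V(\w)\le G(\bar\x(\w),\y^\dagger,\w+h)-G(\bar\x(\w),\y^\dagger,\w)$; the mean value theorem plus $\y^\dagger\to\bar\y(\w)$ (by continuity of the maximizer) and continuity of $\nabla_\w G$ then bound this above by $\langle\nabla_\w G(\bar\z(\w),\w),h\rangle+o(\|h\|)$. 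For the matching lower bound I invoke strong duality to write $V$ as a $\max$--$\min$, freeze the dual at $\bar\y(\w)$, and run the symmetric argument with a minimizer $\x^\dagger$. Squeezing the difference quotient between the two bounds establishes differentiability and the formula. The assumed strong duality for $\ell_\tau,\ell^\tau$ (and for $\lag^*$) is precisely what makes both one-sided bounds available, the continuity of the selections is what drives $\y^\dagger\to\bar\y(\w)$ (resp.\ $\x^\dagger\to\bar\x(\w)$), and uniqueness of the optimizer at $\w$ guarantees the limiting gradient is a genuine derivative rather than a one-sided directional one.
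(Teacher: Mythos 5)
Your proof is correct and follows essentially the same route as the paper: both decompose $\ell_\tau$ (and $\ell^\tau$) into a difference of two parametric saddle-point value functions and then invoke an envelope theorem to replace the total $\w$-derivative of each value by the partial gradient $\nabla_\w\lag$ evaluated at the respective optimizers $\z_\tau$ and $\z^*$. The only difference is that where the paper cites \citet[Proposition~4.1]{oyama2018on} for this key step, you prove it directly via the classical Danskin-type sandwich argument (freezing the primal for the upper bound, using strong duality and freezing the dual for the lower bound), which is a self-contained substitute for the cited result under the same uniqueness-and-continuity regime.
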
    

\begin{proof}[Proof of~\Lemmaref{lemma:2}]
    We assumed that $\z^*(\w)$ is a selection of the solution set continuous at $\w$. 
    We also assumed that $\lag$ and $\ell$ are continuously differentiable.
    It then follows that
    \begin{align}
        \nabla_\w \ell_{\tau}(\w) 
            &= \nabla_\w\min_{\x\in\X} \max_{\y\in\Y} \ell(\x) + \inv{\tau} D_\lag(\x, \y| \w)
                \\
            \begin{split}
            &= \nabla_\w\bigl[\min_{\x\in\X}\max_{\y\in\Y} [\ell(\x) + \inv{\tau} \lag(\x, \y, \w)] 
                \\
            &\qquad - \min_{\x\in\X}\max_{\y\in\Y} \inv{\tau} \lag(\x, \y, \w) \bigr]
            \end{split}
                \\
            \begin{split}
            &= \nabla_\w\bigl[\min_{\x\in\X}\max_{\y\in\Y} \ell(\x) + \inv{\tau} \lag(\x, \y, \w)\bigr] 
                \\
            &\qquad - \nabla_\w\bigl[\min_{\x\in\X}\max_{\y\in\Y} \inv{\tau} \lag(\x, \y, \w)\bigr]
            \end{split}
                \\
            \begin{split}
            &= \nabla_\w\bigl[\ell(\x_\tau) + \inv{\tau} \lag(\x_\tau, \y_\tau, \w)\bigr] 
                \\
            &\qquad - \nabla_\w\bigl[\inv{\tau} \lag(\x^*, \y^*, \w)\bigr]
            \end{split}
                \\
            &= \inv{\tau}\nabla_\w\bigl[\lag(\w, \z_{\tau}) - \lag(\z^*, \w)\bigr]
    \end{align}
    In the fourth equation we used the result by~\citet[Proposition~4.1]{oyama2018on}.
    The proof for the upper envelope is analogous.
\end{proof}

\begingroup
\def\thetheorem{\ref{prop:lipschitz}}
\begin{proposition}
    Assume that $\lag$ is $L$-Lipschitz continuous in $\w$. 
    Then $\ell_\tau,\ell\mathmiddlescript\tau,\ell^\tau$ are $\tfrac{2L}{\tau}$-Lipschitz continuous in $\w$.
\end{proposition}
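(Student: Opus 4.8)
The plan is to reduce the proposition to a single elementary fact: the saddle-value functional $h\mapsto\min_{\x\in\X}\max_{\y\in\Y}h(\x,\y)$ is nonexpansive with respect to uniform perturbations of its argument. First I would prove this lemma. Suppose $\phi,\psi\colon\X\times\Y\to\sR$ satisfy $\sup_{\x,\y}|\phi(\x,\y)-\psi(\x,\y)|\le c$. Then $\phi(\x,\y)\le\psi(\x,\y)+c$ for every $(\x,\y)$; since the additive constant $c$ is unaffected by the inner $\max_\y$ and the outer $\min_\x$, applying these two monotone operations preserves the inequality and yields $\min_\x\max_\y\phi\le\min_\x\max_\y\psi+c$. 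Exchanging the roles of $\phi$ and $\psi$ gives the reverse bound, so
\[
\Bigl|\min_\x\max_\y\phi-\min_\x\max_\y\psi\Bigr|\le\sup_{\x,\y}|\phi-\psi|.
\]
The identical argument holds for the $\max_\x\min_\y$ ordering and does not require the extrema to be attained.

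Next I would isolate the $\w$-dependence of the envelope into two separate saddle-value terms. Using $D_\lag(\x,\y|\w)=\lag(\x,\y,\w)-\lag^*(\w)$ and pulling the constant $\lag^*(\w)$ out of the inner optimization,
\[
\ell_\tau(\w)=\min_{\x\in\X}\max_{\y\in\Y}\Bigl[\ell(\x)+\inv\tau\lag(\x,\y,\w)\Bigr]-\inv\tau\lag^*(\w).
\]
Fixing $\w_1,\w_2$ and noting that $\ell(\x)$ carries no $\w$-dependence, the bracketed integrand differs between $\w_1$ and $\w_2$ by at most $\inv\tau L\|\w_1-\w_2\|$ uniformly in $(\x,\y)$, so the lemma bounds the difference of the first terms by $\inv\tau L\|\w_1-\w_2\|$. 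Applying the lemma a second time to $\lag^*(\w)=\min_\x\max_\y\lag(\x,\y,\w)$ shows $\lag^*$ is $L$-Lipschitz, contributing a further $\inv\tau L\|\w_1-\w_2\|$. The triangle inequality then delivers the claimed $\tfrac{2L}\tau$ bound for $\ell_\tau$, the factor two being exactly the two saddle-value pieces. The bound for $\ell^\tau$ is obtained identically after writing $\ell^\tau(\w)=\max_\x\min_\y[\ell(\x)-\inv\tau\lag(\x,\y,\w)]+\inv\tau\lag^*(\w)$, and the bound for $\ell\mathmiddlescript\tau=\inv2(\ell_\tau+\ell^\tau)$ follows by the triangle inequality since the two $\tfrac{2L}\tau$ constants average back to $\tfrac{2L}\tau$.

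I expect the only step requiring genuine care to be the nonexpansiveness lemma, specifically verifying that the additive constant passes unchanged through both the inner $\max$ and the outer $\min$, and that the argument is insensitive to whether the extrema are attained (it holds verbatim for $\sup$/$\inf$). Everything else is bookkeeping with the triangle inequality; in particular, no differentiability or strong-duality hypotheses on the envelope are needed here, only the uniform Lipschitz control of $\lag$ in $\w$.
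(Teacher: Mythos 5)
Your proof is correct and follows essentially the same route as the paper's: you write $\ell_\tau(\w)$ as a difference of two optimal-value functions (the $\tau\ell$-perturbed saddle value and $\tfrac1\tau\lag^*(\w)$), show each is Lipschitz because the min-max of a uniformly $L$-Lipschitz family is $L$-Lipschitz, and conclude by the triangle inequality, exactly as in the paper. The only difference is presentational: where the paper cites \citet[Proposition~1.32]{lipschitz2018weaver} for the Lipschitz stability of $\inf$/$\sup$, you prove the underlying nonexpansiveness of the saddle-value functional inline, which makes the argument self-contained but is not a different approach.
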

\addtocounter{theorem}{-1}
\endgroup

\begin{proof}[Proof of Proposition~\ref{prop:lipschitz}]
    We have the optimal Lagrangian
    \begin{align}
        \lag^*(\w) = \min_{\x\in\X}\max_{\y\in\Y} \lag(\x,\y,\w)
    \end{align}
    and define
    \begin{align}
        \lag_\tau^*(\w) \define \min_{\x\in\X}\max_{\y\in\Y} \lag(\x,\y,\w) + \tau\ell(\x).
    \end{align}
    Then the lower $\lag$-envelope may be written as
    \begin{align}\label{eq:enveloe-as-diff}
        \ell_\tau(\w) = \inv\tau\bigl[ \lag_\tau^*(\w) - \lag^*(\w)\bigr].
    \end{align}
    If $\lag$ is $L$-Lipschitz in $\w$ for every $\x\in\X,\y\in\Y$, where $L$ is independent of $\x$ and $\y$, then both $\lag^*$ and $\lag^*_\tau$ are $L$-Lipschitz, see \citet[Proposition~1.32]{lipschitz2018weaver}. Hence, $\ell_\tau$ is $\tfrac{2L}{\tau}$-Lipschitz.
    The cases of $\ell\mathmiddlescript\tau$ and $\ell^\tau$ are analogous.
\end{proof}

\begingroup
\def\thetheorem{\ref{prop:tau-to-zero}}
\begin{proposition}
    Assume $\lag,\ell$ are lower semi-continuous and $\ell$ is finite-valued on $\X$.
    Let $\w$ be a parameter for which
    \begin{align}
        X^*(\w)
        &\define \{\x\in\X \mid \D^*_\lag(\x |\w) = 0\}
    \end{align}
    is nonempty. 
    Then it holds that
    \begin{align}\label{eq:tau-to-zero-envelope-app}
        \lim_{\tau\rightarrow 0}\ell_\tau(\w)
        =\min_{\x^*\in{X^*(\w)}} \ell(\x^*)
    \end{align}
    and
    \begin{align}
        \lim_{\tau\rightarrow 0}\x^*_\tau(\w)
        \in \argmin_{\x\in X^*(\w)} \ell(\x^*)
    \end{align}
    whenever the limit exists.
\end{proposition}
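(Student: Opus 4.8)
The plan is to treat this as a penalty-method (epigraphical) limit, exploiting that as $\tau\downarrow 0$ the weight $\inv\tau$ on the nonnegative penalty $D^*_\lag(\cdot\,|\,\w)$ blows up, forcing the minimizer onto its zero-set $X^*(\w)$. Write $\phi_\tau(\x)\define\ell(\x)+\inv\tau D^*_\lag(\x\,|\,\w)$, so that $\ell_\tau(\w)=\inf_{\x\in\X}\phi_\tau(\x)$. First I would record two elementary facts. Since $D^*_\lag(\x\,|\,\w)\ge 0$ and $\inv\tau$ increases as $\tau\downarrow 0$, the family $\phi_\tau$ is pointwise non-decreasing, hence $\tau\mapsto\ell_\tau(\w)$ is non-decreasing and $\lim_{\tau\to 0}\ell_\tau(\w)$ exists as a supremum over $\tau$. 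For the upper bound, any $\x^*\in X^*(\w)$ satisfies $D^*_\lag(\x^*\,|\,\w)=0$, so $\phi_\tau(\x^*)=\ell(\x^*)$ and therefore $\ell_\tau(\w)\le\inf_{\x^*\in X^*(\w)}\ell(\x^*)=:m$ for every $\tau$. This already gives $\lim_{\tau\to0}\ell_\tau(\w)\le m$.

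The substance is the matching lower bound. I would fix a sequence $\tau_k\downarrow 0$ with minimizers $\x_k\define\x^*_{\tau_k}(\w)$. From $\ell(\x_k)\le\phi_{\tau_k}(\x_k)=\ell_{\tau_k}(\w)\le m$ one extracts the key estimate
\[
0\le D^*_\lag(\x_k\,|\,\w)\le\tau_k\bigl(m-\ell(\x_k)\bigr).
\]
Assuming the limit $\x_\infty=\lim_k\x_k$ exists (the stated hypothesis; under compactness of $\X$ one instead passes to a convergent subsequence), lower semicontinuity of $\ell$ gives $\liminf_k\ell(\x_k)\ge\ell(\x_\infty)>-\infty$, so $m-\ell(\x_k)$ stays bounded and the right-hand side tends to $0$; thus $D^*_\lag(\x_k\,|\,\w)\to0$. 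Now $D^*_\lag(\cdot\,|\,\w)$ is a pointwise supremum over $\y\in\Y$ of the lower semicontinuous maps $\x\mapsto\lag(\x,\y,\w)-\lag^*(\w)$, hence itself lower semicontinuous; combined with nonnegativity this yields $0\le D^*_\lag(\x_\infty\,|\,\w)\le\liminf_k D^*_\lag(\x_k\,|\,\w)=0$, so $\x_\infty\in X^*(\w)$.

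It then remains to identify the values. Membership $\x_\infty\in X^*(\w)$ forces $\ell(\x_\infty)\ge m$, while lower semicontinuity together with $\ell(\x_k)\le\ell_{\tau_k}(\w)\le m$ gives $\ell(\x_\infty)\le\liminf_k\ell(\x_k)\le m$; hence $\ell(\x_\infty)=m$, proving $\x_\infty\in\argmin_{\x\in X^*(\w)}\ell(\x)$ and in particular that $m$ is attained. Squeezing $\ell(\x_k)\le\ell_{\tau_k}(\w)\le m$ with $\ell(\x_k)\to m$ finally yields $\lim_k\ell_{\tau_k}(\w)=m$, which with the monotone limit from the first step gives $\lim_{\tau\to0}\ell_\tau(\w)=m$. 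I expect the main obstacle to be precisely this lower bound: the interchange of the infimum with the increasing pointwise limit of $\phi_\tau$ is not automatic, and controlling it rigorously is what forces the appeal to convergence of the minimizers (or compactness of $\X$) and to the lower semicontinuity of both $\ell$ and $D^*_\lag(\cdot\,|\,\w)$.
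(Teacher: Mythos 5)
Your proof is correct and follows essentially the same route as the paper's: monotonicity of $\tau\mapsto\ell_\tau(\w)$ plus evaluation at any $\x^*\in X^*(\w)$ for the upper bound, then lower semicontinuity of $\ell$ and of $D^*_\lag(\cdot\,|\,\w)$ (as a supremum of lsc functions) applied to the convergent sequence of minimizers to place the limit point in $X^*(\w)$ and identify the values. The only difference is cosmetic: you extract the explicit estimate $0\le D^*_\lag(\x_k\,|\,\w)\le\tau_k\bigl(m-\ell(\x_k)\bigr)$, whereas the paper argues via superadditivity of $\liminf$ applied to $\ell_\tau=\ell(\x_\tau)+\tfrac1\tau D^*_\lag(\x_\tau\,|\,\w)$ to bound $\liminf_\tau \tfrac1\tau D^*_\lag(\x_\tau\,|\,\w)$; both hinge on the same ingredients and hypotheses.
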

\addtocounter{theorem}{-1}
\endgroup

\begin{proof}[Proof of Proposition~\ref{prop:tau-to-zero}]
    Throughout the proof, let $\w$ fixed, and hence we omit the dependence in the notation. For $\tau>0$ we define $f_\tau\colon\X \rightarrow \sR$ by
    \begin{align}\label{eq:f-definition}
        f_\tau(\x) = \ell(\x) + \inv\tau D_\lag^*(\x)
        \quad \text{for $\x\in\X$.}
    \end{align}
    We have the monotonicity
    \begin{align}
        f_\tau \leq f_{\tau'} \quad \text{whenever} \quad \tau' \leq \tau,
    \end{align}
    since $D^*_\lag(\x)\geq 0$, and taking the infimum over $\x\in\X$ on both sides leads to
    \begin{align}
        \ell_\tau \leq \ell_{\tau'} \quad \text{whenever} \quad \tau' \leq \tau.
    \end{align}
    Moreover, for any $\tau>0$ and all $\x^*\in X^*$, it is
    \begin{align}
        \ell_\tau
            &
        = \min_{\x\in\X}f_\tau(\x) 
        \leq f_\tau(\x^*) 
        = \ell(\x^*) \label{eq:f-upper-bound}.
    \end{align}

    Therefore $\ell_\tau$ is bounded and monotone, hence the limit as $\tau\rightarrow 0^+$ exists and
    \begin{align}\label{eq:upper-bound-l0}
        \ell_0
            &
        \define \lim_{\tau\rightarrow 0^+} \ell_\tau
        \leq \inf_{\x^*\in X^*} \ell(\x^*).
    \end{align}
    
   Denote by $(\x_\tau)_{\tau>0}$ minimizers of $\ell_\tau$, \ie $\ell_\tau = f_\tau(\x_\tau)$. Assume that $\x_\tau \rightarrow \x_0\in\X$ as $\tau\rightarrow 0^+$. 
    We show that $\x_0$ minimizes $\ell(\x)$ over $X^*$.

    Since $\ell$ is lower semi-continuous, we have that
    \begin{align}
        \liminf_{\tau\rightarrow 0^+} \ell(\x_\tau) \geq \ell(\x_0)
    \end{align}
    and, by definition, we also have that
    \begin{align}\label{E:ell_tau_in_proof}
        \ell_\tau = \ell(\x_\tau) + \inv\tau D_\lag^*(\x_\tau).
    \end{align}
    Taking the $\liminf$ as $\tau\rightarrow 0^+$ in \eqref{E:ell_tau_in_proof} yields
    \begin{align}
    \begin{split}        
        \ell_0
            &
        = \liminf_{\tau\rightarrow 0^+} \ell_\tau
            \\&
        = \liminf_{\tau\rightarrow 0^+} \bigl[\ell(\x_\tau) + \inv\tau D_\lag^*(\x_\tau)\bigr]
            \\&
        \geq  \liminf_{\tau\rightarrow 0^+} \ell(\x_\tau) + \liminf_{\tau\rightarrow 0^+}\inv\tau D_\lag^*(\x_\tau)
            \\&
        \geq  \ell(\x_0) + \liminf_{\tau\rightarrow 0^+}\inv\tau D_\lag^*(\x_\tau).\label{eq:divergence-limit-bounded}
    \end{split}
    \end{align}
    Next, since $D_\lag^*(\x)\geq 0$, inequality~\eqref{eq:divergence-limit-bounded}
    reduces to $\ell_0\ge\ell(\x_0)$, which together with
    estimate~\eqref{eq:upper-bound-l0} gives
    \begin{align}
        \ell(\x_0) \leq \inf_{\x^*\in X^*} \ell(\x^*).
    \end{align}
    It remains to show that $\x_0 \in X^*$. From \eqref{eq:divergence-limit-bounded} we know
    \begin{align}\label{eq:liminf-divergence-bound}
        \liminf_{\tau\rightarrow 0^+}\inv\tau D_\lag^*(\x_\tau) < \infty.
    \end{align}
    We assume lower semi-continuity of $\lag(\x,\y)$ in $\x$ for every $\y\in\Y$, therefore the Lagrangian divergence
    \begin{align}
        D_\lag^*(\x) = \sup_{\y\in\Y} \lag(\x,\y) - \lag^*
    \end{align}
    is also lower semi-continuous, hence
    \begin{align}
        \liminf_{\tau\rightarrow 0^+} D_\lag^*(\x_\tau) \geq D^*_\lag(\x_0).
    \end{align}
    Together with \eqref{eq:liminf-divergence-bound}, this gives that $D_\lag^*(\x_0) = 0$, or equivalently, $\x_0 \in X^*$.
\end{proof}

\begingroup
\def\thetheorem{\ref{thm:limit-gradient}}
\begin{theorem}
    Assume that $\lag\in\mathcal{C}^2$ and 
    assume that the solution mapping of optimization~\eqref{eq:embedded-opt-problem} admits a differentiable selection $\x^*(\w)$ at $\w$. Then
    \begin{align}
        \lim_{\tau\rightarrow 0} \nabla  \widetilde\ell_{\tau}(\w) 
        = \nabla_{\w} \ell(\x^*(\w))
        = \lim_{\tau\rightarrow 0} \nabla  \widetilde\ell^{\tau}(\w).
    \end{align}
\end{theorem}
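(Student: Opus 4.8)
The plan is to recognise the \lpgd update as a finite difference in the perturbation parameter $\tau$ of the gradient of a perturbed value function, and to extract the true loss gradient by swapping the order of two differentiations via Schwarz's theorem. First I would freeze $\vg \define \nabla\ell(\x^*(\w))$ as a constant vector, as it is in the backward pass. Using the decomposition $\lag(\x,\y,\w) = \langle\x,\c\rangle + \Omega(\x,\y,\v)$, the perturbed map $\widetilde\z_\tau = \z^*(\c+\tau\vg,\v)$ is exactly the saddle point of the value function
\begin{equation*}
    V(\tau,\w) \define \min_{\x\in\X}\max_{\y\in\Y}\ \lag(\x,\y,\w) + \tau\langle\x,\vg\rangle,
\end{equation*}
since adding $\tau\langle\x,\vg\rangle$ merely shifts $\c\mapsto\c+\tau\vg$. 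Rewriting \eqref{eq:lpgd}, the update is the difference quotient in $\tau$
\begin{equation*}
    \nabla\widetilde\ell_\tau(\w) = \inv\tau\bigl[\nabla_\w\lag(\widetilde\z_\tau,\w) - \nabla_\w\lag(\z^*,\w)\bigr],
\end{equation*}
where $\nabla_\w\lag$ is the partial gradient with the saddle point held fixed.

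Next I would apply the envelope theorem (the Oyama--Takenawa result already used in \Lemmaref{lemma:2}) to $V$, obtaining the two first-order partials
\begin{align*}
    \nabla_\w V(\tau,\w) &= \nabla_\w\lag(\widetilde\z_\tau,\w), &
    \partial_\tau V(\tau,\w) &= \langle\widetilde\x_\tau,\vg\rangle,
\end{align*}
the first because the explicit $\w$-dependence lives in $\lag$, the second because $\partial_\tau$ of the perturbation is $\langle\x,\vg\rangle$ at the optimizer. Since $\widetilde\z_0 = \z^*$, the \lpgd update is precisely the difference quotient of $\tau\mapsto\nabla_\w V(\tau,\w)$ at $\tau=0$, so
\begin{equation*}
    \lim_{\tau\to0}\nabla\widetilde\ell_\tau(\w) = \partial_\tau\nabla_\w V(\tau,\w)\big|_{\tau=0} = \nabla_\w\partial_\tau V(\tau,\w)\big|_{\tau=0},
\end{equation*}
the last equality being Schwarz's theorem. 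Evaluating the right-hand mixed partial gives $\nabla_\w\langle\widetilde\x_\tau,\vg\rangle|_{\tau=0} = \nabla_\w\langle\x^*(\w),\vg\rangle = (\partial\x^*/\partial\w)^\top\vg = \nabla_\w\ell(\x^*(\w))$ by the chain rule, which is the claim; the upper-envelope identity follows identically with the perturbation $\c-\tau\vg$, i.e. $\tau\mapsto-\tau$.

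This viewpoint also exposes the forward-mode connection flagged in the statement: $\partial_\tau\widetilde\z_\tau|_{\tau=0} = (\partial\z^*/\partial\c)\vg$ is a Jacobian--vector product pushing $\vg$ through the solution map, which the \lpgd finite difference approximates, and the symmetry step is exactly what converts this forward-mode quantity into the reverse-mode gradient we seek. I expect the main obstacle to be justifying the interchange of derivatives, namely that $V$ is $\mathcal{C}^2$ near $(0,\w)$ with equal mixed partials. This rests on $\lag\in\mathcal{C}^2$ together with the assumed differentiable selection $\x^*$, which renders the envelope-theorem first partials themselves $\mathcal{C}^1$ in the complementary variable. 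If one prefers to sidestep Schwarz, I would split the update into its $\c$- and $\v$-components: the $\c$-part limits to $(\partial\x^*/\partial\c)\vg$ and matches $(\partial\x^*/\partial\c)^\top\vg$ because $\partial\x^*/\partial\c = \nabla^2_{\c\c}\lag^*$ is symmetric, while the $\v$-part matches $(\partial\x^*/\partial\v)^\top\vg$ via the identity $\nabla^2_{\c\v}\lag^* = (\nabla^2_{\v\c}\lag^*)^\top$; both symmetries are, once more, instances of equality of mixed partials of the optimal value $\lag^*$.
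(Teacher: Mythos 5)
Your proof is correct and is essentially the paper's own argument in lightly repackaged form: writing $V(\tau,\w)=\lag^*(\c+\tau\vg,\v)$ and interchanging $\partial_\tau$ and $\nabla_\w$ via Schwarz is exactly the paper's computation of $\tfrac{\partial^2\lag^*}{\partial^2\w}\,\dbw\w$ followed by symmetry of the Hessian of $\lag^*$, combined with the same Oyama--Takenawa envelope identity $\nabla_\u\lag^*=\z^*$. Even your fallback (symmetry of the mixed blocks $\nabla^2_{\c\v}\lag^*$) and the forward-/reverse-mode remark mirror the paper's proof and its accompanying discussion, so there is nothing further to reconcile.
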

\addtocounter{theorem}{-1}
\endgroup

\begin{proof}[Proof of~\Thmref{thm:limit-gradient}]
    In this proof we work in the more general setup described in \appref{sec:general-loss}, in which the loss $\L$ can depend on both primal and dual optimal solutions.
    We aim to show that for a linear loss approximation $\widetilde\L$, the \lpgd update recovers the true gradient as $\tau$ approaches zero.
    We again assume the same form of the Lagrangian as in~\eqref{eq:lag-decomposition}
    \begin{align}
        \lag(\z, \w) = \langle\z, \u\rangle + \Omega(\z, \v)
    \end{align}
    with $\w=(\u,\v)$ and get from~\citet[Proposition~4.1]{oyama2018on}
    \begin{align}
        \nabla_{\u} \lag^*(\u, \v) &= \nabla_{\u} \lag(\z^*, \u, \v) = \z^*(\u, \v).
    \end{align}
    We define
    \begin{align}
        \dbw\w \define \begin{pmatrix}\nabla_\z \L \\ 0 \end{pmatrix}.
    \end{align}
    Then it holds that
    \begin{align}
        \lim_{\tau\rightarrow 0} &\nabla_\w  \widetilde\L_{\tau}(\w)\nonumber 
            \\
        &= \lim_{\tau\rightarrow 0} \tfrac{1}{\tau} [\nabla_\w\lag^*(\w + \tau \dbw\w) - \nabla_\w\lag^*(\w)]
            \\
        &= \frac{\partial^2 \lag^*}{\partial^2 \w} \dbw\w
        = \frac{\partial^2 \lag^*}{\partial^2 \w}^T \dbw\w
            \\
        &= \frac{\partial^2 \lag^*}{\partial^2 \w}^T  \begin{pmatrix}\nabla_\z \L \\ 0 \end{pmatrix}
            \\
        &= \frac{\partial^2 \lag^*}{\partial \w \partial \u}^T \nabla_\z \L
        = \frac{\partial (\nabla_\u  \lag^*)}{\partial \w}^T \nabla_\z \L
            \\
        &= \frac{\partial\z^*}{\partial \w}^T \nabla_{\z} \L
        = \nabla_\w \L(\z^*(\w)).
    \end{align}
    The main step in this derivation appears in the second-to-last equality by identifying the Jacobian of the solution mapping as a sub-matrix of the Hessian of the optimal Lagrangian function, which is a symmetric matrix under the conditions of Schwarz's theorem. A sufficient condition for this is the assumption that $\lag\in\mathcal{C}^2$.%
    \footnote{Note that a similar derivation already appeared in \citep{domke2010implicit}, but only for primal variables with linear parameters and without considering the benefits of finite values of $\tau$.}
    Exploiting the symmetry of the Hessian then allows computing the gradient, which is a co-derivative (backward-mode, vector-jacobian-product), as the limit of a finite-difference between two solver outputs, which usually only computes a derivative (forward-mode, jacobian-vector-product) from input perturbations $\dfw\w$ as
    \begin{align}
        \dfw \z
        &= \frac{\partial \z^*}{\partial \w} \dfw \w  
            \\
        &= \lim_{\tau\rightarrow 0}\tfrac{1}{\tau}[\z^*(\w+\tau \dfw \w) - \z^*(\w)]
            \\
        &= \lim_{\tau\rightarrow 0}\tfrac{1}{\tau}[\nabla_\u \lag^*(\w+\tau \dfw \w) - \nabla_\u \lag^*(\w)].
    \end{align}
    We observe that the finite-difference appearing in the \lpgd update is the co-derivative counterpart of a finite-difference approximation of the derivative.
    This observation also fosters an interpretation of finite $\tau$ in the \lpgd update:
    In forward-mode, checking how the solver reacts to finite perturbation of the parameters intuitively provides higher-order information than linear sensitivities to infinitesimal perturbations via derivatives.
    In backward-mode, the finite-difference in \lpgd update has the equivalent advantage over standard co-derivatives, by capturing higher-order information instead of linear sensitivities.
    
    Note that for $\L(\x,\y) = \ell(\x)$ this reduces to
    \begin{align}
        \lim_{\tau\rightarrow 0} \nabla_\w  \tilde\ell_{\tau}(\w) 
        = \nabla_\w \ell(\x^*(\w)).
    \end{align}
    An analogous proof and discussion also hold for the upper envelope $\tilde\ell^{\tau}$ and average envelope $\tilde\ell\mathmiddlescript{\tau}$, corresponding to the co-derivative counterparts of the left-sided and central finite-difference approximations of the derivative, respectively.
\end{proof}

\begingroup
\def\thetheorem{\ref{prop:tau-to-infty}}
\begin{proposition}
    Assume $\lag,\ell$ are lower semi-continuous and $\ell$ is finite-valued on $\X$.
    Let $\w$ be a parameter for which
    \begin{align}
        \widehat\X(\w)
            &\define \{\x\in\X \mid \D^*_\lag(\x |\w) < \infty\}
    \end{align}
    is nonempty. Then
    \begin{align}
        \lim_{\tau\rightarrow\infty}\x_\tau(\w)
        \in \argmin_{\x\in{\widehat\X(\w)}} \ell(\x)
    \end{align}
    whenever the limit exists. For a linearized loss, we have
    \begin{align}
        \lim_{\tau\rightarrow\infty}\widetilde\x_\tau(\w)
        \in \argmin_{\x\in{\widehat\X(\w)}} \langle \x, \nabla\ell\rangle
        = \x_{FW}(\w),
    \end{align}
    where $\x_{FW}$ is the solution to a Frank-Wolfe iteration LP~\citep{frank1956algorithm}
\end{proposition}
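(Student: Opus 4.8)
The plan is to mirror the structure of the proof of Proposition~\ref{prop:tau-to-zero}, now exploiting that the penalty weight $\inv\tau$ \emph{vanishes} as $\tau\to\infty$ rather than blowing up. Write $f_\tau(\x)\define\ell(\x)+\inv\tau D^*_\lag(\x)$ on $\X$, so that $\ell_\tau(\w)=\min_{\x\in\X}f_\tau(\x)$ with minimizer $\x_\tau(\w)$; since $\ell_\tau(\w)$ is finite and $\ell$ is finite-valued, necessarily $D^*_\lag(\x_\tau)<\infty$, i.e. $\x_\tau\in\widehat\X(\w)$. First I would record monotonicity: because $D^*_\lag\ge0$, for $\tau'\ge\tau$ we have $f_{\tau'}\le f_\tau$ pointwise, hence $\ell_{\tau'}\le\ell_\tau$, so $\tau\mapsto\ell_\tau(\w)$ is non-increasing.

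Next comes the upper bound: for each fixed $\x\in\widehat\X(\w)$ we have $\ell_\tau\le f_\tau(\x)=\ell(\x)+\inv\tau D^*_\lag(\x)\to\ell(\x)$, and taking the infimum over $\x\in\widehat\X(\w)$ gives $\limsup_{\tau\to\infty}\ell_\tau\le\inf_{\x\in\widehat\X(\w)}\ell(\x)$. For the matching lower bound, assume $\x_\tau\to\x_\infty$; from $D^*_\lag\ge0$ we get $\ell_\tau=\ell(\x_\tau)+\inv\tau D^*_\lag(\x_\tau)\ge\ell(\x_\tau)$, and lower semi-continuity of $\ell$ yields $\liminf_{\tau\to\infty}\ell_\tau\ge\liminf_{\tau\to\infty}\ell(\x_\tau)\ge\ell(\x_\infty)$. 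Combining the two estimates gives $\ell(\x_\infty)\le\inf_{\x\in\widehat\X(\w)}\ell(\x)$, so as soon as $\x_\infty\in\widehat\X(\w)$ this forces equality and hence $\x_\infty\in\argmin_{\x\in\widehat\X(\w)}\ell(\x)$, proving the first claim.

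For the linearized statement I would rerun exactly the same argument with $\ell$ replaced by its linearization $\widetilde\ell(\x)=\ell(\x^*)+\langle\x-\x^*,\nabla\ell\rangle$, whose associated $\lag$-proximal map is $\widetilde\x_\tau(\w)=\z^*(\c+\tau\nabla\ell,\v)$ as in~\eqref{eq:lower-lag-proximal-map-linearized}. Since the constant $\ell(\x^*)$ and the fixed shift $-\langle\x^*,\nabla\ell\rangle$ in $\widetilde\ell$ do not affect the minimizer, the limit lies in $\argmin_{\x\in\widehat\X(\w)}\langle\x,\nabla\ell\rangle$, which is precisely the linear direction-finding subproblem of a Frank--Wolfe step~\citep{frank1956algorithm}, so the limit equals $\x_{FW}(\w)$.

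The main obstacle is certifying the membership $\x_\infty\in\widehat\X(\w)$ needed to close the argument. Unlike the $\tau\to0$ case — where $\inv\tau\to\infty$ forces $D^*_\lag(\x_\tau)\to0$ and lower semi-continuity of $D^*_\lag$ then pins the limit inside $X^*(\w)$ — here $\inv\tau\to0$ provides no upper control on $D^*_\lag(\x_\tau)$, and lower semi-continuity only bounds $D^*_\lag(\x_\infty)$ from below, which is the wrong direction. Consequently one cannot in general deduce $\x_\infty\in\widehat\X(\w)$ from lower semi-continuity alone; the clean resolution is that $\widehat\X(\w)$ is closed in the cases of interest, which holds automatically in the linear-program application since there $\widehat\X(\w)=\X$ is the (closed, typically compact) feasible polytope. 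I would therefore either take closedness of $\widehat\X(\w)$ as a standing hypothesis or interpret the $\argmin$ over $\overline{\widehat\X(\w)}$, under which the combined upper and lower bounds immediately yield optimality of $\x_\infty$.
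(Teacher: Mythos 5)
Your argument has the same skeleton as the paper's proof: the penalized objective $f_\tau(\x)=\ell(\x)+\inv\tau D^*_\lag(\x|\w)$ from \eqref{eq:f-definition}, the observation $\x_\tau\in\widehat\X(\w)$, an upper bound on $\ell_\tau$ obtained by testing $f_\tau$ at a fixed point of $\widehat\X(\w)$, a lower bound from lower semi-continuity of $\ell$, and a one-line reduction of the linearized claim (the paper dispatches that part identically). The genuine divergence is the membership step $\x_\infty\in\widehat\X(\w)$, and there your skepticism is warranted --- indeed it exposes a looseness in the paper's own proof. The paper argues that $D^*_\lag$ is lower semi-continuous, so $\liminf_{n} D^*_\lag(\x_n)\geq D^*_\lag(\x_\infty)$, and then asserts $\liminf_{n} D^*_\lag(\x_n)<\infty$ ``as $\x_n\in\widehat\X$ for all $n$''; but finiteness of each term does not bound the liminf, and minimality of $\x_n$ only yields $D^*_\lag(\x_n)=O(\tau_n)$, which permits blow-up. (One quibble with your diagnosis: lower semi-continuity does act in the useful direction, bounding $D^*_\lag(\x_\infty)$ \emph{above} by $\liminf_n D^*_\lag(\x_n)$; the obstruction is solely that this liminf can be $+\infty$.) The gap is real: take $\X=[0,1]$, $\Y=[0,\infty)$, $\ell(\x)=\x$ and $\lag(\x,\y,\w)=\x+\y-\tfrac12\x\y^2$, so that $\sup_{\y\in\Y}\lag(\x,\y,\w)=\x+\inv{2\x}$, $\lag^*(\w)=\sqrt2$, $D^*_\lag(\x|\w)=\x+\inv{2\x}-\sqrt2$, and $\widehat\X(\w)=(0,1]$; then $\x_\tau=(2(\tau+1))^{-1/2}\rightarrow 0\notin\widehat\X(\w)$ while $\argmin_{\x\in(0,1]}\ell(\x)$ is empty, so the proposition as literally stated fails, and your added closedness hypothesis is exactly what makes it provable --- and it is satisfied in the paper's applications, where $\widehat\X(\w)=\X$ for LPs or is cut out by linear equality constraints. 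One caution on your fallback option: interpreting the $\argmin$ over the closure of $\widehat\X(\w)$ is sound when $\ell$ is continuous, but under mere lower semi-continuity a downward jump of $\ell$ at a closure point that the trajectory avoids can beat $\ell(\x_\infty)$, so the closedness hypothesis is the safer patch. Incidentally, your bound chain $\ell(\x_\infty)\leq\liminf_\tau\ell_\tau\leq\limsup_\tau\ell_\tau\leq\inf_{\x\in\widehat\X(\w)}\ell(\x)$ is slightly leaner than the paper's, which detours through showing that $\inv{\tau_n}D^*_\lag(\x_n|\w)$ converges to $c=0$ --- a fact the final inequality never requires.
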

\addtocounter{theorem}{-1}
\endgroup

\begin{proof}[Proof of Proposition~\ref{prop:tau-to-infty}]
    Throughout the proof, let $\w$ be a fixed parameter, we therefore omit the dependence in the notation. For $\tau>0$ we use $f_\tau\colon\X \rightarrow \sR$ defined in \eqref{eq:f-definition}.
    Since $D_\lag^*\geq 0$ on $\X$, we have that
    \begin{align}\label{eq:f-nonincreasing}
        f_\tau\geq f_{\tau'} \geq \ell \quad \text{on $\X$}
    \end{align}
    whenever $\tau'\geq \tau$.
    Now let $(\x_\tau)_{\tau>0}$ be minimizers of~\eqref{eq:f-definition} such that $\x_\tau \rightarrow \x_\infty\in\X$ as $\tau\rightarrow\infty$. 
    We show that $\x_\infty$ minimizes $\ell(\x)$ over $\widehat\X$.

    To this end, let $(\tau_n)_{n=1}^\infty$ be a non-decreasing sequence such that $\tau_n\rightarrow\infty$ and denote $\x_n=\x_{\tau_n}$ and $f_n=f_{\tau_n}$, for short.
    Since $\widehat\X$ is nonempty and $\ell$ is finite-valued on $\X$, we have that $f_n(\x_n)<\infty$ and $\x_n\in\widehat\X$ for all $n\in\mathbb{N}$.
    
    We assume lower semi-continuity of $\lag(\x,\y)$ in $\x$ for every $\y\in\Y$, therefore the Lagrangian divergence
    \begin{align}
        D_\lag^*(\x) = \sup_{\y\in\Y} \lag(\x,\y) - \lag^*
    \end{align}
    is also lower semi-continuous, \ie
    \begin{align}
        \liminf_{n\rightarrow \infty} D_\lag^*(\x_n) \geq D^*_\lag(\x_\infty).
    \end{align}
    As $\x_n\in\widehat\X$ for all $n\in\mathbb{N}$ it follows that 
    \begin{align}
        \infty > \liminf_{n\rightarrow \infty} D_\lag^*(\x_n) \geq D^*_\lag(\x_\infty),
    \end{align}
    and therefore $\x_\infty\in\widehat\X$ as well.
    
    Next, by~\eqref{eq:f-nonincreasing}, the sequence $f_n(\x_n)$ for $n\in\mathbb{N}$ is nonincreasing and bounded from below (by a minimum of $\ell$ on $\overline{\{\x_n\mid n\in\mathbb{N}\}}$). 
    Therefore, $f_n(\x_n)$ is convergent.
    By lower semi-continuity of $\ell$, we have that $\liminf_{n\rightarrow\infty}\ell(\x_n)\geq\ell(\x_\infty)$ and thus $\inv{\tau_n}D_\lag^*(\x_n|\w)$ converges to some $c\geq 0$.
    Altogether, for any $\hat\x\in\widehat\X(\w)$, we have
    \begin{align}
        \ell(\x_\infty) + c
        &\leq \liminf_{n\rightarrow\infty} \ell(\x_n) + \liminf_{n\rightarrow\infty} \inv{\tau_n}D_\lag^*(\x_n|\w)
            \nonumber\\
        &\leq \liminf_{n\rightarrow\infty}\bigl[\ell(\x_n) + \inv{\tau_n}D_\lag^*(\x_n|\w)\bigr]
            \nonumber\\
        &\leq \lim_{n\rightarrow\infty} \ell(\hat\x) + \inv{\tau_n}D_\lag^*(\hat\x|\w)
            \nonumber\\
        &= \ell(\hat\x).
    \end{align}
    The particular choice $\hat\x=\x_\infty$ shows that $c=0$.
    Therefore $\ell(\x_\infty)\leq \ell(\hat\x)$ for any $\hat\x\in\widehat\X$ as desired.
    The second part of the proposition follows directly by taking a loss linearization.
\end{proof}

\begingroup
\def\thetheorem{\ref{prop:tau-to-infty-augmented}}
\begin{proposition}
    Assume $\ell$ is continuous and finite-valued on $\X$. Let $\w$ be a parameter for which $\widehat\X(\w)$ is nonempty.
    The primal lower $\lag$-proximal map \eqref{eq:augmented-lower-lagrangian-proximal-map} turns into the standard proximal map \eqref{eq:proximal-map}
    \begin{align}
        \lim_{\tau\rightarrow\infty}\x_{\tau\rho}(\w)
        &= \argmin_{\x\in{\widehat\X(\w)}} \bigl[\ell(\x) + \inv{2\rho} \|\x - \x^*\|_2^2\bigr]
            \\
        &= \prox_{\rho\ell + I_{\widehat\X(\w)}}(\x^*),
    \end{align}
    whenever the limit exists. For a linearized loss, it reduces to the Euclidean projection onto $\widehat\X(\w)$
    \begin{align}
        \lim_{\tau\rightarrow\infty}\widetilde\x_{\tau\rho}(\w)
        &= \argmin_{\x\in{\widehat\X(\w)}} \bigl[\langle \x, \nabla \ell\rangle + \inv{2\rho}\|\x - \x^*\|_2^2\bigr]
            \\
        &= P_{\widehat\X(\w)}(\x^* - \rho \nabla\ell).
    \end{align}
\end{proposition}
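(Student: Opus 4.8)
The plan is to reduce the statement to \Propref{prop:tau-to-infty} by absorbing the quadratic penalty into the loss. I fix $\w$ throughout, so that $\x^*=\x^*(\w)$ is a constant, and set
\begin{align}
    \tilde\ell_\rho(\x) \define \ell(\x) + \inv{2\rho}\|\x - \x^*\|_2^2 .
\end{align}
The primal component $\x_{\tau\rho}(\w)$ of \eqref{eq:augmented-lower-lagrangian-proximal-map} is exactly the minimizer over $\X$ of $\tilde\ell_\rho(\x) + \inv{\tau}D^*_\lag(\x\,|\,\w)$, which is precisely the objective $f_\tau$ from the proof of \Propref{prop:tau-to-infty} with $\ell$ replaced by $\tilde\ell_\rho$. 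First I would verify that the hypotheses transfer: $\tilde\ell_\rho$ is lower semi-continuous as a sum of the continuous $\ell$ and a continuous quadratic, it is finite-valued on $\X$, and the effective feasible set $\widehat\X(\w)=\{\x\in\X : D^*_\lag(\x\,|\,\w)<\infty\}$ is unchanged, since adding a penalty that is finite everywhere cannot affect the finiteness of $D^*_\lag(\cdot\,|\,\w)$. Consequently \Propref{prop:tau-to-infty} applies verbatim to $\tilde\ell_\rho$.

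Granting this, the first identity is immediate: \Propref{prop:tau-to-infty} gives
\begin{align}
    \lim_{\tau\rightarrow\infty}\x_{\tau\rho}(\w)
    \in \argmin_{\x\in\widehat\X(\w)} \bigl[\ell(\x) + \inv{2\rho}\|\x - \x^*\|_2^2\bigr],
\end{align}
and it remains only to recognize the right-hand side as a proximal map. Choosing $f=\rho\ell+I_{\widehat\X(\w)}$ in the definition \eqref{eq:proximal-map} with unit step size, and dividing the objective by $\rho>0$ (which leaves the minimizer unchanged), identifies it with $\prox_{\rho\ell+I_{\widehat\X(\w)}}(\x^*)$.

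For the linearized loss I would run the identical reduction with $\ell$ replaced by its linearization $\widetilde\ell(\x)=\ell(\x^*)+\langle\x-\x^*,\nabla\ell\rangle$, which is affine and hence continuous. Dropping the $\x$-independent constants, this yields
\begin{align}
    \lim_{\tau\rightarrow\infty}\widetilde\x_{\tau\rho}(\w)
    \in \argmin_{\x\in\widehat\X(\w)} \bigl[\langle\x,\nabla\ell\rangle + \inv{2\rho}\|\x-\x^*\|_2^2\bigr].
\end{align}
Completing the square rewrites the objective as $\inv{2\rho}\|\x-(\x^*-\rho\nabla\ell)\|_2^2$ up to an additive constant, so its minimizer over $\widehat\X(\w)$ is the Euclidean projection $P_{\widehat\X(\w)}(\x^*-\rho\nabla\ell)$, giving the second identity.

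The only genuinely substantive step is the first one, namely checking that the hypotheses of \Propref{prop:tau-to-infty} survive the augmentation and, in particular, that $\widehat\X(\w)$ is left untouched; everything after that is either definitional (the proximal map) or a one-line completion of squares (the projection). One caveat worth flagging, inherited from \Propref{prop:tau-to-infty}, is that the $\argmin$ need not be single-valued unless $\widehat\X(\w)$ is convex and the objective strictly convex, so in general the limits are characterized by membership; the equalities as stated hold in the strictly convex case (in particular the linearized one) where the minimizer is unique.
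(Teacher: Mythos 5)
Your proposal is correct and takes essentially the same route as the paper: the paper's proof is a one-line reduction to \Propref{prop:tau-to-infty}, re-running its argument with the augmented objective $f_\tau(\x)=\ell(\x)+\inv{\tau}D^*_\lag(\x\,|\,\w)+\inv{2\rho}\|\x-\x^*\|_2^2$, which is exactly your absorption of the quadratic penalty into $\tilde\ell_\rho$. Your extra steps — checking that lower semi-continuity, finiteness, and $\widehat\X(\w)$ survive the augmentation, identifying the prox with $f=\rho\ell+I_{\widehat\X(\w)}$ after rescaling by $\rho$, and completing the square for the projection, plus the caveat on set-valued argmins — merely make explicit what the paper leaves implicit.
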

\addtocounter{theorem}{-1}
\endgroup

\begin{proof}[Proof of Proposition~\ref{prop:tau-to-infty-augmented}]
The proof is analogous to that of Proposition~\ref{prop:tau-to-infty} with
$f_\tau\colon \X \rightarrow \sR$, where
\begin{equation*}
     f_\tau(x)=\ell(\x) + \inv\tau \D^*_\lag(\x |\w) + \inv{2\rho}\| \x - \x^*\|_2^2.
     \qedhere
\end{equation*}
\end{proof}

\begingroup
\def\thetheorem{\ref{prop:approximation-error}}
\begin{proposition}
    Assume that the solvers for \twoeqrefs{eq:embedded-opt-problem}{eq:lower-lag-proximal-map} return $\varepsilon$-accurate solutions $\widehat\z^*,\widehat\z_\tau$ with $\delta$-accurate objective values, \ie
    \begin{align}
        \|\widehat\z^* - \z^*\| &\leq \varepsilon,
        \\
        |\lag(\widehat\z^*,\w) - \lag(\z^*,\w)| &\leq \delta,
        \\
        \|\widehat\z_\tau - \z_\tau\| &\leq \varepsilon,
        \\
        \bigl|[\lag(\widehat\z_\tau,\w) + \tau\ell(\widehat\x_\tau)] - [\lag(\z_\tau,\w) + \tau\ell(\x_\tau)]\bigr| &\leq \delta.
    \end{align}
    Then for the approximate lower $\lag$-envelope
    \begin{align}
        \widehat\ell_\tau(\w) \define \ell(\widehat\x_\tau) + \inv\tau \bigl[\lag(\widehat\z_\tau,\w) - \lag(\widehat\z^*,\w)\bigr]
    \end{align}
    it holds that
    \begin{align}
        |\widehat\ell_\tau(\w) - \ell_\tau(\w)| \leq \frac{2\delta}{\tau}.
    \end{align}
    Moreover, if $\nabla_\w\lag(\x,\y,\w)$ is $L$-Lipschitz continuous in $(\x,\y)\in\X\times\Y$, it holds that
    \begin{align}
        \|\nabla\widehat\ell_\tau(\w) - \nabla\ell_\tau(\w)\| \leq \frac{2L\varepsilon}{\tau}.
    \end{align}
\end{proposition}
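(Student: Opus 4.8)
The plan is to establish both inequalities by elementary triangle-inequality estimates once the exact and approximate envelopes are written in a common algebraic form. I would first record the exact envelope value explicitly: using the saddle-point identity $\lag^*(\w)=\lag(\z^*,\w)$ and the minimizing selection $\z_\tau=(\x_\tau,\y_\tau)$, the envelope \eqref{eq:lower-lag-moreau-envelope} reads $\ell_\tau(\w)=\ell(\x_\tau)+\inv\tau[\lag(\z_\tau,\w)-\lag(\z^*,\w)]$, which has precisely the same shape as the stated $\widehat\ell_\tau$ but with exact rather than approximate solver outputs.

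For the objective-value bound I would multiply the difference $\widehat\ell_\tau(\w)-\ell_\tau(\w)$ by $\tau$ and regroup the resulting terms so that the pieces carrying $\tau\ell$ are merged with the corresponding $\lag(\cdot_\tau,\w)$-terms. The key observation is that the combination $\lag(\z_\tau,\w)+\tau\ell(\x_\tau)$ is exactly the objective of the perturbed proximal-map problem \eqref{eq:lower-lag-proximal-map}, so the fourth accuracy hypothesis bounds the quantity $|[\lag(\widehat\z_\tau,\w)+\tau\ell(\widehat\x_\tau)]-[\lag(\z_\tau,\w)+\tau\ell(\x_\tau)]|$ by $\delta$ directly, while the second hypothesis bounds the remaining $|\lag(\widehat\z^*,\w)-\lag(\z^*,\w)|$ by $\delta$. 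A single triangle inequality then gives $\tau|\widehat\ell_\tau(\w)-\ell_\tau(\w)|\le 2\delta$, and dividing by $\tau$ yields the claim.

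For the gradient bound I would invoke Lemma~\ref{lemma:2}, which (via the envelope identity of \citet{oyama2018on}) expresses the exact gradient as $\nabla\ell_\tau(\w)=\inv\tau[\nabla_\w\lag(\z_\tau,\w)-\nabla_\w\lag(\z^*,\w)]$, i.e. only the \emph{explicit} $\w$-dependence of $\lag$ contributes. Differentiating $\widehat\ell_\tau$ the same way — treating the detached solver outputs $\widehat\z_\tau,\widehat\z^*$ as constants and noting that $\ell(\widehat\x_\tau)$ is $\w$-independent — gives the analogous expression with hatted arguments. Subtracting, applying the triangle inequality, and then using the $L$-Lipschitz continuity of $(\x,\y)\mapsto\nabla_\w\lag(\x,\y,\w)$ together with $\|\widehat\z_\tau-\z_\tau\|\le\varepsilon$ and $\|\widehat\z^*-\z^*\|\le\varepsilon$ bounds each of the two terms by $L\varepsilon/\tau$, for a total of $2L\varepsilon/\tau$.

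No step is genuinely hard; the only point requiring care is conceptual rather than computational. In the first part one must align the regrouping precisely with the objective that the solver for \eqref{eq:lower-lag-proximal-map} is assumed to solve $\delta$-accurately, since the loss term $\tau\ell$ is otherwise not covered by any single hypothesis. In the second part one must justify that differentiating the approximate envelope hits only the explicit Lagrangian $\w$-dependence, so that no derivative of the (inexact) solution mapping — and hence no separate error term for those derivatives — enters the estimate; this is what keeps the final bound clean and linear in $\varepsilon$.
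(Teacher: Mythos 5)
Your proposal is correct and follows essentially the same route as the paper's proof: for the value bound you regroup the terms so that the fourth accuracy hypothesis covers the combined perturbed objective $\lag(\z_\tau,\w)+\tau\ell(\x_\tau)$ and the second hypothesis covers $\lag(\z^*,\w)=\lag^*(\w)$, which is exactly the paper's triangle-inequality split; for the gradient bound you use the envelope-theorem expression from Lemma~\ref{lemma:2} (only explicit $\w$-dependence, solver outputs detached), subtract, regroup, and apply the $L$-Lipschitz hypothesis twice, again precisely as in the paper. Your explicit remark that the approximate gradient is defined by treating $\widehat\z_\tau,\widehat\z^*$ as constants makes transparent a step the paper leaves implicit, and your regrouped difference even fixes a small sign typo in the paper's displayed intermediate expression.
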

\addtocounter{theorem}{-1}
\endgroup

\begin{proof}[Proof of Proposition~\ref{prop:approximation-error}]
    We have
    \begin{align*}
        &|\widehat\ell_\tau(\w) - \ell_\tau(\w)|
        \\
        &\quad\leq \bigl|\bigl[\ell(\widehat\x_\tau) + \inv\tau \lag(\widehat\z_\tau,\w)\bigr] - 
        \bigl[\ell(\x_\tau) + \inv\tau \lag(\z_\tau,\w)\bigr]\bigr|
        \\
        &\qquad+ \bigl|\inv\tau\lag(\widehat\z^*,\w) - \inv\tau\lag(\z^*,\w)\bigr|
        \leq \tfrac{2\delta}{\tau},
    \end{align*}
    which shows the first part of the proposition.
    The second part follows from
    \begin{align}
    \begin{split}
        &\bigl\|\nabla\widehat\ell_\tau(\w) - \nabla\ell_\tau(\w)\bigr\|
        \\
        &\quad\leq \bigl\|\inv\tau\bigl[\nabla_\w\lag(\widehat\z_\tau,\w) - \nabla_\w\lag(\widehat\z^*,\w)\bigr] 
        \\
        &\qquad+ \inv\tau\bigl[\nabla_\w\lag(\z_\tau,\w) - \nabla_\w\lag(\z^*,\w)\bigr]\bigr\|
        \\
        &\quad\leq \inv\tau\bigl\|\nabla_\w\lag(\widehat\z_\tau,\w) - \nabla_\w\lag(\z_\tau,\w)\bigr\|
        \\
        &\qquad+ \inv\tau\bigl\|\nabla_\w\lag(\widehat\z^*,\w) - \nabla_\w\lag(\z^*,\w)\bigr\|
        \\
        &\quad\leq \tfrac{L}{\tau}\|\widehat\z_\tau - \z_\tau\| + \tfrac{L}{\tau}\|\widehat\z^* - \z^*\|
        \leq\tfrac{2L\varepsilon}{\tau}.
    \end{split}
    \end{align}
    The approximation of the upper $\lag$-envelope has an analogous proof.
\end{proof}

\section{Experiments}\label{sec:experimental-details}

\subsection{Learning the Rules of Sudoku}\label{sec:experimental-details-sudoku}

Instead of the mini-Sudoku case ($4\times 4$ grid) in \citet{amos2017optnet}, we consider the full $9\times 9$ Sudoku grid. 
The Sudoku board is modelled as a one-hot-encoding, with incomplete input and solved label $\x_\text{inc}, \x_\text{true}\in\{0,1\}^{9\times 9 \times 9}$.
The optimization problem is modelled as a generic box-constrained linear program
\begin{align}\label{eq:sudoku-linear-program}
\begin{split}
    \x^*(A, \b; \x_\text{inc}) 
    = &\argmin_{\x\in\X} \langle \x, \x_\text{inc} \rangle 
        \\
    &\text{subject to}\quad  A\x + \b = 0
\end{split}
\end{align}
with $\X=[0, 1]^{9\times 9 \times 9}$ and a sufficient number of constraints $m$ to represent the rules of the LP Sudoku.\!%
\footnote{The formulation as an LP differs from the original formulation by \citet{amos2017optnet}, in which a quadratic regularizer has to be added to meet the method requirements.}
The saddle-point formulation of \eqref{eq:sudoku-linear-program} is
\begin{align}\label{eq:sudoku-saddlepoint}
    \z^*(A, \b; \x_\text{inc}) 
    = \arg\min_{\x\in\X}\max_{\y\in\sR^m} \lag(\x, \y, A, \b, \x_\text{inc}),
\end{align}
with $\lag(\x, \y, A, \b, \c) = \langle \x, \c \rangle + \langle \y, A\x + \b \rangle$.
Note that we have the effective feasible set
\begin{align}
    \widehat\X(A,b) &= \{\x\in[0, 1]^{9\times 9 \times 9} \mid A\x + \b = 0\}.
\end{align}
The constraint parameters $A$ and $\b$ are themselves parameterized such that at least one feasible point exists, which means that the optimization problem has a finite optimal solution, implying strong duality of~\eqref{eq:sudoku-saddlepoint}.

We follow the training protocol described by \citet{amos2017optnet} that minimizes the mean square error between predictions $\x^*(A, \b; \x_\text{inc})$ and one-hot encodings of the correctly solved Sudokus $\x_\text{true}$.
For evaluation, we follow \citet{amos2017optnet} in refining the predictions by taking an argmax over the one-hot dimension and report the percentage of violated ground-truth Sudoku constraints as the error. 

We modify the public codebase from \cite{amos2017optnet-repo}.
The dataset consists of $9000$ training and $1000$ test instances.
We choose the hyperparameters learning rate $\alpha$, $\tau$ and $\rho$ with a grid search. 
The best hyperparameters for \lpgd are $\tau=10^4$, $\rho=0.1$, $\alpha=0.1$, for gradient descent they are $\rho=10^{-3}$, $\alpha=0.1$. We use these hyperparameters in our evaluation.

Additional metrics are reported in \Figref{fig:sudoku-train-curves-all} and \Figref{fig:sudoku-train-test-error}. 
We observe that there is not significant difference between train and test metrics, which shows that our formulation of Sudoku \eqref{eq:sudoku-linear-program} allows generalizing across instances. 
The results also contain the upper and lower variations \lpgdl and \lpgdu in addition to the average \lpgda. 
We observe that \lpgda outperforms \lpgdu and \lpgdl, highlighting that both the lower and upper envelopes carry relevant information for the optimization.
This is intuitively understandable by considering that in \Figref{fig:envelope-visualization-rho} \lpgdl and \lpgdu provide non-zero gradients in different subsets of the domain, while \lpgda gives informative gradients in both subsets.

\begin{figure}[!htb]
    \centering
        \smallskip
        \smallskip
        \includegraphics[width=\linewidth]{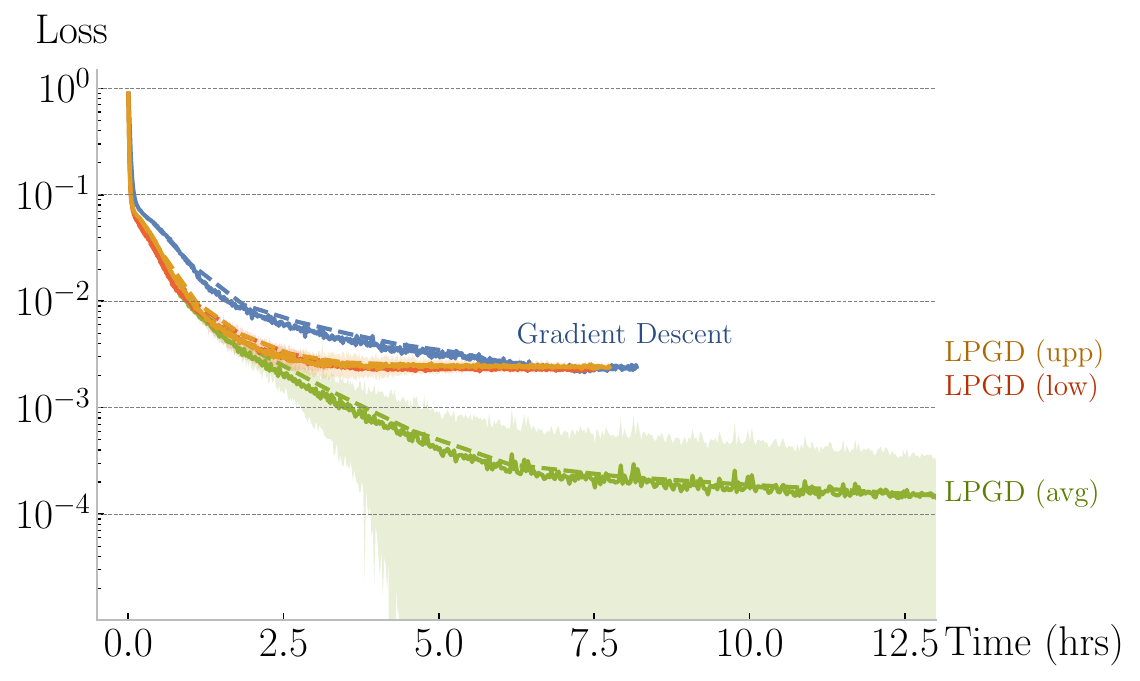}
        \smallskip
        \includegraphics[width=\linewidth]{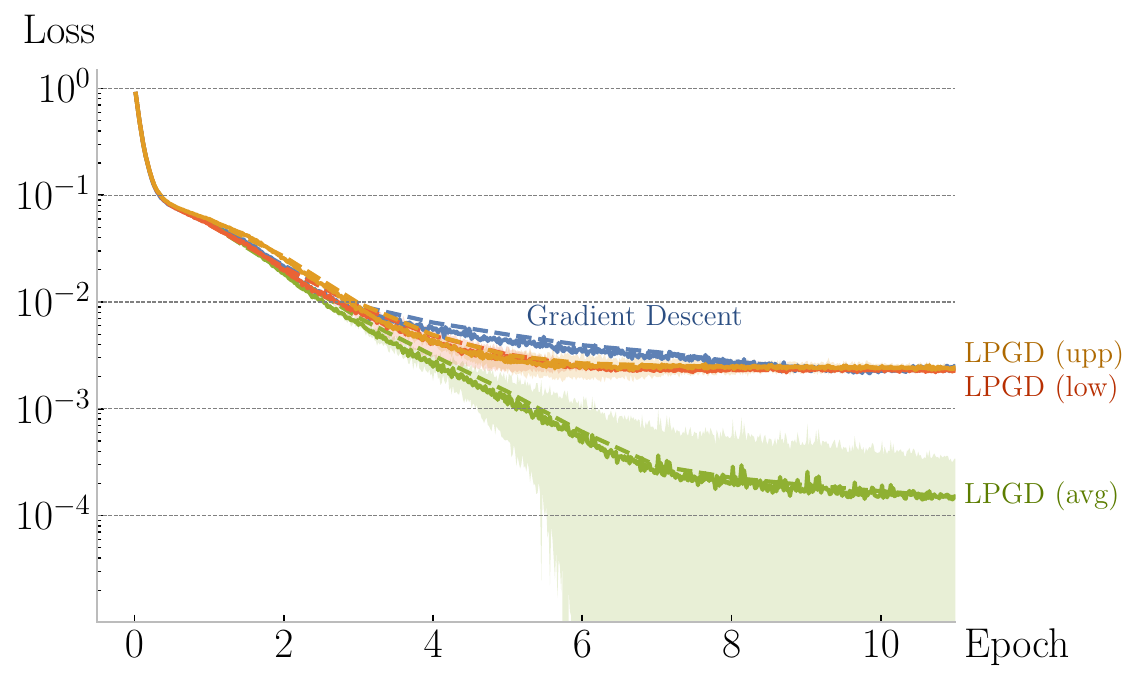}
        \smallskip
        \includegraphics[width=\linewidth]{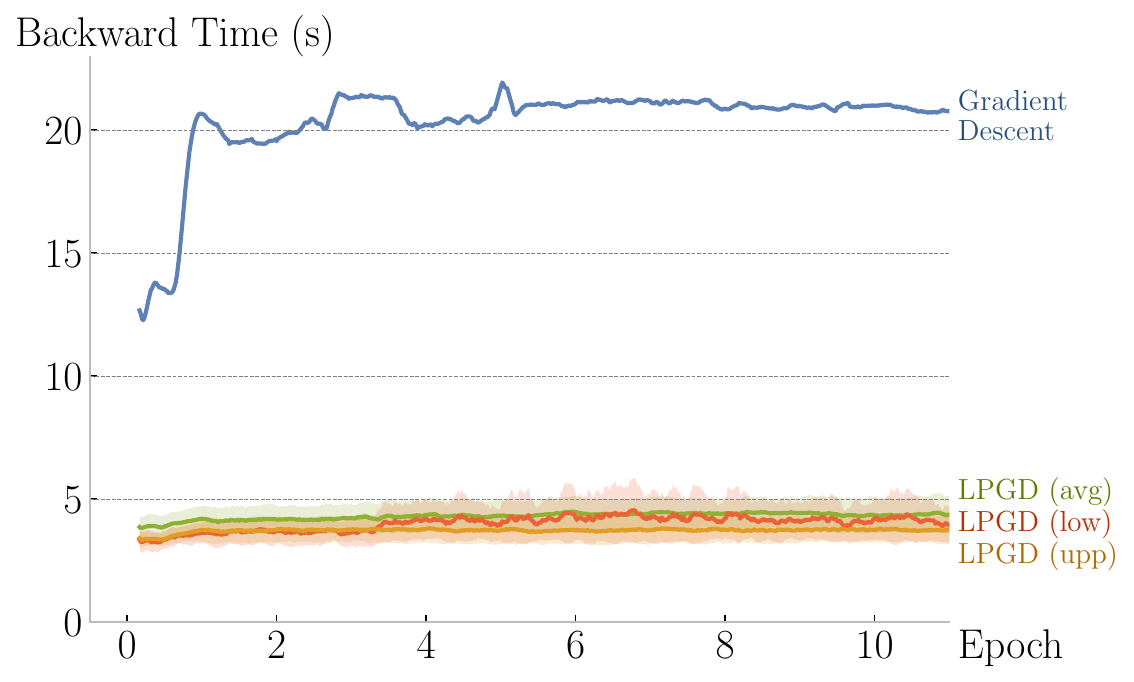}
        \smallskip
        \includegraphics[width=\linewidth]{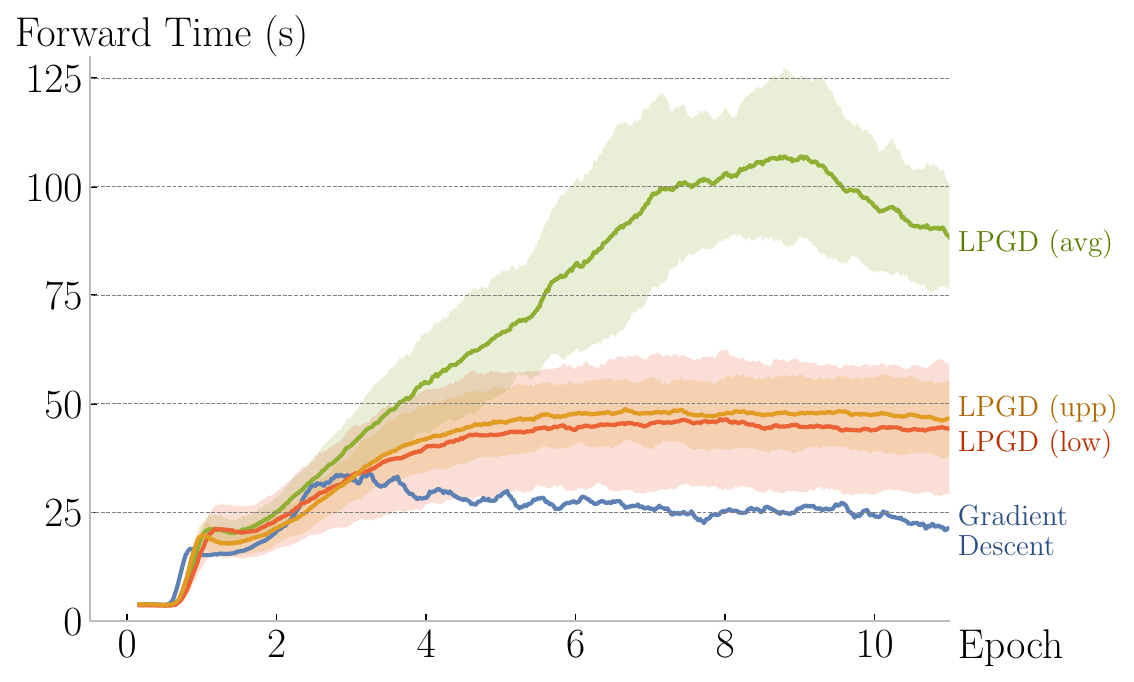}
    \caption{Comparison of \lpgd variations and gradient descent (GD) on the Sudoku experiment. 
    Reported are train and test MSE over epochs, wall-clock time, and time spent in the backward and forward passes. Dashed lines correspond to test data.
    Statistics are over $5$~restarts. 
    }
    \vspace{-2em}
    \label{fig:sudoku-train-curves-all}
\end{figure}
\begin{figure}
    \centering
    \includegraphics[width=0.93\linewidth]{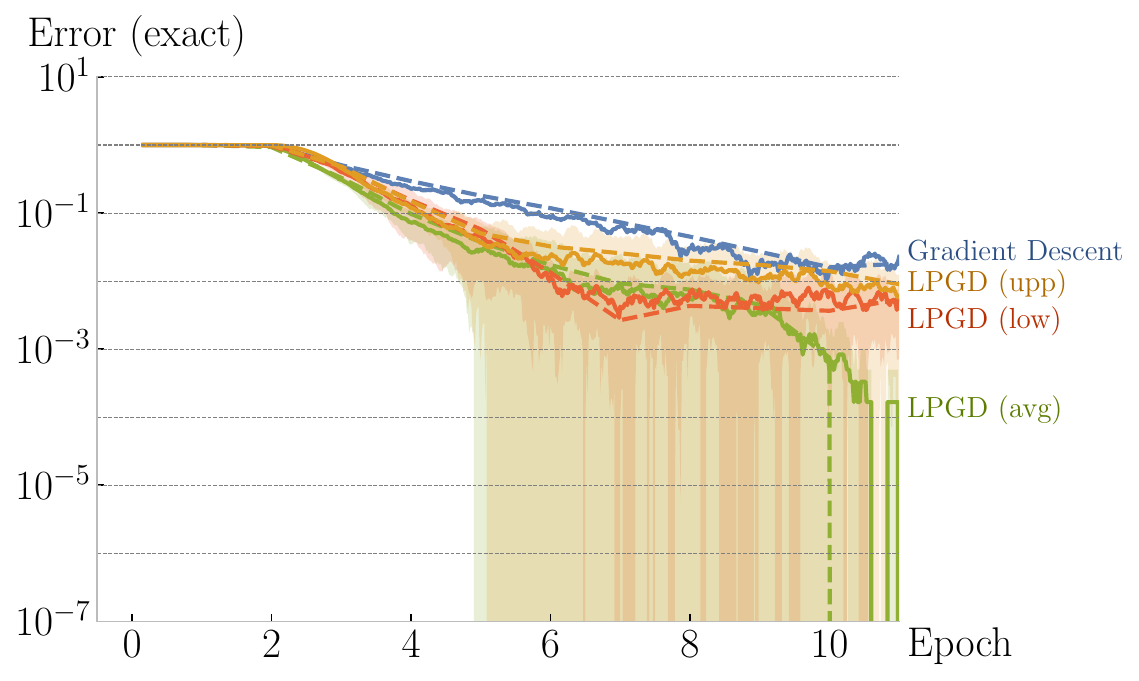}
    \includegraphics[width=0.93\linewidth]{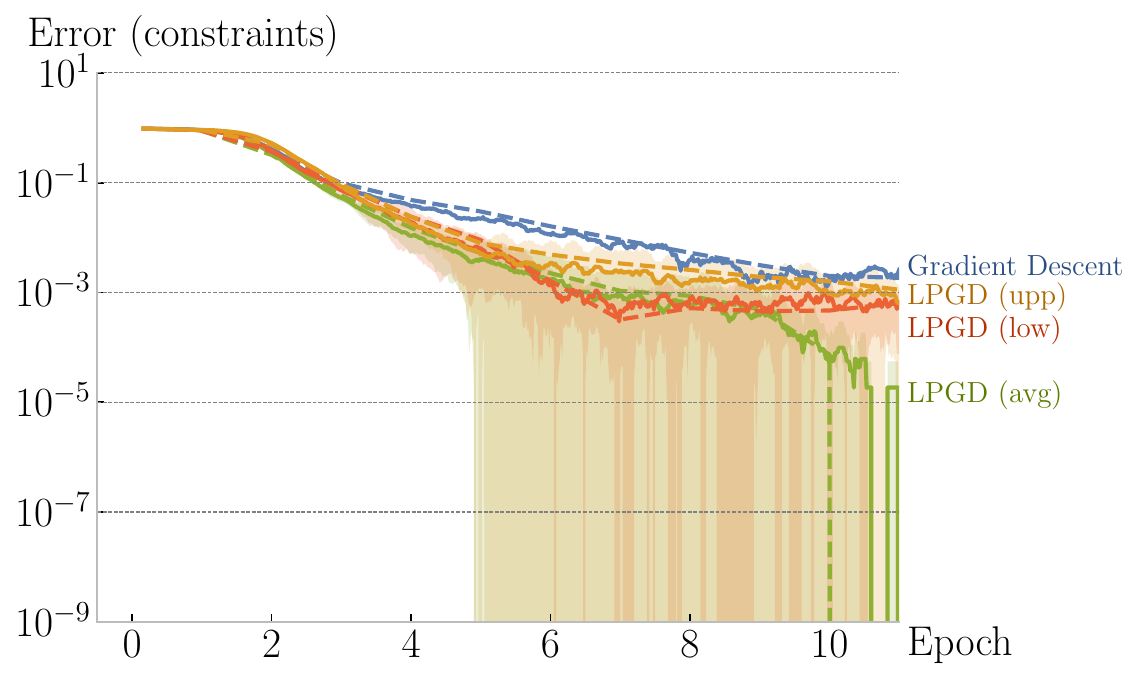}
    \caption{Comparison of \lpgd and gradient descent (GD) on the Sudoku experiment. Reported are train and test errors over epochs. Dashed lines correspond to test data.
    The exact error refers to the proportion of incorrect solutions (at least one violated Sudoku constraint), while the constraint error refers to the proportion of violated Sudoku constraints.
    Statistics are over $5$ restarts.
    \label{fig:sudoku-train-test-error}
    }
\end{figure}

\subsection{Tuning a Markowitz Control Policy}\label{sec:experimental-details-portfolio}
In the Markowitz Portfolio Optimization setting described in \citet[\S 5]{agrawal2020learning-control}, the task is to iteratively trade assets in a portfolio such that a utility function based on returns and risk is maximized over a trading horizon.
The trading policy is a convex optimization control policy, which determines trades by solving a parameterized convex optimization problem.
It maximizes a parameterized objective that trades off the expected return and the risk of the post-trade portfolio, subject to a constraint that ensures self-financing trades.
The parameters are initialized based on historical data, and differentiating through the optimization problem allows tuning them to maximize the utility on simulated evolutions of the asset values.
We refer the reader to \citet[\S 5]{agrawal2020learning-control} for a detailed description.

We conduct a sweep over the solver accuracy $\epsilon$, the results are reported in \Figref{fig:portfolio-2}.
The results show that \lpgd is more sensitive to inaccurate solutions than GD, as the update relies on accurate solutions for the finite difference.
Finally, we report statistics for $\tau=100$, $\alpha=0.001$, $\epsilon=0.0001$ over $3$ training seeds. We do not observe high variability in the performance.

\begin{figure}
    \centering
    \includegraphics[width=\linewidth]{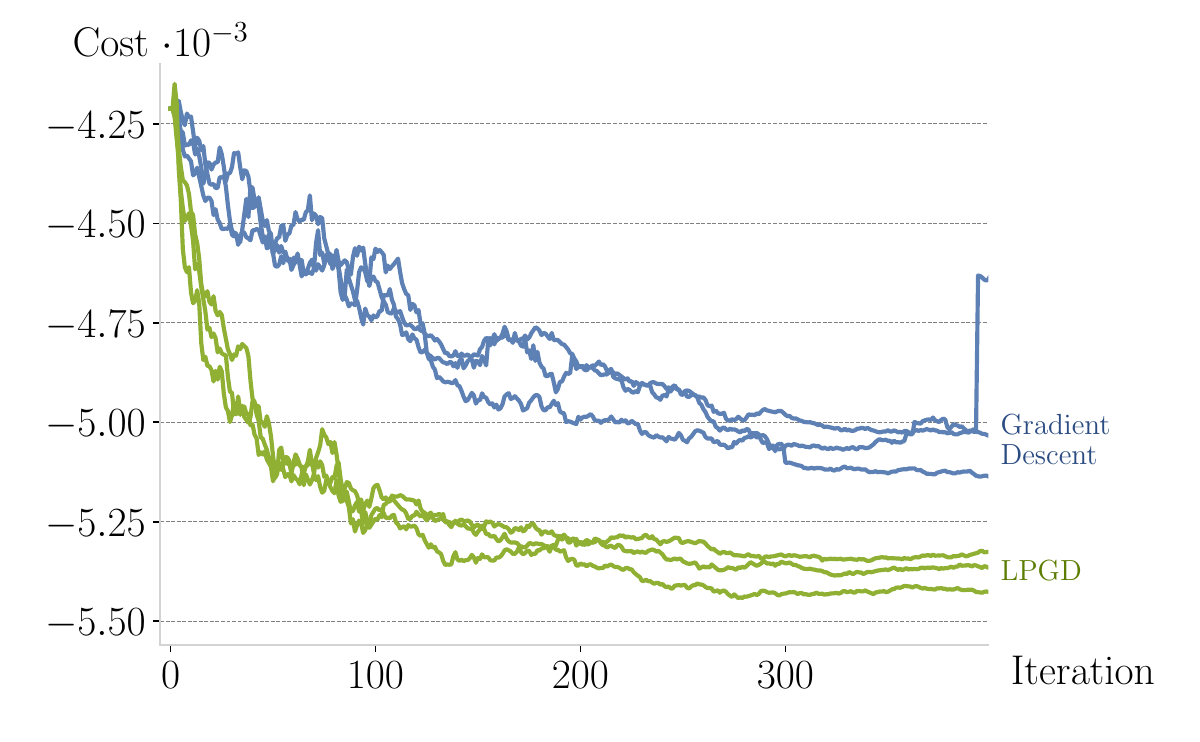}
    \caption{
    Comparison of \lpgda and GD in the Markowitz portfolio optimization experiment. 
    Reported is a sweep over training seeds, using $\alpha=0.001$, $\epsilon=0.0001$, $\tau=100$ and $\rho=0$.
    }
    \label{fig:portfolio-2}
\end{figure}

\section{List of Symbols}\label{sec:list-of-symbols}

\begin{tabular}{cl}
  \toprule
  $\x$ & primal variables \\
  $\s$ & optional slack variables \\
  $\y$ & dual variables \\
  $\z=(\x,\y)$ & primal \& dual variables \\
  $\z^*=(\x^*,\y^*)$ & optimal variables \eqref{eq:embedded-opt-problem}\\
  $\X, \Y$ & primal and dual feasible sets\\
  $\widetilde\X$ & primal effective feasible set \eqref{eq:effective-feasible-set}\\
  $\w$ & all parameters \\
  $\u$ & linear parameters \\
  $\v$ & non-linear parameters \\
  $\c$ & linear primal parameters \\
  $\b$ & linear dual parameters \\
  $\lag$ & Lagrangian \\
  $\lag^*$ & optimal Lagrangian \eqref{eq:optimal-lagrangian}\\
  $\Omega$ & non-linear part of Lagrangian \\
  \midrule
  $\tau$ & perturbation strength parameter \\
  $\env_{\tau\f}$ & Moreau envelope \eqref{eq:moreau-envelope}\\
  $\prox_{\tau\f}$ & Proximal map \eqref{eq:proximal-map}\\
  $\ell$ & loss on primal variables\\
  $D_\lag$ & Lagrangian difference \eqref{eq:lagrangian-difference}\\
  $D_\lag^*$ & Lagrangian divergence \eqref{eq:lagrangian-divergence}\\
  $\ell_\tau, \ell^\tau, \ell\mathmiddlescript\tau$ & $\lag$-envelopes \threeeqrefs{eq:lower-lag-moreau-envelope}{eq:upper-lag-moreau-envelope}{eq:eq:average-lag-proximal-map}\\
  $\z_{\tau}=(\x_{\tau},\y_\tau)$ & lower $\lag$-proximal map \eqref{eq:lower-lag-proximal-map}\\
  $\z^{\tau}=(\x^\tau,\y^\tau)$ & upper $\lag$-proximal map \eqref{eq:upper-lag-proximal-map}\\
  $\nabla\ell_{\tau}, \nabla\ell^{\tau}, \nabla\ell\mathmiddlescript{\tau}$ & \lppm updates \eqref{eq:lppm}\\
  $\Delta\ell_{\tau}, \Delta\ell^{\tau}, \Delta\ell\mathmiddlescript{\tau}$ & \lppm finite-differences \eqref{eq:finite-differences-lppm}\\
  \midrule
  $\widetilde\ell$ & linearization of $\ell$ at $\x^*$ \eqref{eq:linear-loss-approx}\\
  $\widetilde\ell_\tau, \widetilde\ell^\tau, \widetilde\ell\mathmiddlescript\tau$ & $\lag$-envelopes of lin. loss $\widetilde\ell$\\
  $\widetilde\z_{\tau}=(\widetilde\x_\tau,\widetilde\y_\tau)$ & lin. lower $\lag$-proximal map \eqref{eq:lower-lag-proximal-map-linearized}\\
  $\widetilde\z^{\tau}=(\widetilde\x^\tau,\widetilde\y^\tau)$ & lin. upper $\lag$-proximal map \eqref{eq:upper-lag-proximal-map-linearized}\\
  $\nabla\widetilde\ell_{\tau}, \nabla\widetilde\ell^{\tau}, \nabla\widetilde\ell\mathmiddlescript{\tau}$ & \lpgd updates \eqref{eq:lpgd}\\
  $\Delta\widetilde\ell_{\tau}, \Delta\widetilde\ell^{\tau}, \Delta\widetilde\ell\mathmiddlescript{\tau}$ & \lpgd finite-differences \\
  \midrule
  $\rho$ & augmentation strength parameter \\
  $\lag_\rho$ & augmented Lagrangian \eqref{eq:augmentation} \\
  $\ell_{\tau\rho}, \ell^{\tau\rho}, \ell\mathmiddlescript{\tau\rho}$ & aug. $\lag$-envelopes \eqref{eq:augmented-lower-lagrange-moreau-envelope}\\
  $\z_{\tau\rho}=(\x_{\tau\rho},\y_{\tau\rho})$ & aug. lower $\lag$-proximal map \eqref{eq:augmented-lower-lagrangian-proximal-map} \\
  $\z^{\tau\rho}=(\x^{\tau\rho},\y^{\tau\rho})$ & aug. upper $\lag$-proximal map \\
  $\nabla\ell_{\tau\rho}, \nabla\ell^{\tau\rho}, \nabla\ell\mathmiddlescript{\tau\rho}$ & aug. \lppm updates \eqref{eq:augmented-lower-lppm-lpgd} \\
  $\Delta\ell_{\tau\rho}, \Delta\ell^{\tau\rho}, \Delta\ell\mathmiddlescript{\tau\rho}$ & aug. \lppm finite-differences \eqref{eq:finite-differences-augmented} \\
  \midrule
  $\widetilde\ell_{\tau\rho}, \widetilde\ell^{\tau\rho}, \widetilde\ell\mathmiddlescript{\tau\rho}$ & aug.  $\lag$-envelopes of lin. loss $\widetilde\ell$ \\
  $\widetilde\z_{\tau\rho}=(\widetilde\x_{\tau\rho},\widetilde\y_{\tau\rho})$ & aug. lin. lower $\lag$-proximal map \\
  $\widetilde\z^{\tau\rho}=(\widetilde\x^{\tau\rho},\widetilde\y^{\tau\rho})$ & aug. lin. upper $\lag$-proximal map \\
  $\nabla\widetilde\ell_{\tau\rho}, \nabla\widetilde\ell^{\tau\rho}, \nabla\widetilde\ell\mathmiddlescript{\tau\rho}$ & aug. lin. \lpgd updates \eqref{eq:augmented-lower-lppm-lpgd} \\
  $\Delta\widetilde\ell_{\tau\rho}, \Delta\widetilde\ell^{\tau\rho}, \Delta\widetilde\ell\mathmiddlescript{\tau\rho}$ & aug. lin. \lpgd \\&finite-differences \eqref{eq:finite-differences-augmented} \\
  \midrule
  $\L$ & loss on primal \& dual variables \\
  $\widetilde\L$ & linearization of $\L$ at $\z^*$ \\
  \bottomrule
\end{tabular}

\end{document}